\let \eps = \varepsilon
\newcommand{\B}{{\mathcal B}}
\newcommand{\F}{\mathcal F}
\newcommand{\gto}{\xrightarrow{good}}
\renewcommand{\d}{\,\mathrm{d}}
\newcommand{\E}{{\mathbb E}}
\date{\today}
\newtheorem{thm}{Theorem}[section]
\newtheorem{lem}[thm]{Lemma}
\newtheorem*{prop*}{Proposition}
\newtheorem{prop}[thm]{Proposition}
\newtheorem{eg}[thm]{Example}
\newtheorem{definition}[thm]{Definition}
\newtheorem{cor}[thm]{Corollary}
\newtheorem{rmk}[thm]{Remark}
\newcommand{\R}{\mathbb{R}}
\newcommand{\N}{\mathbb{N}}
\newcommand{\expec}{\mathbb{E}}
\DeclareMathOperator*{\argmin}{argmin}
\newcommand{\Risk}{{\mathcal R}}
\newcommand{\Riskhat}
{\widehat{\mathcal R}}
\DeclareMathOperator{\Rad}{Rad}
\DeclareMathOperator{\Av}{Av}
\let \Ra = \Rightarrow
\let \LRa = \Leftrightarrow
\newcommand{\com}[1]{{\color{black}#1}}
\newcommand{\Reg}{R_{WD}}
\newcommand{\sw}[1]{{\color{black}#1}}
\title{
Minimum norm interpolation by perceptra:\\
Explicit regularization and implicit bias
}
\author{%
  Jiyoung Park\\
Department of Statistics\\
Texas A\&M University\\
  \texttt{wldyddl5510@tamu.edu} \\
  \And
  Ian Pelakh\\
Department of Mathematics\\
Iowa State University\\
  \texttt{ispelakh@iastate.edu} \\
  \And
  Stephan Wojtowytsch\\
Department of Mathematics\\
University of Pittsburgh\\
    \texttt{s.woj@pitt.edu} \\
}
\begin{document}
\maketitle

\begin{abstract}
We investigate how shallow ReLU networks interpolate between known regions. Our analysis shows that empirical risk minimizers converge to a minimum norm interpolant as the number of data points and parameters tends to infinity when a weight decay regularizer is penalized with a coefficient which vanishes at a precise rate as the network width and the number of data points grow. With and without explicit regularization, we numerically study the implicit bias of common optimization algorithms towards known minimum norm interpolants.
\end{abstract}

\section{Introduction}

Modern neural networks mostly operate in an overparametrized regime, i.e.\ they possess more tunable parameters than the number of data points contributing to the loss function. \citet{DBLP:conf/icml/SafranS18, weinan2019comparative, du2018gradient, chizat2018global} associate overparametrization with better training properties, and \citet{belkin2019reconciling, belkin2020two} find it to enhance statistical generalization (see also \citep{loog2020brief} for historical context). For many architectures, overparametrization leads to the ability to fit any values $y_i$ at a given set of data points $\{x_1,\dots, x_n\}$, and \citet{cooper2018loss} shows that generically, the set of weights for which the neural network interpolates prescribed values is a submanifold of high dimension and co-dimension in parameter space. Which solution on the manifold is dynamically chosen by an optimization algorithm, and which solutions have favorable generalization properties, is an active area of research in theoretical machine learning.

Current practice is to estimate a model's generalization to previously unseen data by assessing its performance on a hold-out set (a posteriori error estimate) or by using uniform estimates on the generalization of all elements of a function class (a priori estimate if the function class is encoded ahead of time by explicit regularization, a posteriori if membership is determined after optimization).
Neither approach yields information on what a neural network does outside the support of the data distribution, a topic of great interest for the study of distributional shift and adversarial stability.

\sw{The main contribution of this work is split between two complementary lines of investigation:}
\begin{enumerate}
    \item We prove that \com{neural network} minimizers of regularized \sw{empirical} risk functionals converge to minimum norm interpolants of a given target function in an infinite parameter limit \com{(Section \ref{section convergence result}). Our result improves previous works to more general settings (See Remark \ref{rmk novelty main})}.
    
    \item \sw{
    Training a neural network generally corresponds to solving a non-convex minimization problem. 
    While we provide convergence guarantees for empirical risk minimizers, in general there is no guarantee that a training algorithm finds a global minimizer of an empirical risk functional. Even if convergence holds, it is unclear {\em which} minimizer is selected (in the overparametrized regime, where the set of minimizers is a high-dimensional manifold).
    In settings where the minimum norm interpolant is known (Section \ref{section minimum norm interpolants}), we compare numerical solutions to theoretical predictions to better understand (1) the predictive power of theoretically studying empirical risk minimizers and (2) the implicit bias of different optimization algorithms (Sections \ref{section application} and \ref{section numerical}).}
    We believe this to be a useful benchmark problem for a better understanding of explicit regularization and implicit bias in optimization.
\end{enumerate}

Minimum norm interpolants are the regression analogue to maximum margin classifiers. They are associated with favorable generalization properties and relative stability even against adversarial perturbations.

\subsection{Previous work}

\paragraph{Minimum norm interpolation.} In classes of linear functions, minimum norm interpolation has a long history as ridge regression (minimal $\ell^2$-norm) or as the least absolute shrinkage selection and operator (LASSO, minimal $\ell^1$-norm). To the best of our knowledge, minimum norm interpolation by neural networks has only been studied for shallow neural networks in one dimension by \citet{hanin2021ridgeless, debarre2022sparsest, boursier2023penalising} and in odd dimension for certain radially symmetric data by \citet{wojtowytsch2022optimal}. For classification problems, minimum norm/maximum margin classifiers were considered by \citet{ weinan2022emergence}.
For finite datasets, the set of minimum norm interpolants was characterized by \citet{parhi2021banach}. A parametrization of the same function class by neural networks with multiple linear layers and a single ReLU layer induces different concepts of minimum norm interpolation as studied by \citet{ongie2022role}.

\paragraph{Implicit bias.} The implicit bias of parameter optimization algorithms has been studied for gradient flows with infinitely wide, but shallow ReLU networks by \citet{chizat2019lazy, jin2020implicit} and for stochastic gradient descent and diagonal linear networks by \citet{pesme2021implicit}. \citet{chizat2020implicit} prove convergence to a maximum margin classifier for infinitely wide ReLU networks with one hidden layer if the parameters follow the gradient flow of an (unregularized) logistic loss risk functional. Many authors, including \citet{damian2021label, li2021happens, wu2022alignment} and \citet{wojtowytsch2020convergence}, study the bias of SGD towards solutions at which the loss landscape is `flat' in the parameter space. \cite{hochreiter1997flat} conjectured such minimizers to have favorable generalization properties. In many cases, minimizers tend to be flatter if the parameters associated to them are not excessively large. \citet{yang2021taxonomizing} describe several phase-transitions in parameter space for the relation between flatness in parameter space and generalization. \citet{zhou2020towards} compare the implicit bias of SGD and ADAM. \citet{smith2021origin, barrett2020implicit} argue that gradient descent resembles a gradient flow in a modified (regularized) loss landscape more closely than the gradient flow of the original loss function. \citet{hajjar2022symmetries} investigate the role of symmetries in optimization.

\paragraph{Barron space.} The Barron class is adapted to ReLU networks with a single hidden layer and weights of bounded average magnitude. Slightly different versions of the same function space have been studied by \citet{bach2017breaking, we2020priori, weinan2019lei, ongie2019function, barron_new,wojtowytsch2021kolmogorov,caragea2020neural, parhi2021banach, wojtowytsch2022optimal, siegel2022sharp, siegel2023characterization} under various names such as $\F_1$, Radon-BV or the variation space of the ReLU dictionary.


\subsection{Preliminaries}\label{section preliminaries}

\paragraph{Conventions.}
$\mu$ always denotes a $\sigma^2$-sub-Gaussian probability measure on the data domain $\R^d$, i.e.
\[
\exists\ C,\sigma>0\qquad\text{s.t.}\quad
    \E_{X\sim\mu}\big[\exp\left(\lambda \{\|X\| - \E \|X\|\}\right)\big] \leq C\,\exp\left(\frac{\lambda^2 \sigma^2}{2}\right)\qquad\forall\ \lambda>0.
\]
All norms are Frobenius norms ($\ell^2$-norm for vectors). For $n\in\N$, $S_n = \{x_{n, 1}, \dots, x_{n,n}\}$ is a set of $n$ iid samples from the distribution $\mu$, independent of $S_{n'}$ for $n'\neq n$. When unambiguous, we denote $x_{n,i} = x_i$. We take $\ell(f,y) = |f-y|^2$ as the mean squared error/$\ell^2$-loss function, but we remark that the theoretical analysis remains valid if $\ell_{MSE}$ is replaced by $\ell^1$-loss or a Huber or pseudo-Huber loss 
    \[
    \ell_{Hub} (f,y) = \begin{cases}|f-y|^2 &\text{if }|y-f|<1\\ 2|y-f|-1 &\text{if } |y-f|\geq 1\end{cases}, \qquad \ell_{pH}(y,h) = \sqrt{1+ |y-f|^2}-1.
    \]
    For $m\in\N$ and $(a,W,b)\in \R^m \times \R^{m\times d}\times\R^{{m+1}}$, let
\[
f_{(a,W,b)}:\R^d\to\R, \quad
f_{(a,W,b)}(x) = {b_0 +} \sum_{i=1}^m a_i \,\sigma(w_i\cdot x + b_i), \quad \sigma(z) = \mathrm{ReLU}(z) = \max\{z,0\},
\]
i.e.\ $f_{(a,W,b)}$ is a ReLU network with a single hidden layer and weights $a,W$ and biases $b$. The vector $w_i\in\R^d$ is the $i$-th row of the matrix $W$. 
For $m, n \in \N$ and $\lambda\geq 0$, we denote the regularized empirical risk functional as $\Riskhat_{n,m,\lambda}:\R^m\times \R^{m\times d}\times \R^{{m+1}}\to [0,\infty)$:
\[
\Riskhat_{n,m,\lambda} (a,W,b) = \frac1{2n} \sum_{i=1}^n \ell\left(f_{(a,W,b)}(x_i), \:y_i\right) + \frac\lambda2\,\big( \|a\|_2^2 + \|W\|_{Frob}^2\big).
\]

\paragraph{Concepts.}

We introduce what we dub `homogeneous Barron space' $\B$ heuristically here, and in greater detail in Appendix \ref{appendix barron review}.
As a measure of magnitude of the function, we consider the weight decay (or Tikhonov) regularizer $\frac12 \big(\|a\|_2^2 + \|W\|_{Frob}^2\big)$ of the parametrized function $f_{(a,W,b)}$ which does not control the magnitude of the bias vector. 

Note that the function class $f_{(a,W,b)}$ and its complexity do not change when we consider representations of the form $f_m(x) = \frac1m\sum_{i=1}^m a_i \,\sigma(w_i\cdot x + b_i)$ with a regularizer $\frac1{2m} \big(\|a\|_2^2 + \|W\|_{Frob}^2\big) = \frac1{2m} \sum_{i=1}^m \big(a_i^2 + \|w_i\|_2^2\big)$. A continuum analogue to these functions represented as an `empirical average' over individual neurons is a general expectation representation $f_\pi(x) = \E_{(a,w,b)\sim\pi} \big[a\,\sigma(w^Tx+b)\big]$ for some probability distribution $\pi$ on parameter space and a regularizer $\frac12 \E_{(a,w,b)\sim\pi}[a^2 + \|w\|_2^2]$.
As the parametrization of a function by a neural network is generally non-unique, we define the Barron semi-norm $[f]_\B = \inf_{\{\pi : f_\pi \equiv f\}} \E_{(a,w,b)\sim\pi}[a^2 + \|w\|_2^2]$ as the lowest value attained by the regularizer over all possible parametrizations. For a more comprehensive understanding, interested readers may refer to Appendix \ref{appendix barron review}.

\section{General convergence result} \label{section convergence result}

We first state a general convergence result to a minimum norm interpolant of given data generated by functions in the homogeneous Barron class $\mathcal B$. 

\begin{thm}\label{thm main general}
Take $\mu, S_n, \Riskhat_{n,m,\lambda}$ as in Section \ref{section preliminaries}, $f^*\in\B$, and let $y_i = f^*(x_i)$ for $i=1,\dots,n$. Assume that $m, \lambda$ are parameters which scale with $n$ as $m_n, \lambda_n$ such that
    \begin{equation}\label{eq scaling of m and n}
    \lim_{n\to\infty} \left(\lambda_n + \frac1{m_n}\right) = 0, \qquad 
    \lim_{n\to\infty} \left(\frac{1}{\lambda_n\,m_n} + \frac{\log n}{\lambda_n\,\sqrt n}\right) = 0.
    \end{equation}

Then almost surely over the random selection of data points in $S_n$, the following holds:
If $(a,W,b)_n \in \argmin\Riskhat_{n,m_n,\lambda_n}$ for all $n \in\N$, then every subsequence of $f_n:= f_{(a,W,b)_n}$ has a further subsequence which converges to some limit $\hat f^*\in\B$ with $\hat f^* =f^*$ $\mu$-almost everywhere and $[\hat f^*]_\B\leq [f^*]_\B$. Convergence holds in $L^p(\mu)$ for all $p<\infty$ and uniformly on compact subsets of $\R^d$. If $\E_\mu\|x\|+\sigma^2\geq 1$, then for all $n\geq 2$,  the following explicit bound holds up to higher order terms in $n, m= m_n, \lambda = \lambda_n$ with probability at least $1 - 1/n^2$:
\begin{equation}\label{eq error bound thm main}
\|f_{(a,W,b)_n} - f^*\|_{L^2(\mu)}^2 \leq C\left(\frac{[f^*]_\B^2}m\,\E_{\mu}\big[\|x\|^2\big] + [f^*]_{\B}^2 \,\big(\E_{\mu}\|x\|+\sigma^2\big)\frac{\log n}{\sqrt n} + \lambda\,[f^*]_\B\right).
\end{equation}
\end{thm}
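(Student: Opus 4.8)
The plan is to sandwich the regularized empirical risk of the minimizer between $0$ and the regularized risk of an explicitly constructed competitor, and then to trade the resulting control on the empirical loss and on the weight decay for, respectively, a population $L^2$ bound and a bound on the Barron seminorm. \textbf{Competitor and a priori bounds.} First I fix a near-optimal parametrization $\pi^*$ of $f^*$, which by positive $1$-homogeneity of $\sigma$ may be taken in ``balanced'' form ($a^2=\|w\|^2$ $\pi^*$-a.s.) with $\E_{\pi^*}[a^2+\|w\|^2]\le[f^*]_\B+o(1)$, and sample $m=m_n$ neurons i.i.d.\ from $\pi^*$ to build a network $f_m$ (after the standard $1/m$ rescaling). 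Taking expectations over the neuron draw, the empirical loss of $f_m$ on $S_n$ has mean $\lesssim\frac{[f^*]_\B^2}{m}\cdot\frac1n\sum_i\|x_i\|^2$ by a variance/Maurey estimate, while its weight decay term concentrates around $\frac\lambda2[f^*]_\B$; a Markov argument then exhibits one realization of the neurons for which $\Riskhat_{n,m_n,\lambda_n}(f_m)$ is at most twice this sum, and this is valid for every $S_n$. Since $(a,W,b)_n$ is a global minimizer and minimizers are balanced (otherwise rescaling a single neuron strictly decreases the regularizer without changing the loss), and using $\sigma^2$-sub-Gaussianity of $\mu$ to replace $\frac1n\sum_i\|x_i\|^2$ by $\E_\mu\|x\|^2$ up to a concentration error, we obtain, on an event of probability $\ge1-1/n^2$,
\[
\tfrac1{2n}\sum_i\big(f_n(x_i)-f^*(x_i)\big)^2 + \tfrac{\lambda_n}2\big(\|a_n\|^2+\|W_n\|^2\big)\;\le\;\varepsilon_n:=C\Big(\tfrac{[f^*]_\B^2}{m_n}\E_\mu\|x\|^2+\lambda_n[f^*]_\B\Big)+(\text{h.o.t.}),
\]
whence $[f_n]_\B\le\|a_n\|^2+\|W_n\|^2\le2\varepsilon_n/\lambda_n=O\big([f^*]_\B^2/(\lambda_nm_n)+[f^*]_\B\big)$, which stays bounded under \eqref{eq scaling of m and n}; in fact $\limsup_n[f_n]_\B\le[f^*]_\B$.

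\textbf{From empirical to population loss.} The difference $f_n-f^*$ lies in a Barron ball of radius $R_n=O([f^*]_\B)$, so I would bound the population excess risk by a uniform-convergence estimate over this ball: the Rademacher complexity of $\{[f]_\B\le R\}$ on $n$ sub-Gaussian samples is $\lesssim R\,\E_\mu\|x\|/\sqrt n$, and after composing with the square loss (Lipschitz on the range of $f-f^*$, whose size is controlled by $R_n$ and $\|x\|$) and accounting for the unbounded, sub-Gaussian tails of the loss (which produce the $\log n$ factor and the $\sigma^2$ dependence), one gets
\[
\|f_n-f^*\|_{L^2(\mu)}^2\;\lesssim\;\tfrac1n\sum_i\big(f_n(x_i)-f^*(x_i)\big)^2+[f^*]_\B^2\,\big(\E_\mu\|x\|+\sigma^2\big)\tfrac{\log n}{\sqrt n}
\]
with probability $\ge1-1/n^2$. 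Combining with the previous display yields \eqref{eq error bound thm main}. Since $\sum_n1/n^2<\infty$, Borel--Cantelli upgrades both the seminorm bound and \eqref{eq error bound thm main} to almost sure statements for all large $n$, and the scaling \eqref{eq scaling of m and n} forces the right-hand side of \eqref{eq error bound thm main} to $0$, so almost surely $f_n\to f^*$ in $L^2(\mu)$ and $\limsup_n[f_n]_\B\le[f^*]_\B$.

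\textbf{Compactness and identification of the limit.} A sequence of homogeneous-Barron functions with uniformly bounded seminorm is precompact in $L^p(\mu)$ for every $p<\infty$ and in $C^0_{loc}(\R^d)$ (via tightness of the associated measures on the sphere and an Arzelà--Ascoli argument, after separating the genuinely ReLU part from the affine/large-bias part of each network — this is where the homogeneous structure and the results of Appendix \ref{appendix barron review} enter), and the Barron seminorm is lower semicontinuous along such convergence. Hence every subsequence of $(f_n)$ has a further subsequence converging in these topologies to some $\hat f^*\in\B$; $L^2(\mu)$-convergence gives $\hat f^*=f^*$ $\mu$-a.e., and lower semicontinuity gives $[\hat f^*]_\B\le\liminf[f_n]_\B\le[f^*]_\B$.

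\textbf{Main obstacle.} The delicate parts are (a) the uniform-convergence step: bounding the Rademacher complexity of the \emph{squared} loss over a Barron ball against \emph{unbounded} sub-Gaussian data, with the sharp $\frac{\log n}{\sqrt n}$ rate and the stated $\E_\mu\|x\|,\sigma^2$ dependence, which requires a careful truncation/peeling argument; and (b) the compactness and lower semicontinuity of $[\cdot]_\B$ when the biases are not controlled by the norm, which forces the decomposition of each network into its ReLU and affine parts before passing to the limit.
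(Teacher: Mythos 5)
Your overall strategy matches the paper's: a competitor built from a Monte--Carlo/Maurey sampling of a near-optimal parametrization (this is exactly the paper's direct approximation theorem, Theorem \ref{thm direct approximation}), comparison of the minimizer's regularized empirical risk against that competitor, extraction of a Barron seminorm bound from the weight-decay term, a Rademacher-complexity generalization bound with truncation to handle the unbounded square loss on sub-Gaussian data, and finally compactness plus lower semicontinuity of $[\cdot]_\B$ to identify subsequential limits. The one genuine route difference is that the paper runs the last step through a $\Gamma$-convergence argument (Theorem \ref{thm gamma-convergence discrete case}), which additionally shows the limit is a \emph{minimum norm} interpolant, i.e.\ $[\hat f^*]_\B = \inf\{[f]_\B : f\equiv f^*\ \mu\text{-a.e.}\}$; your direct compactness-plus-semicontinuity argument only delivers the weaker bound $[\hat f^*]_\B\leq[f^*]_\B$, which is all the literal statement asserts, so this is an acceptable (and somewhat lighter) substitute for the theorem as stated.

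There is, however, one concrete gap in the uniform-convergence step. You claim that the Rademacher complexity of $\{[f]_\B\leq R\}$ is $\lesssim R\,\E_\mu\|x\|/\sqrt n$. As stated this is false: $[\cdot]_\B$ is only a \emph{semi}norm and vanishes on constants, so the class $\{[f]_\B\leq R\}$ contains $f+c$ for every $c\in\R$ and its empirical Rademacher complexity is $+\infty$ (take $c\to\pm\infty$ aligned with $\sum_i\eps_i$). The class over which one can run uniform convergence is $\F_{A,Q}=\{[f]_\B\leq Q,\ |f(0)|\leq A\}$, and one must separately bound the offset $A_n=|f_n(0)-f^*(0)|$ for the empirical minimizer. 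The paper does this in Step 2 of the proof of Theorem \ref{theorem risk bound}: the empirical risk bound forces $|f_n-f^*|(x_i)\leq\sqrt{\Riskhat_n(f_n)}$ at some sample point $x_i$, the sub-Gaussian concentration bound keeps that $x_i$ within distance $\E_\mu\|x\|+\sigma\sqrt{2\log(n/\delta)}$ of the origin with high probability, and the Lipschitz bounds $[f_n]_{Lip}\leq[f_n]_\B$, $[f^*]_{Lip}\leq[f^*]_\B$ transport the discrepancy back to $0$. Without this ingredient your generalization bound does not close; note also that the resulting $A_n$ carries a $\sqrt{\log n}$ factor, which is one source of the $\log n$ in \eqref{eq error bound thm main}. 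The rest of your outline (competitor bound, $\limsup_n[f_n]_\B\leq[f^*]_\B$, Borel--Cantelli with $\delta_n=n^{-2}$, compactness in $L^p(\mu)$ and $C^0_{loc}$ after handling the large-bias neurons, which act affinely on compact sets) is sound and mirrors the paper's Appendices \ref{appendix barron review}--\ref{appendix convergence proofs}.
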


\begin{rmk}\label{rmk novelty main}
    \sw{Theorem \ref{thm main general}} extends previous results of \citet{we2020priori} in several ways:
    \begin{enumerate}
    \item We allow for general sub-Gaussian rather than compactly supported data distributions.
    \item We do not control the magnitude of the bias variables.
    \item Our results apply to $\ell^2$-loss, which is neither globally Lipschitz-continuous nor bounded.
    \item \sw{In a limiting regime, we characterize how the empirical risk minimizers interpolate in the region where no data is given by proving uniform convergence to a minimum norm interpolant.}
    \end{enumerate}

    \sw{In sum, the first three points necessitate a more careful technical analysis than in the settings of prior work, and the unboundedness of data and loss introduces additional logarithmic terms not present for \citet{we2020priori}.}
    \sw{To the best of} our knowledge, \sw{the last point is novel and} our work is the first to use the notion of $\Gamma$-convergence in this context.
\end{rmk}

    If the data-distribuion $\mu$ is supported on the entire space $\R^d$ (e.g.\ a non-degenerate Gaussian), then $\hat f^* \equiv f^*$. In many cases, however, $\mu$ is supported on a small, potentially compact and generally low-dimensional subset $M$ of the data space. In this case, the function $f^*$ is only known on the closed set $M\subsetneq \R^d$. As a result, there are many $f\in\B$ such that $f\equiv f^*$ on $M$ while $f\not\equiv f^*$ on $\R^d$ in general. The subsequential limit is one of these functions which has a minimal semi-norm $[f]_\B$.

    Thus, beyond knowing that $f_n$ asymptotically fits the function $f^*$ perfectly at known data, Theorem \ref{thm main general} provides information about how it may interpolate at points where $\mu$ has no information. Such knowledge is of interest when a population may naturally evolve in time (distributional shift) of if $f_n$ is applied to a new problem with similar features but distinct geometry (transfer learning).

    The proof of Theorem \ref{thm main general} is given in Appendix \ref{appendix convergence proofs}. In the proof, we combine a direct approximation theorem to construct risk competitors with Rademacher complexity-based generalization bounds. Concentration inequalities are used to bound tail quantities. The scaling conditions that $\frac{\log n}{\sqrt n} \ll \lambda$ and $\frac1m \ll \lambda$ are used to ensure that the risk functional has sufficiently strong regularization. They can be weakened if the data measure $\mu$ is supported on a finite set of points or the function $f^*$ can be represented by a neural network with a finite number of neurons respectively. For instance, an analogous statement holds with a simpler proof for a fixed data-set $S$ of $n$ data points if $\lambda$ is coupled to $m$ such that $\frac1{m\lambda_m}\to 0$. In this case, we take the empirical distribution $\mu = \frac1n \sum_{x\in S} \delta_x$ as the population and do not need to bound the generalization gap. This model can be considered appropriate when $n \ll m$ (heavy overparametrization). A precise statement is given in Appendix \ref{appendix bonus theorems}.
    The proof remains valid if we only assume that 
    \[
    \limsup_{n\to\infty} \frac1{\lambda_n}\Riskhat_{n,m_n,\lambda_n}(a_n,W_n,b_n) =\inf\left\{[f]_\B : f\equiv f^*, \quad\mu-\text{almost everywhere}\right\},
    \]
    i.e.\ if $(a,W,b)_n$ parametrizes a function of low excess risk. In the proof, we obtain a more precise version of \eqref{eq error bound thm main}. 
    Using an a priori Lipschitz-bound, it is possible to obtain a rate of convergence in $L^p(\mu)$ for $p<\infty$ by interpolation. For uniform convergence on compact sets outside the support of $\mu$, we do not obtain a rate in this work.
    
    \sw{The proof of uniform convergence on compact sets utilizes the notion of $\Gamma$-convergence from the calculus of variations, a very stable notion of convergence which implies that minimizers of approximating functionals converge to the minimizer of a limiting functional. This notion has recently made inroads into machine learning applications and was used e.g.\ by \citet{neumayer2023effect}.}

All results can easily be generalized to any more general function class which admits the three key ingredients: A bound on its Rademacher complexity, a compact embedding theorem, and a direct approximation theorem.

\section{Minimum norm interpolants}\label{section minimum norm interpolants}

\subsection{One-dimensional example}
In one dimension, \cite[Proposition 2.5]{wojtowytsch2022optimal} shows that $[f]_\B = \int_{-\infty}^\infty |f''(x)|\d x$ for any smooth function $f:\R\to\R$ which satisfies $f'(x) = 0$ at some point $x\in\R$ -- see also previous work by \citet{li2020complexity, barron_new}. This one-dimensional case is, in fact, the simplest case of a general characterization of Barron functions in any dimension by \citet{ongie2019function}. 

Consider the task of minimizing $[f]_\B$ under the condition that $f(x) = |x|$ if $|x|\geq 1$. Then the minimum is attained for any smooth convex function $f$ which satisfies the constraint since
\[
2 = f'(1) - f'(-1) = \int_{-1}^1 f''(x)\d x \leq \int_{-1}^1 |f''(x)|\d x = [f]_\B
\]
with equality if and only if $f''\geq 0$. The same estimate holds for non-smooth Barron functions if the second derivative is interpreted as a Radon measure. For piecewise linear functions, this corresponds to summing $|f'(x_i^+) - f'(x_i^-)|$ over the non-smooth points $x_i$. 
More generally, the set of minimum norm interpolants of one-dimensional convex data was characterized by \citet{savarese2019infinite}.

\begin{prop}\label{proposition 1d}
    Let $x_0< \dots < x_n$ and $y_i = f^*(x_i)$ for a convex function $f^*$ and $i=0,\dots,n$. If $y_1<y_0$ and $y_{n}> y_{n-1}$, then $f$ is a minimum Barron norm interpolant of the dataset $\{(x_i,y_i)\}_{i=0}^n$ if and only if $f$ is convex, $f(x_i) = y_i$ for all $i=0,\dots,n$ and
    \[
    f'(x) = \frac{y_1-y_0}{x_1-x_0}\quad\text{ for }x < x_1\qquad \text{and}\qquad f'(x) = \frac{y_n-y_{n-1}}{x_n-x_{n-1}} \quad\text{ for }x> x_{n-1}.
    \]
\end{prop}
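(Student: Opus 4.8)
The plan is to derive the statement from two facts about the one-dimensional space $\B$, both essentially present (explicitly, or after a routine extension) in \citet{wojtowytsch2022optimal, savarese2019infinite}: (i) every $f\in\B$ satisfies $[f]_\B\ge\int_\R|f''|$, where $f''$ is the distributional second derivative of $f$ (a finite signed Radon measure) and $\int_\R|f''|$ its total variation --- this is immediate upon differentiating any representation of $f$ twice and using $|a_iw_i|\le\tfrac12(a_i^2+w_i^2)$; and (ii) if in addition $f$ is convex with $f'(-\infty)\le 0\le f'(+\infty)$, then $[f]_\B=\int_\R f''=f'(+\infty)-f'(-\infty)$. Write $s_-^*=\tfrac{y_1-y_0}{x_1-x_0}<0$, $s_+^*=\tfrac{y_n-y_{n-1}}{x_n-x_{n-1}}>0$, $V^*=s_+^*-s_-^*$, and note that the hypotheses force $n\ge 2$ (so $x_1\le x_{n-1}$) and that the data are not collinear. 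I would first pin down $V^*$ as the minimum of $[f]_\B$ over interpolants and then extract the minimizers from the equality cases.

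For the lower bound, let $f$ interpolate the data. As $f$ is Lipschitz and the mean value of $f'$ on $[x_0,x_1]$ is $s_-^*<0$, there is $t_0\in(x_0,x_1)$ with $f'(t_0^-)\le s_-^*$, and symmetrically $t_n\in(x_{n-1},x_n)$ with $f'(t_n^+)\ge s_+^*$. Since $t_0<x_1\le x_{n-1}<t_n$,
\[
\int_\R|f''|\ \ge\ |f''|\big([t_0,t_n]\big)\ \ge\ f''\big([t_0,t_n]\big)\ =\ f'(t_n^+)-f'(t_0^-)\ \ge\ s_+^*-s_-^*\ =\ V^*,
\]
so $[f]_\B\ge V^*$ by (i). For the reverse inequality I use the competitor $f_0$: the continuous piecewise-linear function equal to the line through $(x_0,y_0),(x_1,y_1)$ on $(-\infty,x_1]$, to the line through $(x_{n-1},y_{n-1}),(x_n,y_n)$ on $[x_{n-1},\infty)$, and to the piecewise-linear interpolant of the data on $[x_1,x_{n-1}]$. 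Convexity of $f^*$ makes the successive chord slopes nondecreasing, so $f_0$ is convex, $f_0''\ge 0$ is a sum of point masses of total weight $s_+^*-s_-^*=V^*$, and $f_0'$ changes sign; hence $[f_0]_\B=V^*$ by (ii). So the minimal value of $[f]_\B$ over interpolants is exactly $V^*$.

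Next I would characterize equality. If $[f]_\B=V^*$ then, by (i) and the lower bound, all inequalities in the display are equalities: $\int_\R|f''|=|f''|([t_0,t_n])$ makes $f$ affine on $(-\infty,t_0)$ and on $(t_n,\infty)$; $|f''|([t_0,t_n])=f''([t_0,t_n])$ makes $f''\ge 0$ on $[t_0,t_n]$, hence $f$ convex on $\R$; and tightness of the last inequality forces $f'(t_0^-)=s_-^*$ and $f'(t_n^+)=s_+^*$, so $f'\equiv s_-^*$ left of $t_0$ and $f'\equiv s_+^*$ right of $t_n$. Since $f'$ is nondecreasing, equals $s_-^*$ on the nondegenerate interval $[x_0,t_0)$, stays $\ge s_-^*$ on $[x_0,x_1]$, and has mean $s_-^*$ there, it must equal $s_-^*$ on all of $(-\infty,x_1)$; symmetrically $f'\equiv s_+^*$ on $(x_{n-1},\infty)$, which is the asserted form (and, together with $f(x_1)=y_1$, $f(x_{n-1})=y_{n-1}$, this is consistent with $f(x_0)=y_0$, $f(x_n)=y_n$). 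Conversely, a convex interpolant satisfying the two slope identities has $f''\ge 0$ supported on $[x_1,x_{n-1}]$ with total mass $f'(x_{n-1}^+)-f'(x_1^-)=s_+^*-s_-^*=V^*$ and has $f'$ running from $s_-^*<0$ to $s_+^*>0$, so $[f]_\B=V^*$ by (ii) and $f$ is a minimum-norm interpolant.

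The step I expect to be the main obstacle is fact (ii). The bound $[f]_\B\ge\int_\R|f''|$ is free, but the reverse must be obtained without paying an extra $2|s_-^*|$ to produce the linear tail of $f$ on $(-\infty,x_1]$; this is done by realizing a sub-measure of $f''$ of total mass $|s_-^*|$ (which is at most $V^*$ because $s_+^*\ge 0$) with ``left-facing'' neurons $x\mapsto\sigma(t-x)$, whose far-field slopes reproduce $f'(-\infty)=s_-^*$ at no extra cost while the remaining ``right-facing'' neurons supply $f'(+\infty)=s_+^*$, the total cost being $\int_\R f''=V^*$. The rest --- the bounded-variation bookkeeping with one-sided limits of $f'$, and the convexity of the piecewise-linear interpolant of convex data --- is elementary, and I would present it only in the degree of detail needed to make the equality analysis above rigorous.
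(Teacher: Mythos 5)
Your proposal is correct and follows essentially the same route as the paper's proof: a mean-value argument on $(x_0,x_1)$ and $(x_{n-1},x_n)$ to find one-sided slopes at most $s_-^*$ and at least $s_+^*$, the identity $[f]_\B=\int_\R\d|f''|$ for functions whose derivative changes sign (which the paper simply cites from \citet{wojtowytsch2022optimal}), and the same equality analysis forcing convexity and constant tail slopes. The only additions are that you exhibit an explicit competitor attaining $V^*$ and sketch the two-sided neuron construction behind the total-variation identity, both of which the paper leaves implicit or delegates to the cited reference.
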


The two given slopes are the largest values that are required for derivatives at any point. We give a proof of Proposition \ref{proposition 1d} in Appendix \ref{appendix 1d}.
In full generality, minimum norm solutions have been characterized by \citet{hanin2021ridgeless} using matching convexities to achieve minimal total curvature.

\sw{
In a recent article, \citet{boursier2023penalising} show that if the full Barron norm is controlled, i.e.\ if the magnitude of biases is included in the regularizer, a specific convex function is selected. This corresponds to minimizing a functional
\[
\int_{-1}^1 |f''(x)|\,\sqrt{1+x^2}\d x \qquad\text{rather than} \quad
\int_{-1}^1 |f''(x)|\d x.
\]
The first functional prefers $f''$ to be large close to the origin, if it has to be large anywhere. All break points occur as close to the origin as possible, in particular: Either at the origin or at data points. Unlike the Barron semi-norm penalty studied in this work, which does not select a specific minimum norm interpolant, the Barron norm penalty thus induces a {\em sparse} neural network interpolant.
} 

\subsection{Radially symmetric bump function} Another setting where we have explicit minimum norm interpolant is when we fit a bump function for radially symmetric data.
Recall a result of \citet{wojtowytsch2022optimal} on minimum norm fitting of certain radially symmetric data.

\begin{prop}\cite[Theorem 3.1]{wojtowytsch2022optimal}\label{proposition previous}
Let $d\geq 3$ be an odd integer and 
\[
\F = \big\{ f\in C_c(\R^d) : f(0) = 1\text{ and }f(x) = 0\text{ if }\|x\|\geq 1\big\}.
\]
Then there exists a unique radially symmetric function $f_d^*:\R^d\to\R$ such that $f^*_d \in \argmin_{f\in\F}[f]_\B$. The norm of minimizers grows as $\lim_{d\to \infty} [f_d^*]_\B/d \approx 3.7$.
\end{prop}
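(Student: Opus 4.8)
The plan is to follow the strategy of \citet{wojtowytsch2022optimal}: combine the variational characterization of the homogeneous Barron seminorm with a reduction to a one-dimensional obstacle-type problem, exploiting that in odd dimensions the Radon transform — and hence the inversion formula underlying the formula for $[\,\cdot\,]_\B$ — is \emph{local}, so that the seminorm of a radial function becomes a genuinely one-dimensional quantity of finite differential order.

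\textbf{Existence and reduction to radial competitors.} First I would run the direct method of the calculus of variations, using two structural properties of $\B$: lower semicontinuity of $[\,\cdot\,]_\B$ with respect to local uniform convergence, and precompactness in that topology of sets of bounded seminorm (the compact embedding theorem for $\B$; cf.\ the discussion at the end of Section~\ref{section convergence result} and Section~\ref{section preliminaries}). Given a minimizing sequence $f_k\in\F$ with $\sup_k [f_k]_\B<\infty$, pass to a subsequence converging locally uniformly to some $f$; since every $f_k$ vanishes outside $\overline{B_1}$ and equals $1$ at the origin, so does the limit, hence $f\in\F$, and lower semicontinuity gives $[f]_\B\le\inf_\F[\,\cdot\,]_\B$. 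To reduce to radial functions I use that $g\mapsto g\circ R$ is an isometry of $(\B,[\,\cdot\,]_\B)$ for every rotation $R$ and that $[\,\cdot\,]_\B$ is convex: averaging a minimizer over the rotation group with Haar measure and applying Jensen's inequality produces a radially symmetric $\tilde f\in\F$ with $[\tilde f]_\B\le[f]_\B$.

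\textbf{Reduction to a one-dimensional variational problem.} For a radial $f(x)=g(\|x\|)$ I would express $[f]_\B$ through the representation of Barron functions as superpositions of ReLU ridge functions (equivalently, via the Radon-transform characterization of \citet{ongie2019function}): averaging the neuron $a\,\sigma(w\cdot x+b)$ over the directions of $w$ on the sphere yields an explicit radial profile kernel, and in odd dimension $d$ this kernel together with its inversion is polynomial/local of a fixed finite order. Consequently the seminorm collapses to a weighted one-dimensional functional
\[
[f]_\B = c_d\Big( B_d(g) + \int_0^1 \rho_d(r)\,\big|(\mathcal D_d g)(r)\big|\,\d r\Big),
\]
where $\mathcal D_d$ is an explicit linear ODE operator of order comparable to $d$, $\rho_d$ an explicit positive weight, and $B_d(g)$ a boundary contribution at $r=0,1$ (the $d=1$ instance being $\int|g''|$). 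The constraint $f\in\F$ becomes $g(0)=1$, $g\equiv 0$ on $[1,\infty)$, plus the matching-regularity conditions at $r=0$ (evenness of the profile, needed for smoothness at the origin) and at $r=1$ (enough derivatives vanishing) that make $\mathcal D_d g$ a finite measure.

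\textbf{Solving the 1D problem, uniqueness, and asymptotics.} The reduced functional is convex, and minimizing the $L^1$-type cost $\int\rho_d\,|\mathcal D_d g|$ subject to finitely many linear interpolation/regularity constraints is a semi-infinite linear program; by duality together with a Chebyshev-system argument, the optimal "curvature" $\mathcal D_d g$ is a signed combination of a minimal number of atoms placed where $\rho_d$ is smallest, which forces $g$ to be the \emph{unique} polynomial(-spline) of the appropriate degree meeting all constraints. Uniqueness of $g$, combined with rigidity in the symmetrization step (a norm-optimal representing measure of a rotation-invariant function can be chosen rotation-invariant, so a minimizer must already be radial), upgrades this to uniqueness of $f_d^*$. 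Substituting the optimal $g$ back in gives $[f_d^*]_\B$ in closed form — an expression in factorials/binomial coefficients of $d$ — and Stirling's formula then yields $[f_d^*]_\B=(c+o(1))\,d$ with the stated numerical value $c\approx 3.7$.

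\textbf{Main obstacle.} The crux is the second step: deriving the precise local one-dimensional formula for $[f]_\B$ of a radial function in odd dimension — i.e.\ carrying out the ridge-to-radial/Radon computation to identify the correct operator $\mathcal D_d$, weight $\rho_d$, and boundary terms — and then tracking all constants faithfully through the explicit solution so that the asymptotic constant can actually be evaluated rather than merely bounded. A secondary subtlety is making the rotation-averaging step strict, so that "the radial minimizer is unique" genuinely implies global uniqueness in $\F$.
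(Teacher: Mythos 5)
The paper does not prove this proposition at all: it is imported verbatim from \citet{wojtowytsch2022optimal} (their Theorem 3.1), so there is no internal proof to compare against. Judged on its own terms, your outline of the strategy is broadly consistent with what the present paper reports about that reference: the direct method plus compact embedding for existence, rotation-averaging by convexity of $[\cdot]_\B$, the reduction of the seminorm of a radial function to a one-dimensional quantity via the ridge/Radon representation of \citet{ongie2019function} (with the $d=1$ instance $\int|f''|$ appearing in Section \ref{section minimum norm interpolants}), the finitely many vanishing-derivative conditions at $r=1$, and the characterization of the optimizer through a one-dimensional polynomial approximation problem with moment conditions (cf.\ the paper's reference to Section 6 of the cited work). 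One caveat on the asymptotics: the value $\approx 3.7$ is presented as a numerically computed limit, not a closed-form constant, so your plan to extract it from a factorial expression via Stirling is likely more than the cited argument delivers; a numerical evaluation of the one-dimensional extremal problem for growing $d$ is what the source actually provides.

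There is, however, one concrete overreach. You claim to upgrade uniqueness of the radial minimizer to global uniqueness in $\F$ by a ``rigidity'' argument in the symmetrization step. The proposition asserts only that there is a unique \emph{radially symmetric} minimizer, and the paper states explicitly, immediately after the proposition, that the existence of non-radially-symmetric minimizers is \emph{not} excluded --- only that any other minimizer $\hat f$ satisfies $\Av\hat f\equiv f_d^*$. The obstruction to your rigidity claim is that $[\cdot]_\B$ is an $L^1$-type (positively homogeneous) functional: Jensen's inequality applied to the rotation average gives $[\Av f]_\B\le[f]_\B$ with equality possible for non-radial $f$, since the functional is convex but not strictly convex. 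So the symmetrization step cannot force a minimizer to be radial, and your final uniqueness assertion proves more than is true (or at least more than is known). Restricting the uniqueness claim to the radially symmetric class, as the statement itself does, repairs this.
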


We note that the existence of minimum norm interpolants which are not radially symmetric is not excluded, but if $\hat f\in \argmin_{f\in\F}[f]_\B$ is any other minimum norm interpolant, then its radial average $\Av \hat f$ coincides with $f_d^*$: $\Av\hat f \equiv f_d^*$. Here
\begin{equation}\label{eq radial average}
    \Av f(x) := \int_{SO(d)} f(Ox) \d H_O = \int_{S^{d-1}} f \big(|x|\cdot \nu\big)\d \pi^0_\nu
\end{equation}
where $H$ is the uniform distribution (Haar measure) on the group of rotations and $\pi^0$ is the uniform distribution on the $d-1$-dimensional sphere in $\R^d$. In \cite[Section 6]{wojtowytsch2022optimal}, an algorithm is given to find the minimum norm interpolant $f_d^*$ by numerically solving a one-dimensional polynomial approximation problem and a linear system of moment conditions.

The uniqueness statement allows us to strengthen the result of Theorem \ref{thm main general} in this case. A natural setting is to use a sub-Gaussian data distribution $\mu$ which gives positive mass to the origin, but has no mass elsewhere in the unit ball. It should have mass everywhere outside the unit ball. Under this natural setting, we have a stronger result of Theorem \ref{thm main general}.

\begin{cor}\label{thm main}
Take $\mu, S_n, \Riskhat_{n,m,\lambda}$ as in Section \ref{section preliminaries}, and assume in addition that
\begin{enumerate}
    \item $\mu(\{0\})>0$ (positive mass at the origin).
    \item $\mu(B_1(0)\setminus\{0\}) =0$ (no mass elsewhere in the unit ball).
    \item $\mu(U) >0$ for any open set $U\subseteq \R^d\setminus\overline{B_1(0)}$ (mass everywhere outside the unit ball).
\end{enumerate}
Assume that $m, \lambda$ scale with $n$ as in \eqref{eq scaling of m and n}.
Almost surely over the random selection of data points in $S_n$, the following holds:
If $(a,W,b)_n \in \argmin\Riskhat_{n,m_n,\lambda_n}$ for all $n\in\N$, then sequence of radial averages $\Av f_n$ of $f_n:= f_{(a,W,b)_n}$ converges to $f_d^*$ as in Proposition \ref{proposition previous}. Convergence holds
in $L^2(\mu)$ for MSE loss (with an explicit rate) and
uniformly on compact subsets of $\R^d$  (without a rate in this work).
\end{cor}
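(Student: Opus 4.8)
The plan is to run Theorem~\ref{thm main general} and then exploit the uniqueness clause that accompanies Proposition~\ref{proposition previous}. Since assumptions~(1)--(2) guarantee that $\mu$ charges no point of $\R^d\setminus\big(\{0\}\cup(\R^d\setminus\overline{B_1(0)})\big)$, every bump in $\F$ produces the same labels on $\mathrm{supp}\,\mu$, so I would take as data-generating function $f^*:=f_d^*\in\F\cap\B$ from Proposition~\ref{proposition previous} and apply Theorem~\ref{thm main general} with this $f^*$ (the scaling~\eqref{eq scaling of m and n} being part of the hypotheses). This gives: almost surely over $\{S_n\}$, every subsequence of $f_n:=f_{(a,W,b)_n}$ has a further subsequence converging, in $L^p(\mu)$ for all $p<\infty$ and uniformly on compact subsets of $\R^d$, to some $\hat f^*\in\B$ with $\hat f^*=f^*$ $\mu$-almost everywhere and $[\hat f^*]_\B\le[f^*]_\B=[f_d^*]_\B$.

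The first real step is to identify $\hat f^*$. Barron functions are continuous (indeed locally Lipschitz; see Appendix~\ref{appendix barron review}). Since $\mu(\{0\})>0$, the $\mu$-full set on which $\hat f^*=f^*$ contains the origin, so $\hat f^*(0)=f^*(0)=1$; since $\mu(U)>0$ for every open $U\subseteq\R^d\setminus\overline{B_1(0)}$ by~(3), that same set is dense in $\R^d\setminus\overline{B_1(0)}$, on which $f^*\equiv0$, so continuity forces $\hat f^*\equiv0$ on $\{\|x\|\ge1\}$. Hence $\hat f^*\in C_c(\R^d)$ with $\hat f^*(0)=1$ and $\hat f^*\equiv0$ outside $B_1(0)$, i.e.\ $\hat f^*\in\F$. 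Together with $[\hat f^*]_\B\le[f_d^*]_\B=\min_{f\in\F}[f]_\B$ this forces $\hat f^*\in\argmin_{f\in\F}[f]_\B$, and the remark following Proposition~\ref{proposition previous} then yields $\Av\hat f^*=f_d^*$.

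Next I would transfer this to the radial averages. The operator $\Av$ is linear and $\|\Av g\|_{L^\infty(K)}\le\|g\|_{L^\infty(SO(d)\cdot K)}$ for every compact $K$, with $SO(d)\cdot K$ again compact, so locally uniform convergence is preserved: along the subsequence $\Av f_n\to\Av\hat f^*=f_d^*$ uniformly on compacta. As the limit $f_d^*$ is independent of the subsequence, a standard contradiction argument upgrades this to $\Av f_n\to f_d^*$ uniformly on compact sets along the full sequence. For the quantitative $L^2(\mu)$ claim, I would use that $f_d^*=f^*=\Av f^*$ on $\mathrm{supp}\,\mu$ (the spherical average of $f^*$ is $f^*(0)=1$ at the origin and vanishes for $\|x\|\ge1$), so there $\Av f_n-f_d^*=\Av(f_n-f^*)$; Jensen's inequality on the Haar probability measure then gives $\|\Av f_n-f_d^*\|_{L^2(\mu)}^2\le\|f_n-f^*\|_{L^2(\mu_{\mathrm{rad}})}^2$, where $\mu_{\mathrm{rad}}:=\int_{SO(d)}O_\#\mu\d H_O$ is the rotational average of $\mu$. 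Since rotations are isometries, $\mu_{\mathrm{rad}}$ has the same law of $\|x\|$ as $\mu$ (hence the same $\sigma^2$, $\E\|x\|$, $\E\|x\|^2$); when $\mu=\mu_{\mathrm{rad}}$ — the natural radially symmetric choice — the right-hand side is controlled by~\eqref{eq error bound thm main} at the stated rate, and in general the same rate (up to a $\mu$-dependent constant) follows by combining~\eqref{eq error bound thm main} with the high-probability a priori bound $[f_n]_\B\lesssim[f^*]_\B$ and an $L^2$-to-$L^\infty$ interpolation converting smallness of $\|f_n-f^*\|_{L^2(\mu)}$ plus the resulting Lipschitz bound into uniform smallness of $f_n$ on annuli outside $B_1(0)$, and hence of its spherical averages there.

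The hard part is the identification step and the subsequent passage to $\Av f_n$: Theorem~\ref{thm main general} by itself furnishes only subsequential limits, which need not be radially symmetric, and it is exactly the uniqueness of the \emph{radial average} among minimizers over $\F$ (Proposition~\ref{proposition previous}) that pins $\Av\hat f^*$ to $f_d^*$ and thereby the whole sequence $\Av f_n$ to $f_d^*$. Quantitatively, the awkward point is that $\Av$ is tested against a measure $\mu$ that need not be rotationally symmetric, so the $L^2(\mu)$ rate is transparent only for $\mu=\mu_{\mathrm{rad}}$; the general case needs either a density comparison of $\mu$ with $\mu_{\mathrm{rad}}$ on $\mathrm{supp}\,\mu$ or the $L^2$-to-$L^\infty$ bootstrap above, which is precisely where the sub-Gaussian tail control and the a priori Barron (hence Lipschitz and linear-growth) bounds on $f_n$ re-enter.
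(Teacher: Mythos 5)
Your argument is correct and rests on the same two pillars as the paper's: the subsequential compactness/identification machinery behind Theorem~\ref{thm main general} and the uniqueness of the \emph{radial average} among minimizers over $\F$ from Proposition~\ref{proposition previous}. The one genuine difference is the order of operations. The paper (via Corollary~\ref{cor gamma-convergence continuous case radial}) averages \emph{first}: it observes that $\Av f_\lambda$ is again a minimizer because the limiting functional is convex and rotationally symmetric, then extracts a convergent subsequence of the averages and identifies the limit as the unique radially symmetric minimizer. You instead take limits \emph{first}: extract $\hat f^*$ from Theorem~\ref{thm main general}, show $\hat f^*\in\F$ and $[\hat f^*]_\B\le[f_d^*]_\B$ so that $\hat f^*\in\argmin_\F[\cdot]_\B$ and $\Av\hat f^*=f_d^*$, and only then push $\Av$ through the locally uniform convergence. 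Your ordering is actually the more robust one in the discrete setting of the corollary: the radial average of a width-$m$ network is not itself a width-$m$ network, so ``the average of the minimizer is a minimizer'' does not literally apply to $\Riskhat_{n,m,\lambda}$ on $\Theta_m$, whereas continuity of $\Av$ under locally uniform convergence is unconditional. Your explicit verification that $\hat f^*\in\F$ (continuity at the origin where $\mu$ has an atom, plus density of the $\mu$-full set outside the closed unit ball) is exactly the identification the paper leaves implicit.

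On the quantitative $L^2(\mu)$ claim you have correctly isolated a point the paper does not address: the rate in \eqref{eq error bound thm main} controls $\|f_n-f^*\|_{L^2(\mu)}$, and transferring it to $\|\Av f_n-f_d^*\|_{L^2(\mu)}$ via Jensen lands you in $L^2(\mu_{\mathrm{rad}})$, which equals $L^2(\mu)$ only when $\mu$ is rotation invariant. One caveat: for general $\mu$ your proposed $L^2$-to-$L^\infty$ bootstrap (interpolating $\|f_n-f^*\|_{L^2(\mu)}$ against the a priori Lipschitz bound) would not preserve the rate --- such interpolation costs a dimension-dependent power, e.g.\ an exponent of order $2/(d+2)$, so the honest conclusion there is ``an explicit but degraded rate,'' not ``the same rate up to a constant.'' Since the natural setting (and all experiments) use a radially symmetric $\mu$, this does not affect the substance of your proof.
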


In Corollary \ref{thm main}, we guarantee convergence to the unique radial minimum norm interpolant $f_{d}^{*}$ (at least for the radial average), while in Theorem \ref{thm main general} we may have different subsequences that converge to different minimum norm interpolants. The proof is given in Appendix \ref{appendix convergence proofs}.

\section{Relating Interpolation, Optimization and Generalization}\label{section application}

For a given bounded set $K\subseteq\R^d$, Theorem \ref{thm main general} states that for a large number of neurons $m\in\N$, a large number of data points $n\in\N$, and a small penalty $\lambda>0$, minimizers of the empirical risk functional $\Riskhat_{n,m,\lambda}$ resemble a minimum norm interpolant everywhere in $K$. As the Barron semi-norm controls the generalization gap (see Appendix \ref{appendix rademacher}) and a minimum norm interpolant has minimal Barron norm by definition, this suggests that minimum norm interpolants are optimal in terms of generalization, at least when arguing from this upper bound.

Many authors, including \citet{safran2018spurious, venturi2018spurious}, demonstrate that neural network training is a non-convex optimization problem. As such, it is not guaranteed that numerical optimizers (1) converge to interpolants at all, and (2) select minimum norm interpolants out of the large set of different neural networks which interpolate given data, even when a regularizer is included in the training loss functional.

On the other hand, there are settings where an optimization algorithm selects a minimum norm solution even without explicit regularization. This is easily proved for gradient descent on the overparametrized least squares regression problem $\Risk_n(a) =\frac1n \sum_{i=1}^n|a^Tx_i-y_i|^2$ with initial condition $a=0$ and $n<m=d$. Using entirely different methods, \citet{chizat2020implicit} prove a similar result for binary classification by shallow neural networks with logistic loss. For regression problems using neural networks, analogous results are not available to the best of our knowledge. This in part motivates the following numerical investigation. Namely, we are interested in exploring the effects of explicit regularization and the implicit bias of optimization algorithms toward minimum norm interpolants. Knowing the analytically optimal solution in between given data provides us the opportunity to compare optimizers on a deeper level than merely testing their performance on unseen data generated from the same distribution.

As seen in Figure \ref{figure rescaling}, the radial profile $r\mapsto f_d^*(re_1)$ of \citet{wojtowytsch2022optimal}'s minimum norm interpolant $f_d^*$ is so close to $0$ on $[r_d,\infty)$ as to be virtually indistinguishable from zero numerically for some $r_d<1$ which decreases in $d$. Indeed, the first $(d-1)/2$ derivatives of  vanish at $r=1$ due to \cite[Lemma 4.1]{wojtowytsch2022optimal} and $0\leq f_d^*(x) \leq Cd^{3/2}((1-\|x\|^2)/\|x\|)^{(d-3)/2)}$ due to \cite[Appendix D.1]{wojtowytsch2022optimal} for a universal constant $C>0$. In particular, if $d$ is large, $\|f_d^*\|_{L^\infty(\R^d\setminus B_{r_d}(0))}$ is negligible compared to the approximation error $d^2/m$ for any reasonable dimension $d$ and network width $m$. Consequently, the rescaled function $h_d^*(x) := f_d^*(r_dx)$ meets the constraint $h_d^*\equiv 0$ outside $B_1(0)$ almost exactly and has the smaller Barron semi-norm $[h_d^*]_\B = r_d\,[f_d^*]_\B$. For this reason, we compare numerical solutions to the interpolation problem in Corollary \ref{thm main} to rescaled versions of $f_d^*$ rather than $f_d^*$ itself, at least in high dimension.
For $r$ below the threshold value $r_d$, there is no noticable trade-off between rescaling $f_d^*(rx)$ and data-fitting. For larger values of $r$, the Barron semi-norm is reduced more significantly, but the data fit becomes appreciably worse.

\section{Numerical Experiments}\label{section numerical}

\begin{figure}
    \centering
    $ $\hfill\includegraphics[width = .45\textwidth]{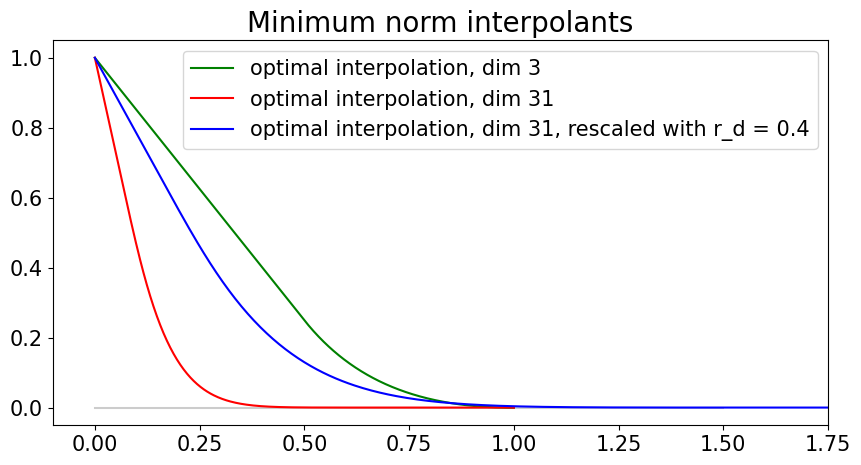}\hfill 
    \includegraphics[width = .45\textwidth]{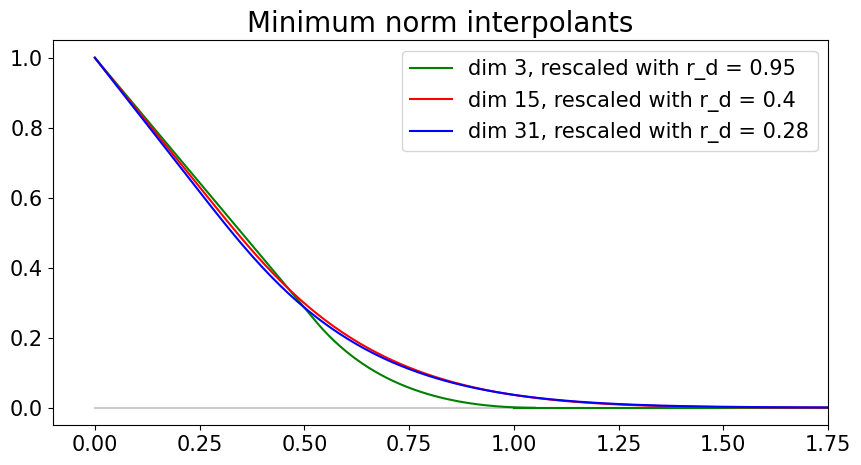}\hfill$ $ 

    \caption{
    {\bf Left:} In Dimension $d=31$, the minimum Barron norm solution $f_d^*$ satisfies $f_d^*\equiv 0$ on $\R^d\setminus B_{r_d}(0)$ for $r_d= 0.4$ to high precision, albeit not exactly. The rescaled function $f_d^*(r_dx)$ is a suitable candidate for a minimum norm almost-interpolant to high accuracy. {\bf Right:} For later use, we consider more aggressively rescaled functions $f_d^*(r_d x)$ for $d=15, d=31$ with lower semi-norm, but worse data fitting properties. We note that the rescalings of these functions which have essentially the same slope as $f_3^*$ at $r=0$ appear to coincide. We conjecture that this statement allows for a more rigorous formulation. 
    }
    \label{figure rescaling}
\end{figure}

Our main goal in this section is to gain a more precise understanding of different optimization algorithms by comparing numerical solutions to a known minimum norm interpolant. We consider the two settings in which minimum norm interpolation by Barron functions is best understood: One-dimensional and radially symmetric functions. As a benefit, we can easily visualize the numerical results in both settings. 
We focus on three questions of interest.

\begin{enumerate}
    \item {\bf Explicit regularization.} If $\lambda>0$ is moderately small and $m, n$ are large, then a global minimizer of $\widehat\Risk_{n,m,\lambda}$ resembles a minimum norm interpolant between known data points due to Theorem \ref{thm main general}. Is the minimizer which we find numerically close to to a minimum norm interpolant, or does it merely fit the function at known data points?

    \item {\bf Implicit bias.} If $m,n$ are large, does a training algorithm select a minimum norm interpolant out of potentially many possible solutions without explicit regularization (i.e.\ for $\lambda =0$)?

    \item {\bf Learning symmetries:} The optimal minimum norm interpolant $f_d^*$ described in Proposition \ref{proposition previous} is radially symmetric and satisfies $0\leq f_d^*\leq 1$. Proposition \ref{proposition previous} does not rule out the existence of other minimum norm interpolants which are not radially symmetric. Does an optimization algorithm generally find solutions which are (approximately) radially symmetric and confined to the interval $[0,1]$? A similar consideration applies in a one-dimensional investigation with reflection symmetry.
\end{enumerate}

The third question is of particular interest for algorithms like ADAM, which operate coordinate-by-coordinate and do not respect Euclidean isometries. By comparison, we expect that SGD, initialized at a radially symmetric configuration, preserves Euclidean isometries.
More experiments in similar settings can be found in Appendix \ref{appendix plots}.

\subsection{One-dimensional experiments} \label{section one dim simulation}

\begin{figure}
    \centering
     \includegraphics[width=0.33\textwidth]{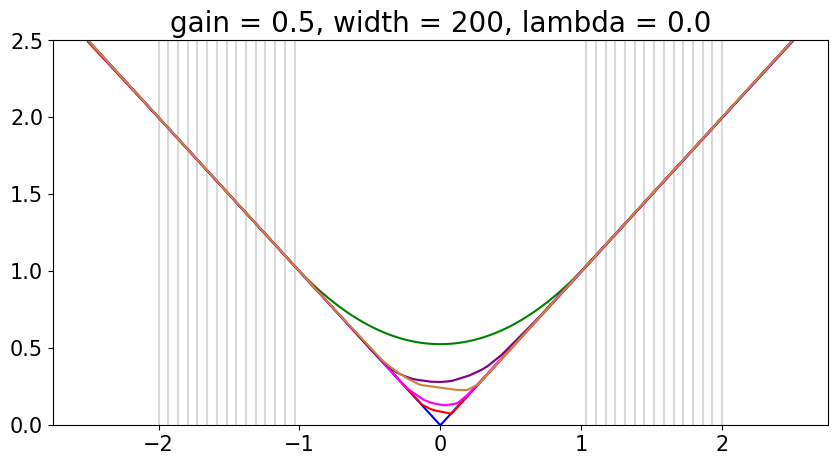}\hfill 
    \includegraphics[width=0.33\textwidth]{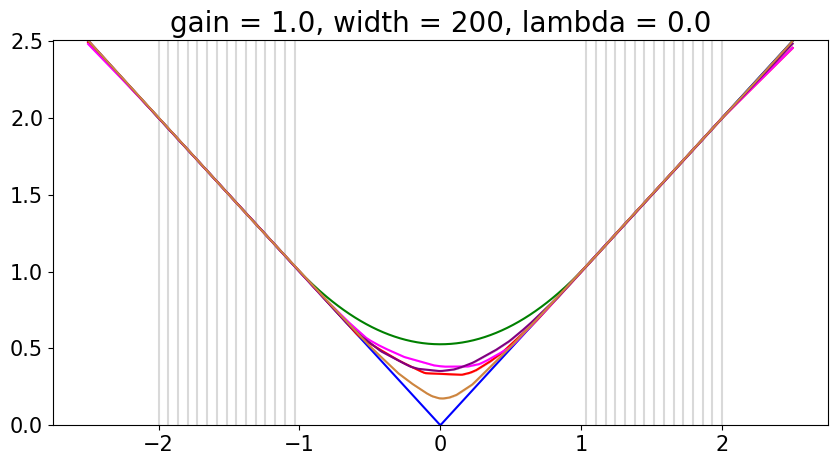}\hfill 
    \includegraphics[width=0.33\textwidth]{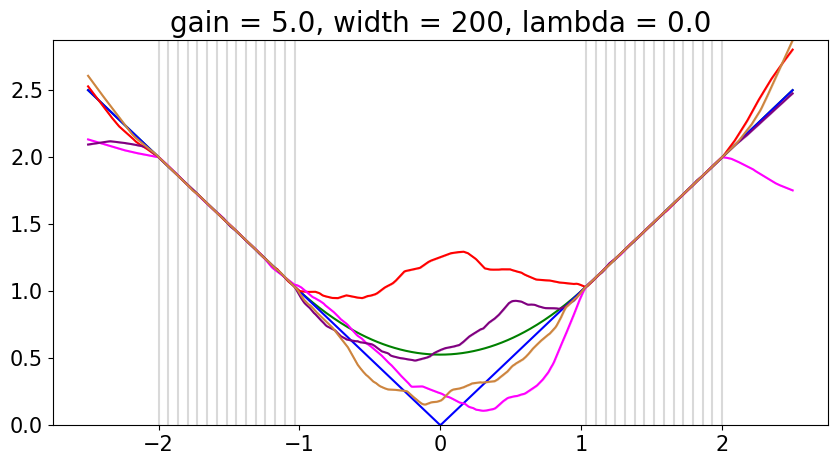}\hfill

     \includegraphics[width=0.33\textwidth]{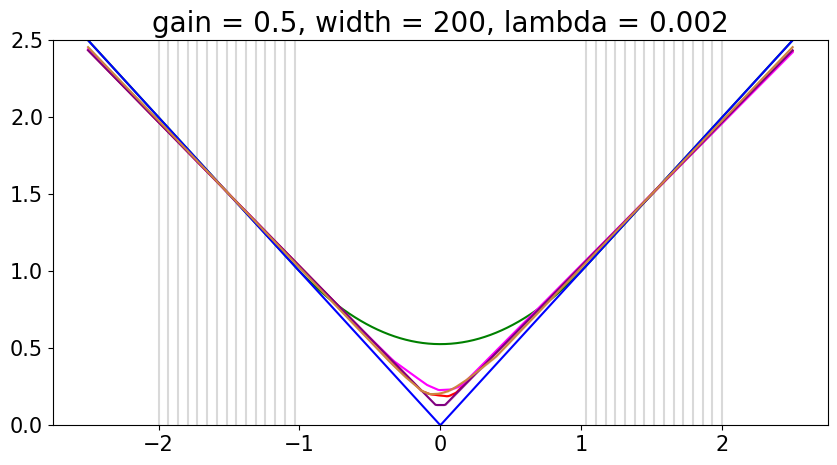}\hfill 
    \includegraphics[width=0.33\textwidth]{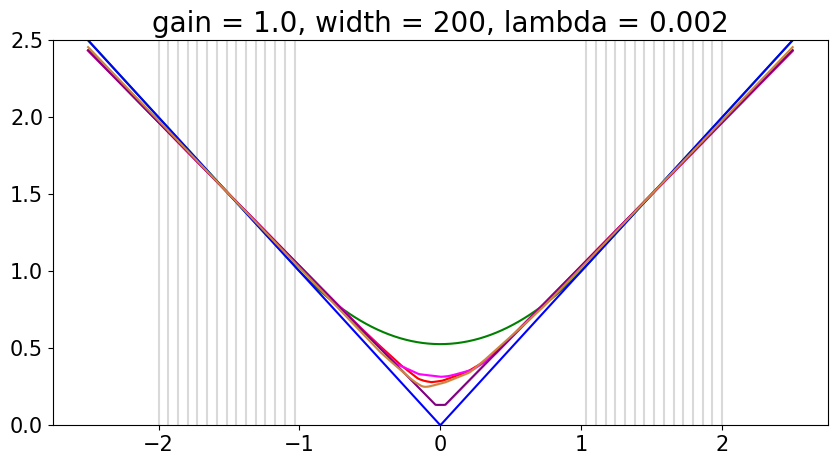}\hfill 
    \includegraphics[width=0.33\textwidth]{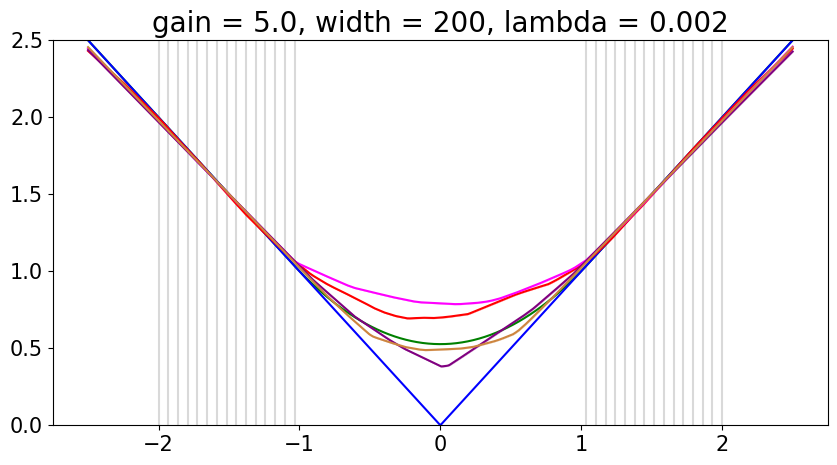}\hfill 

    \includegraphics[width=0.33\textwidth]{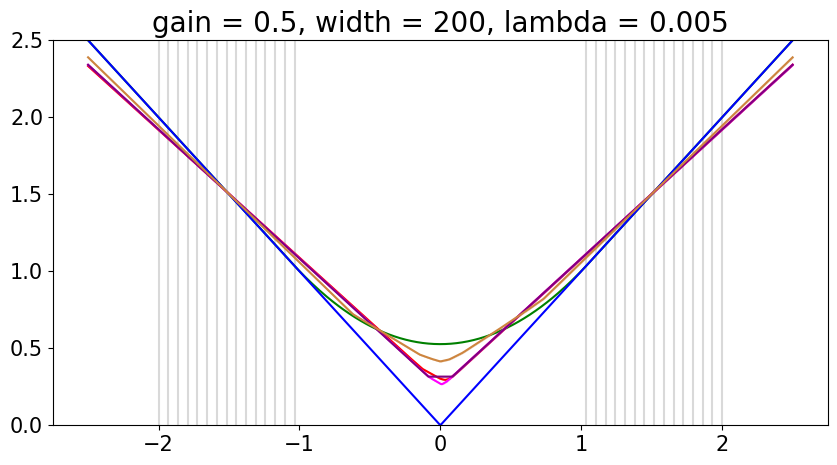}\hfill 
    \includegraphics[width=0.33\textwidth]{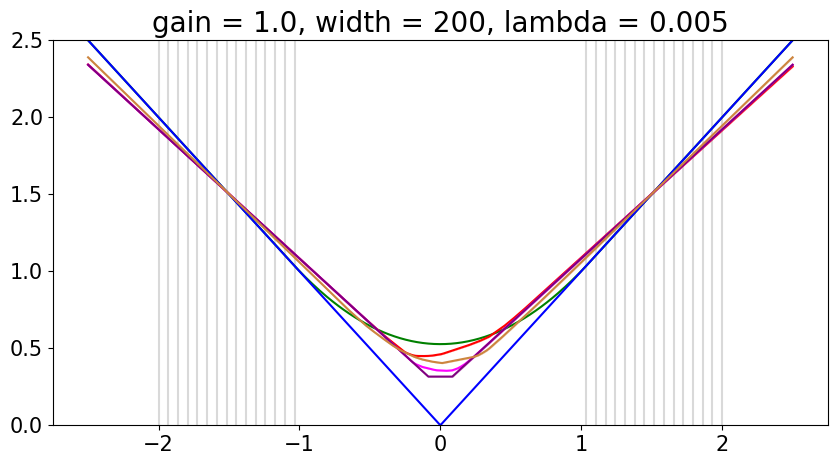}\hfill 
    \includegraphics[width=0.33\textwidth]{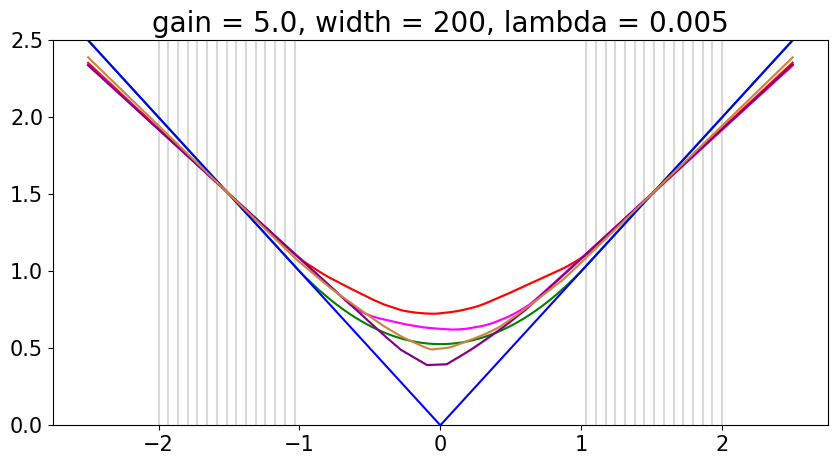}\hfill

    \caption{ 
    We compare numerical approximations of a target function for Momentum-GD (red), GD (magenta), ADAM (purple) and L-BFGS (brown). The target function is drawn in blue and the natural cubic spline in green. For each algorithm, we plot one representative solution to study symmetry selection properties. Vertical grey lines indicate known training data points. The initialization has gain $\alpha \in \{0.5, 1.0, 5.0\}$ in the left, middle and right column. The weight decay penalty is $\lambda\in\{0, 0.002, 0.005\}$ (top, middle, bottom row). For all optimization algorithms, the final loss is approximately $0, 2\cdot 10^{-4}$ and $10^{-3}$ respectively.\\
    For small gain, the optimizer selects a minimum norm interpolant/convex function. For large gain, the interpolant is often non-convex for any one of the optimization algorithms, unless the weight decay is given a positive value. A specific type of minimum norm interpolant in the large set of possible solutions seems to be selected by specifying optimization algorithm, weight decay and initialization. Evidently, initialization and weight decay have far greater influence than the choice of the optimizer. \sw{Small gain seems to preference the {\em sparsest} solution $f(x) = |x|$ has predicted with bias penalty by \citet{boursier2023penalising}.} With larger weight decay, the solutions become more convex and more symmetric, but the accuracy of data fitting decreases. Visually, the second order L-BFGS method appears to be the least affected by different choices in initialization.
    }
    \label{fig: 1d experiment}
\end{figure}
We consider the classical interpolation problem of numerical analysis: Fit values $f^*(x_i)\in\R$ at points $x_i\in\R$ for $i\in\{1,\dots, n\}$. In contrast to classical numerical analysis, we consider overparametrized ReLU-networks with a single hidden layer as our model class. As in Section \ref{section minimum norm interpolants}, we select $f^*(x) = |x|$ for simplicity.

In Figure \ref{fig: 1d experiment}, a ReLU network with a single hidden layer of width $m=200$ was trained to fit $f^*(x) = |x|$ at a symmetric set containing 15 equi-spaced points in $(1,2)$. Optimizers included SGD (with learning rate $\eta = 5\cdot 10^{-5}$ and momentum $\mu = 0.99$), SGD ($\eta = 10^{-2}$, $\mu = 0$), ADAM ($\eta = 5\cdot 10^{-5}$ and default parameters) and the quasi-Newton L-BFGS method. Deterministic gradients based on the $n=30$ sample points were used. The final training loss was below $10^{-4}$ on average. The network weights were initialized by a scaled uniform Xavier initialization, i.e.\ uniformly in a symmetric interval of length $2\alpha \cdot \sqrt{{6}/({n_{in}+n_{out}})}$ where $n_{in}$ and $n_{out}$ denote the number of input- and output-units to a layer respectively. The `gain' factor was selected as $\alpha\in\{0.5, 1, 5\}$. Without weight decay and for small gain, the optimizers find a solution close to the smallest possible minimum norm interpolant $f(x) = |x|$. The larger the parameters for initialization gain and weight decay penalty, the closer numerical solutions are to the largest possible minimum norm interpolant $f(x) = \max\{|x|,1\}$.

We observe that a higher gain factor $\alpha$ corresponds to faster initial training, but a high gain like $\alpha = 5$ produces interpolants which are non-convex without regularization, while a lower gain factor produces convex interpolants in longer time. This observation agrees with the findings of \citet{chizat2019lazy}, who dub the large $\alpha$ setting the `lazy training' regime and associate it with worse generalization performance.
As \cite{pesme2021implicit} eloquently put it: ``there is a tension between generalisation and optimisation: a longer training time might improve generalisation but comes at the cost of\dots\ a longer training time.''

If $m$ is large and $\alpha$ is not too big, the variation of solutions produced by a training algorithm vary less over different stochastic realizations -- see Appendix \ref{appendix plots} for experiments for $m=1,000$. The dynamics are close to those of a limiting `mean field' model studied by \citet{chizat2018global, rotskoff2018neural, mei2018mean, sirignano2020mean} and \citet{wojtowytsch2020convergence}. In these works, the limiting model is typically derived with a factor $1/m$ outside the function definition, which is implicit in the initialization here since $n_{in} +n_{out}\approx m$ for both layers and the ReLU activation is positively one-homogeneous. Global convergence to a minimizer (but not necessarily a minimum norm solution) is guaranteed (up to certain technical assumptions) by \citet{chizat2018global} and \citet{wojtowytsch2020convergence}.

For comparison, we also present the natural cubic spline interpolant, i.e.\ the function $f$ which minimizes the stronger curvature energy $\int_{-2}^2 |f''(x)|^2 \d x$ under the condition that $f(x_i) = |x_i|$ for all $i=1,\dots, n$. Unlike the minimum Barron norm interpolants, the natural cubic spline may not be convex (and in fact, it is not if $f^*$ is replaced by $h^*(x) = |x-0.5|$).

\subsection{Radially symmetric data}\label{section numerical radial}

We explore the performance of numerical optimization algorithms in the setting of Corollary \ref{thm main} with and without explicit regularization  $\lambda\in\{0, 10^{-5}\}$ in dimensions $d=3$, $d=15$ and $d=31$. The numerical solution is then compared to (a rescaled version of) the analytic minimum norm interpolant $f_d^*$ described in Proposition \ref{proposition previous}, which we compute by the algorithm described in \cite[Section 6]{wojtowytsch2022optimal}. The rescaling factor $r_d$ is chosen heuristically for an accurate match.

Data is generated from a distribution $\mu = \mu_1 + \mu_2 + \mu_3$ where $\mu_1$ is a point mass of magnitude $m_1$ at the origin, $\mu_2$ is a uniform measure on the unit sphere $S^{d-1}$ with mass $m_2$ and $\mu_3$ is the radially symmetric measure of mass $1-m_1 -m_2$ such that $\|x\|_2$ is distributed uniformly in $[1,7]$. We numerically explored various values for $m_1\in [0.1, 0.4]$ and $m_2 \in[0.0, 0.4]$ and found simulations to be relatively stable under a number of choices. 

Results are presented in Figures \ref{figure momentum}, \ref{figure comparative 31} and Appendix \ref{appendix plots}. We find that all algorithms find a solution with radial average similar to $f_d^*(r_d x)$, albeit for rescaling factors $r_d$ which depend on dimension $d$ and (to a lesser extent) the optimizer. In high dimension, solutions are not perfectly radially symmetric, but the larger amount of variation over a sphere of fixed radius is observed in the domain where $f_d^*\approx 0$ rather than in the transition area $(0,r_d)$. Larger datasets improve the compliance with the optimal interface and reduce the radial standard deviation. Solutions do not remain non-negative and drop below zero before leveling off as the radius increases. The drop becomes more noticeable as the dimension increases and less pronounced for wider networks. 

The results are essentially identical for normal Xavier initialization and (not radially symmetric) uniform Xavier initialization. In accordance with our expectations, the radial standard deviation is higher for Adam compared to optimizers based in Euclidean geometry. While the neural network function found by Adam resembles a minimum norm interpolant, the weight decay regularizer takes significantly higher values compared to other optimization algorithms.

\begin{figure}
    \centering
     \includegraphics[width=0.49\textwidth]{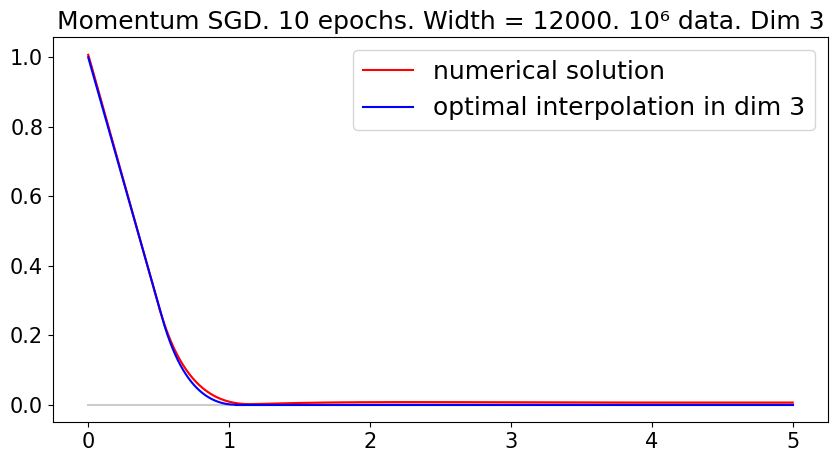}\hfill
    \includegraphics[width=0.49\textwidth]{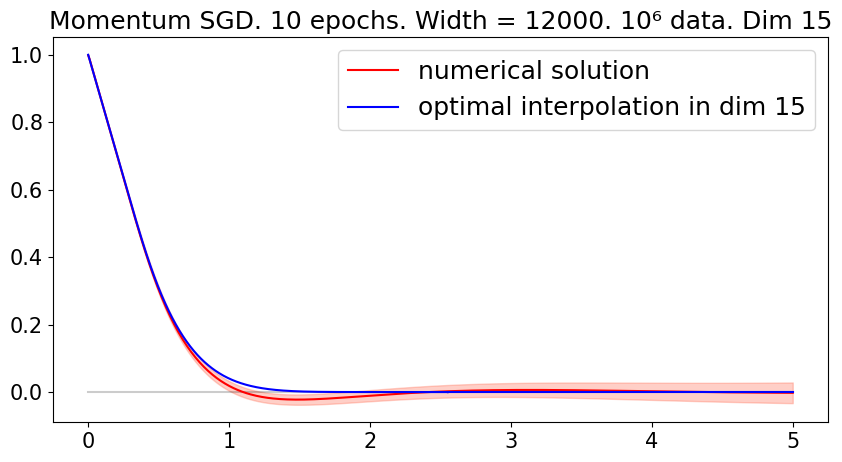}\hfill
    
    \includegraphics[width=0.33\textwidth]{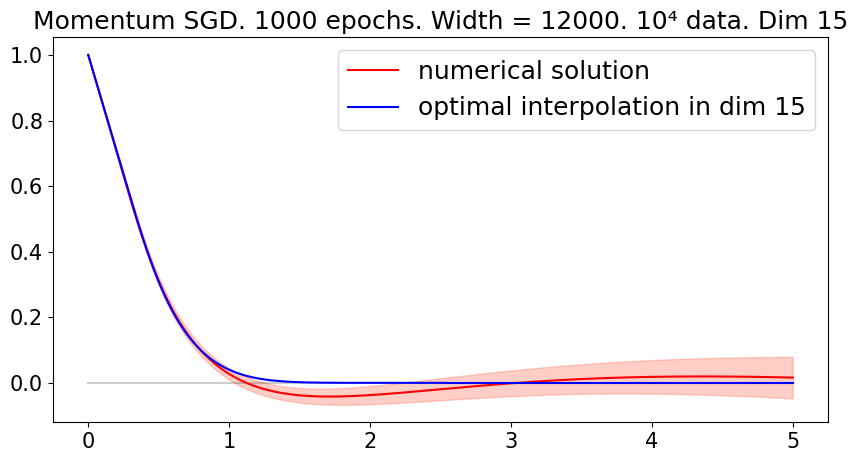}\hfill
    \includegraphics[width=0.33\textwidth]{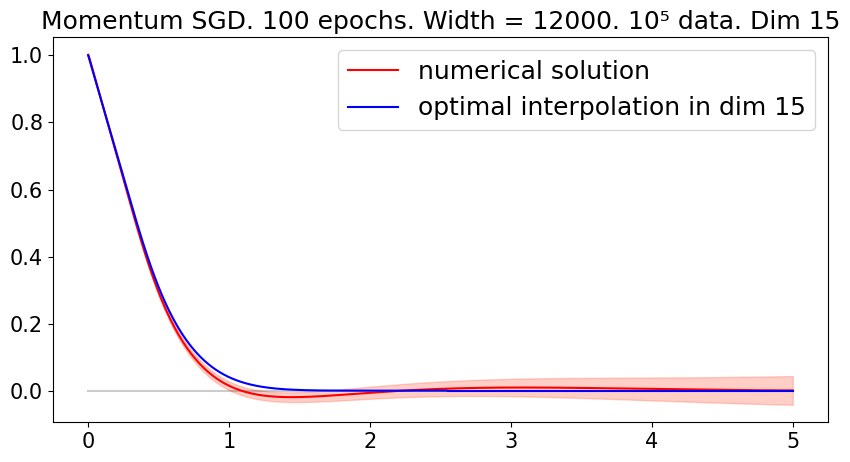}\hfill
    \includegraphics[width=0.33\textwidth]{figures/Jiyoung/dim15/dim15_fig3_topright_adjusted.png}
    \caption{A neural network with a single hidden layer of width $m=12,000$ was trained by gradient descent with learning rate $\eta = 10^{-3}$ and momentum $\mu = 0.99$ in the setting of Section \ref{section numerical radial}. The radial average is sketched by a solid red line. One radial standard deviation around the average, computed over 500 random directions, is shaded.z
    {\bf Top row:} Experiment in dimension 3 (left) and dimension 15 (right). The numerical solutions are compared to $f_d^*(r_dx)$ with $r_3 =1/1.05$ and $r_{15} = 1/2.55$. In both cases, the `minimum norm interpolant' shape is attained to high accuracy. Both solutions are approximately symmetric, more so in low dimension.
    {\bf Bottom row:} Numerical approximations to $f_{15}^*(r_{15}\cdot)$ for neural networks of constant width $m=12,000$, trained on data sets of different size (but for an identical number of $200,000$ training steps with stochastic estimates computed over a batch of $50$ data points). The shape of the radial average is comparable across different dataset sizes, but the fit of the radial average with data is improved and the radial variance reduced for larger datasets. Note that the first two simulations are set in the overparametrized regime, whereas the last experiment on the largest dataset is underparametrized.
    }
    \label{figure momentum}
\end{figure}

\begin{figure}
    \centering

    \includegraphics[clip = true, trim = 2cm 5mm 2cm 5mm, width=0.33\textwidth]{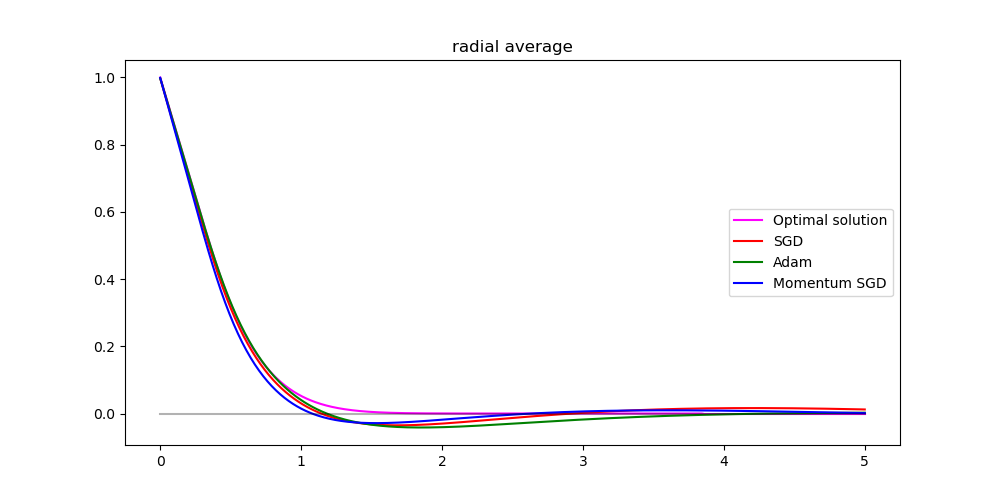}\hfill
    \includegraphics[clip = true, trim = 2cm 5mm 2cm 5mm, width=0.33\textwidth]{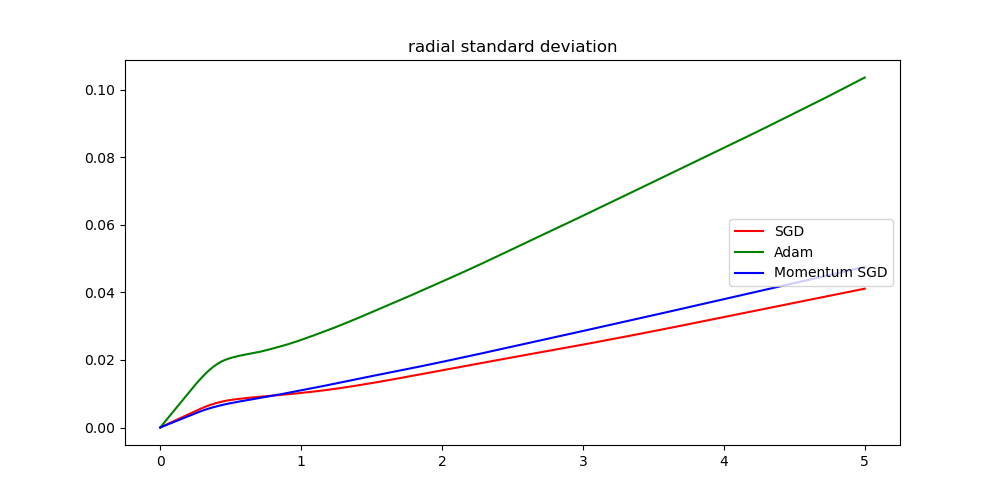}\hfill   
    \includegraphics[clip = true, trim = 2cm 5mm 2cm 5mm, width=0.33\textwidth]{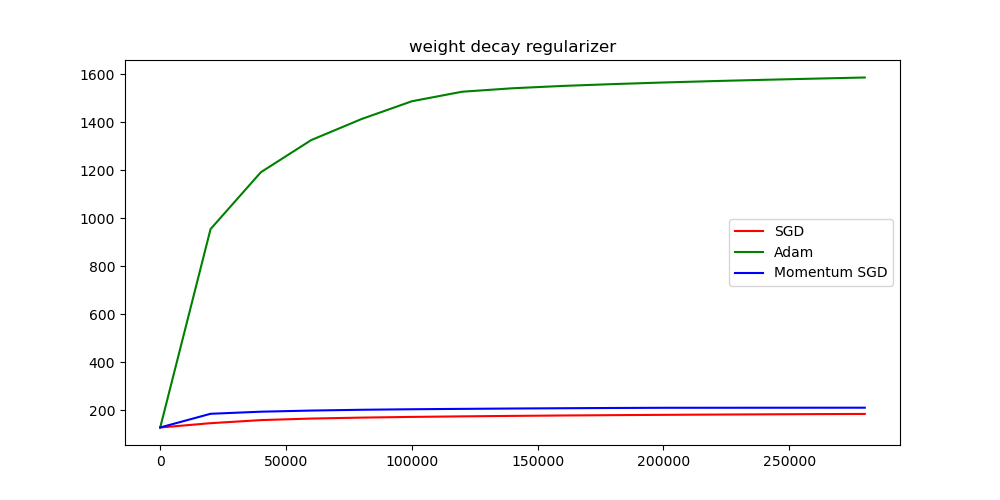}

    \includegraphics[clip = true, trim = 2cm 5mm 2cm 5mm, width=0.33\textwidth]{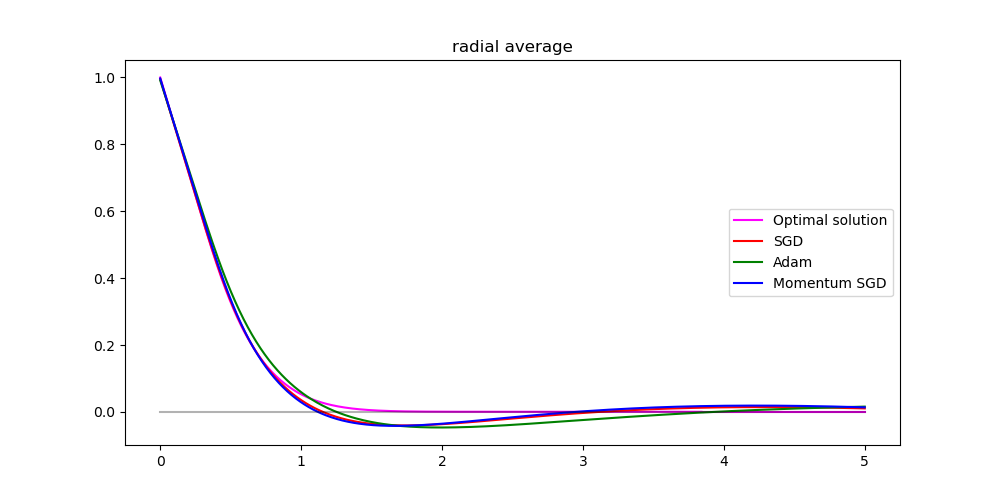}\hfill
    \includegraphics[clip = true, trim = 2cm 5mm 2cm 5mm, width=0.33\textwidth]{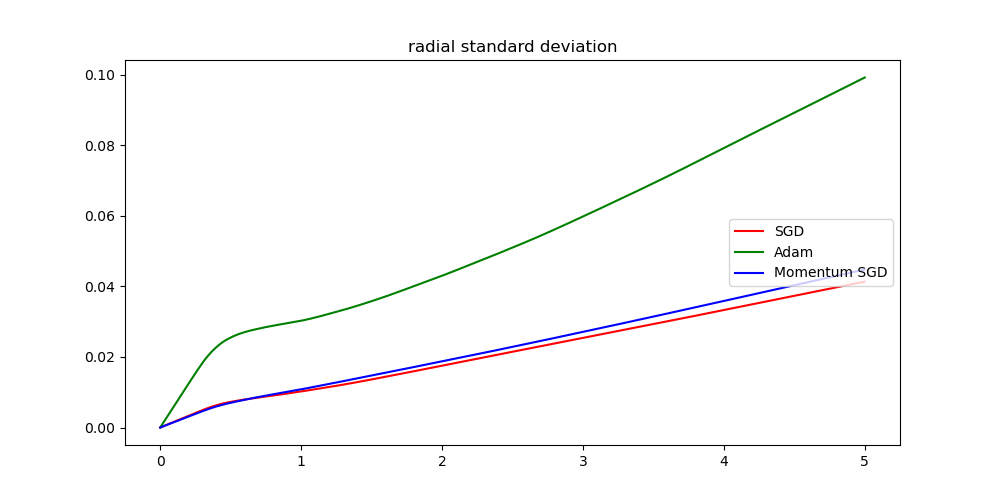}\hfill   
    \includegraphics[clip = true, trim = 2cm 5mm 2cm 5mm, width=0.33\textwidth]{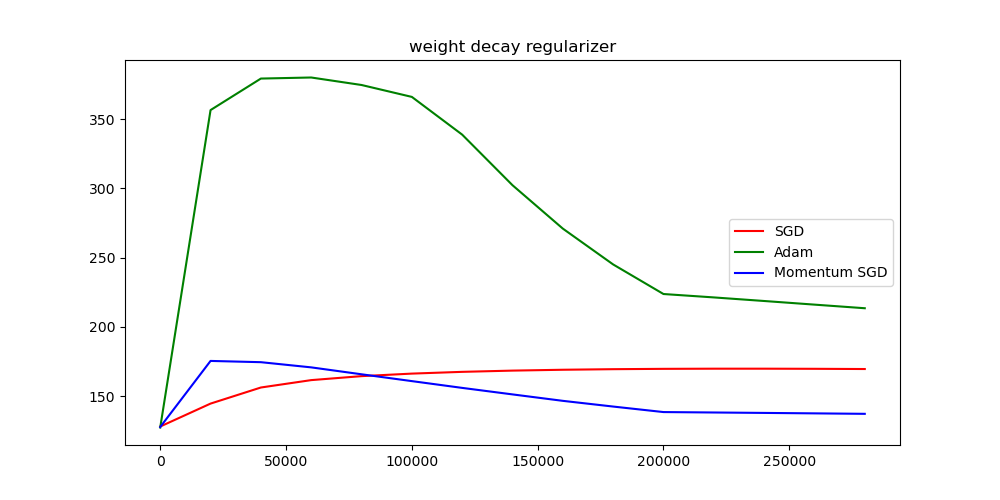}

    \caption{{\bf Top row:} We compare different optimizers (SGD = red, Adam = green and Momentum-SGD = blue) in the setting of Section \ref{section numerical radial} in dimension 31 \sw{with $n=10^4$ data points}. All minimizers attain the minimum norm interpolant shape (left column, rescaled optimal solution in pink) -- curiously for Adam, the correct shape is attained despite the fact that at $\approx 1,600$ the weight decay regularizer is an order of magnitude higher than for the other optimizers (right column). The high regularizer value goes hand in hand with a higher radial standard deviation (middle column). The initialization is uniform Xavier (in particular, not radially symmetric). Essentially identical results are observed for the radially symmetric normal Xavier initialization with the same degree of radial symmetry in Figure \ref{fig: xavier normal}. {\bf Bottom row:} If we include an explicit weight-decay regularizer with weight $\lambda = 10^{-5}$, solutions resemble an optimal interpolant for a smaller rescaling factor $r_{31,\lambda} = 1/3.8$ compared to $r_{31} = 1/3.5$ without regularization. This is expected since the norm is weighted more heavily compared to data compliance. Notably, the radial standard deviation does not decrease, even for the (heavily affected) ADAM optimizer. It remains noticeable at $0.1$ for $r=5$ and Adam.} 
    \label{figure comparative 31}
\end{figure}

\section{Conclusion}

Shallow ReLU networks converge to minimum norm interpolants of given data: Provably if explicit regularization is included and empirically if it is not. 
We conclude with a summary of our empirical insight into the implicit bias of neural network optimizers.

\begin{enumerate}
    \item With reasonable (not too large) initialization, all algorithms studied here are biased towards minimum norm interpolant profiles.

    \item At least in the case of Adam, this bias is visible on the function level, but not on the parameter level, as the weight decay regularizer increases rapidly to large magnitude. Despite this, ADAM solutions often appear `flatter' in high dimension with a lower rescaling factor $r_d$.

    \item Explicit regularization stabilizes towards a minimum norm interpolant shape, but at the cost of a decreased fit with the target values. Its impact is most significant for poorly chosen initial conditions.

    \item When the minimum norm interpolant is non-unique, different types of minimum norm interpolants are found depending on the choice of initialization scheme and optimization algorithm. The impact of initialization scale appears more significant.

    \item Optimization algorithms which are rooted in Euclidean geometry (such as SGD and momentum-SGD) more successfully preserve Euclidean symmetries compared to the `coordinate-wise' Adam algorithm.
\end{enumerate}

The last observation is not surprising for radially symmetric initialization laws as radially symmetric parameter distributions induce radially symmetric functions. It is, however, observed also for a uniform initialization scheme which only obeys coordinate symmetries.

We believe minimum norm interpolation to be a useful testbed to study the implicit bias of optimizers and the impact of initialization and regularization. While minimum norm interpolation by deeper networks has not been characterized yet, we anticipate no obstructions to implementing a similar program there in the future.

\section*{Acknowledgements}
The research of Jiyoung Park is supported by NSF DMS-2210689.

\newpage

    \bibliographystyle{plainnat}
    \bibliography{radial_sym.bib}

\newpage
\appendix 

\section*{Appendix} 
\startcontents[appendices]
  \printcontents[appendices]{l}{1}{\setcounter{tocdepth}{1}}

\section{Numerical Experiments}\label{appendix plots}

\subsection{Hyperparameter settings and computation effort}

In all experiments in Dimensions 3 and 15, the following hyperparameter settings were used unless otherwise indicated:

\begin{enumerate}
    \item Normal Xavier initialization with gain $\alpha = \sqrt 2$
    \item SGD: Learning rate = $10^{-2}$ (Dimension 15), $10^{-3}$ (Dimension 3).
    \item Momentum-SGD: Learning rate = $10^{-3}$ and momentum $\mu = 0.99$
    \item ADAM: Learning rate = $10^{-3}$ and PyTorch default hyperparameters for $\beta_1=0.9, \beta_2=0.999, \eps = 10^{-8}$. 
\end{enumerate}

For experiments in Dimension 31, we drop the learning rate for ADAM after 50 of 150 epochs by a factor of 10 and for Momentum-SGD by a factor of 10 after 100 epochs.

In Dimension 3, a learning rate of $10^{-2}$ was found numerically unstable for SGD without momentum. To compensate for the smaller learning rate and provide a fair comparison, the number of time steps was increased.

All experiments were performed on a free version of google colab or the experimenters' personal computers. One run of the model takes below fifteen minutes on a single graphics processing unit.

\subsection{Summary and interpretation of additional simulations}

In this Section, we present additional numerical experiments in various situations complementary to those presented in the main body of the text. These include: Wider neural networks (Appendix \ref{appendix wider 1d}), experiments with different optimizers (Appendix \ref{appenix sgd and adam}), experiments with different initialization to explore effects of scale and symmetry and the role of explicit regularization (Appendices \ref{appendix initialization} and \ref{appendix radial 31}), experiments with $\ell^1$-loss instead of $\ell^2$-loss (Appendix \ref{appendix l1}) and repeated experiments to visualize the stochastic variation between runs (Appendix \ref{appendix multiple runs momentum}).

Additionally, we present and investigation into related settings where our theoretical understanding does not apply: In Appendix \ref{appendix ntk}, we consider linearized (random feature) dynamics to explore how close we are to a (truly non-linear) neural network model. In Appendix \ref{appendix leaky}, we consider neural networks with a single hidden layer and leaky ReLU activation instead of ReLU activation. In Appendix \ref{appendix deeper}, we consider ReLU networks with multiple hidden layers.
For a detailed list, see the table of contents below.

 \startcontents[appendix a]
  \printcontents[appendix a]{l}{1}{\setcounter{tocdepth}{2}}

Our goal is not to explore questions of loss function, initialization, optimization algorithm and the impact of hyperparameters in a systematic fashion, but rather to establish problems in which a minimum norm interpolant can be found in an explicit fashion as instructive benchmarks to numerically study such questions. As a proof of concept, we provide a partial exploration of the parameter space. For the moment, we find ourselves confined to ReLU networks with a single hidden layer, as this is the only case in which explicit minimum norm interpolants are available. Minimum norm interpolation describes the shape of functions between known data clusters and is thus more expressive than a study of generalization error which is naturally confined to data clusters.

The additional experiments corroborate our findings in the main text. Before the detailed presentation, let us briefly summarize the conclusions.
\begin{enumerate}
    \item Across a variety of different optimizers, Xavier type (= Glorot type) initialization schemes and loss functions, a minimum Barron norm interpolant-like shape was attained, to varying degrees of accuracy and with different rescaling factors.
    \item Solutions are fairly radially symmetric with standard deviation in radial direction at most $0.1$ (SGD) and $0.2$ (Adam).
    \item A geometrically distinct shape is observed for random feature models in the same regime.
    \item Explicit regularization has little effect, even for poorly chosen initializations of Xavier type with high gain. We conjecture that the uniqueness of the radially symmetric minimum norm interpolant induces a higher degree of rigidity and bias, compared to the one-dimensional case where the set of minimum norm solutions was diverse (and infinite).

    \item Functions display larger variation in the radial direction for He initialization. In this regime, explicit regularization has more apparent and beneficial effects on both solution shape and radial symmetry. The solution does not reduce to the random feature model in this case either.

    \end{enumerate}

\subsection{Wide neural networks in one dimension}\label{appendix wider 1d}

In Figure \ref{fig: 1d wider}, we present the same experiment as in Figure \ref{fig: 1d experiment} for wider neural networks with $m=1,000$ neurons in the hidden layer.

\begin{figure}
    \centering
    \includegraphics[width=.33\textwidth]{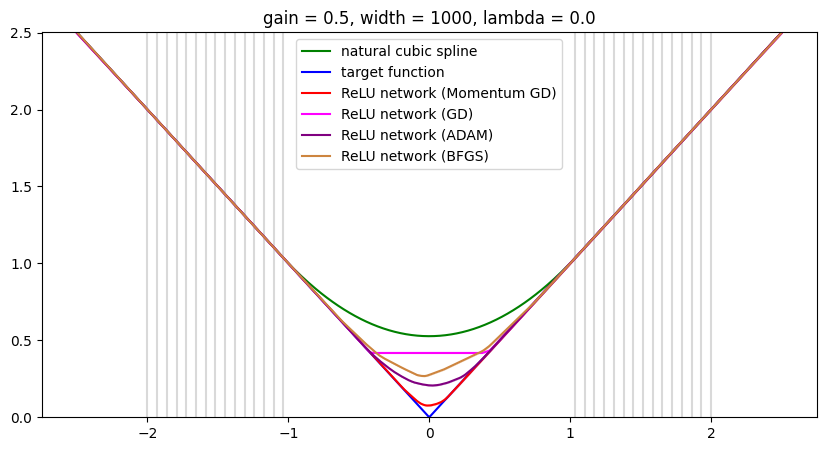}\hfill
    \includegraphics[width=.33\textwidth]{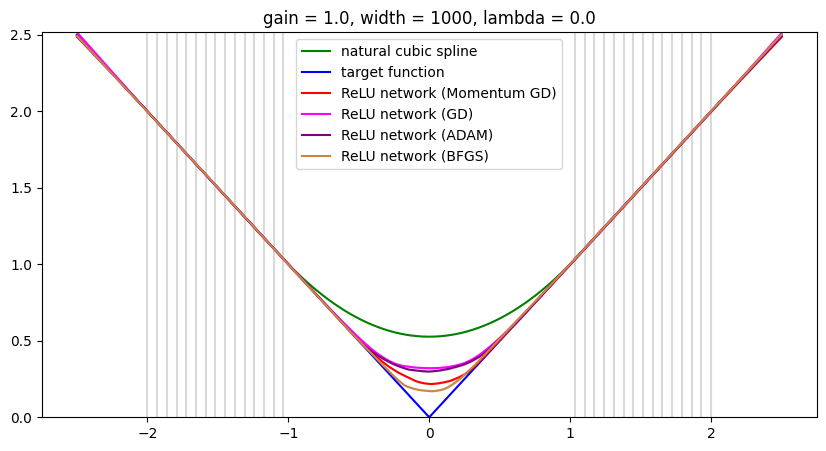}\hfill
    \includegraphics[width=.33\textwidth]{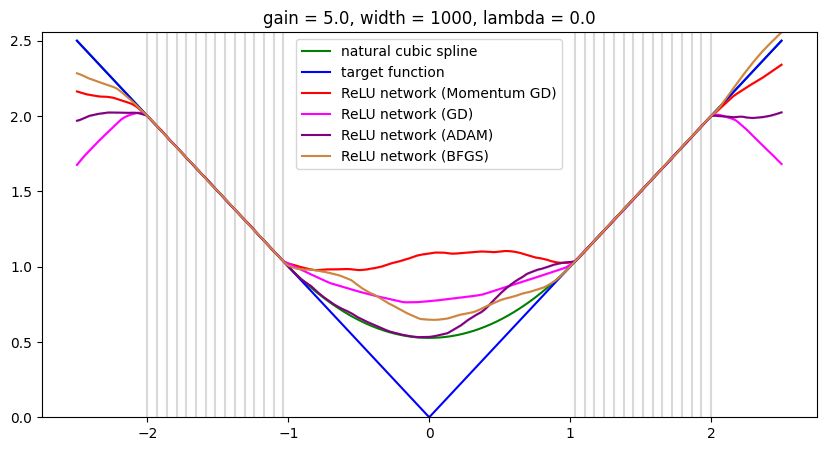}

    \includegraphics[width=.33\textwidth]{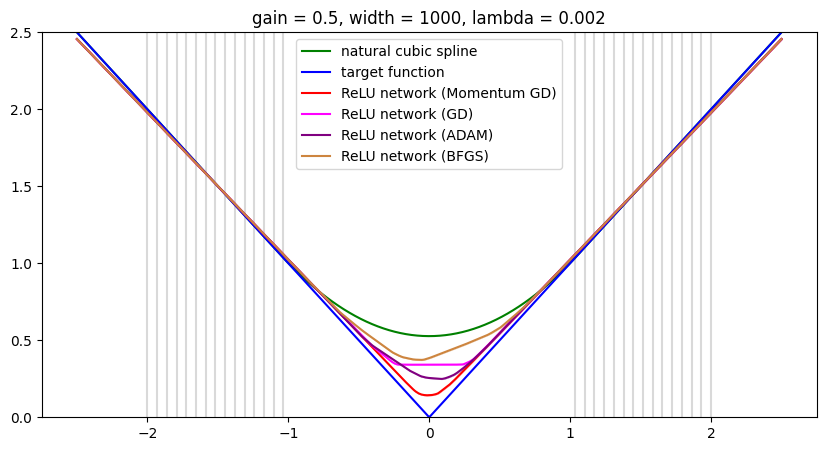}\hfill
    \includegraphics[width=.33\textwidth]{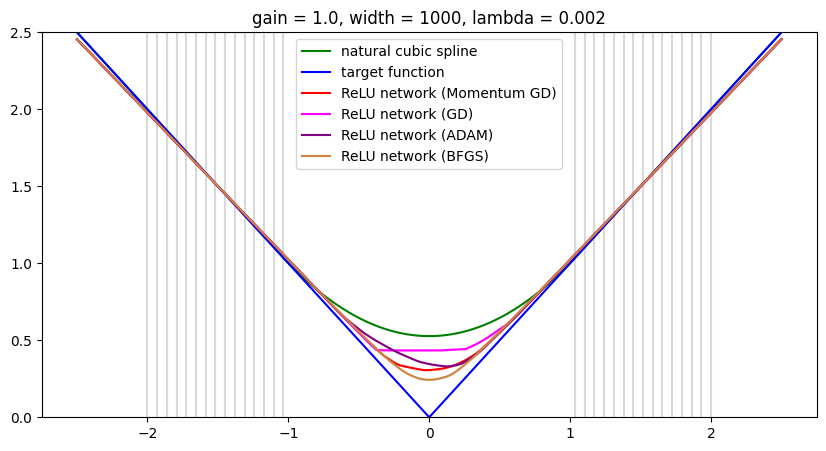}\hfill
    \includegraphics[width=.33\textwidth]{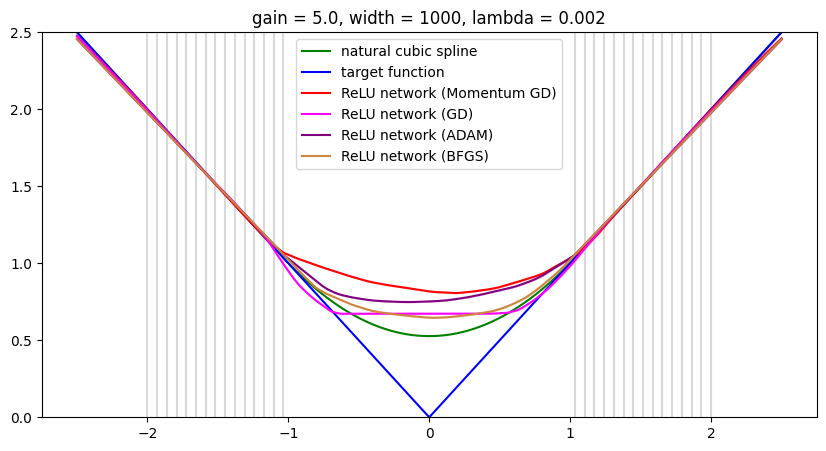}

    \includegraphics[width=.33\textwidth]{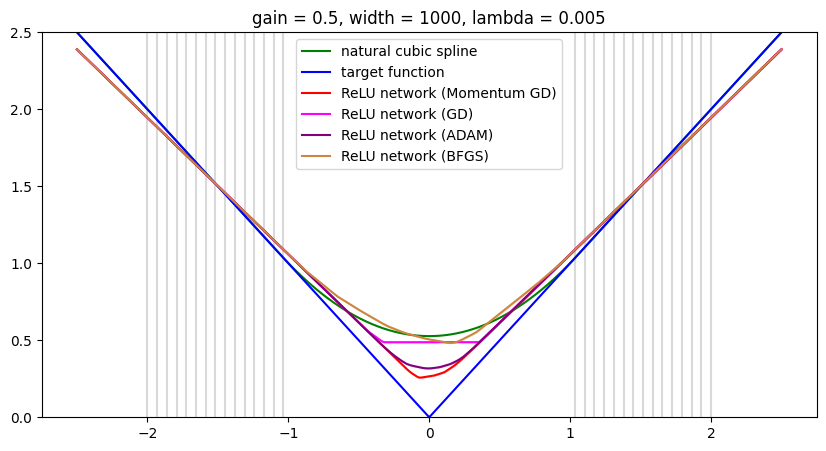}\hfill
    \includegraphics[width=.33\textwidth]{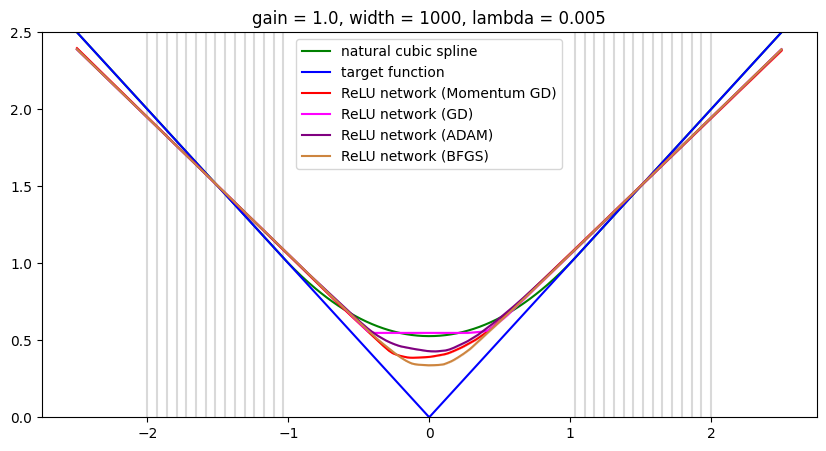}\hfill
    \includegraphics[width=.33\textwidth]{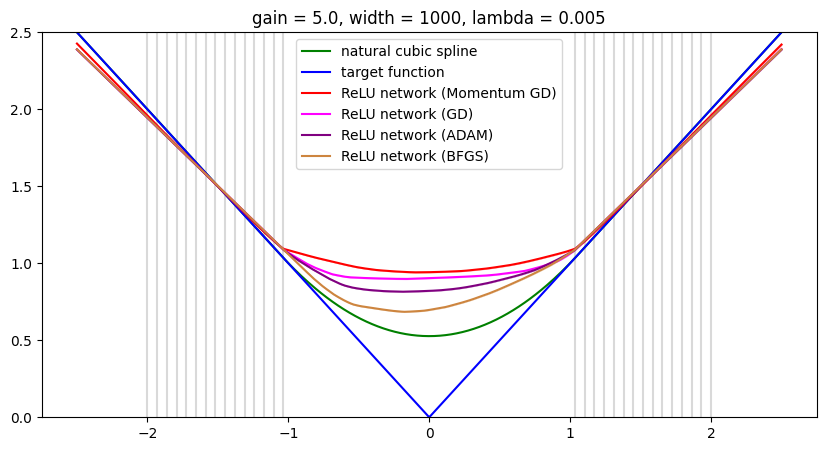}

    \caption{For wider neural networks, we observe less stochastic variation between runs, as the empirical distribution of neurons is closer to a continuum limit. Solutions are generally more convex and symmetric than their narrow counterparts. The gradient descent optimizer without momentum stands out for its tendency to select solutions with highly localized second derivatives and a preference for piecewise linear functions with few linear regions, while other training algorithms select `smoother' solutions with curvatures which are dispersed more evenly throughout the domain.}
    \label{fig: 1d wider}
\end{figure}

\subsection{SGD and ADAM: Dimensions 3 and 15}\label{appenix sgd and adam}

\begin{figure}
    \centering
    \includegraphics[width=0.49\textwidth]{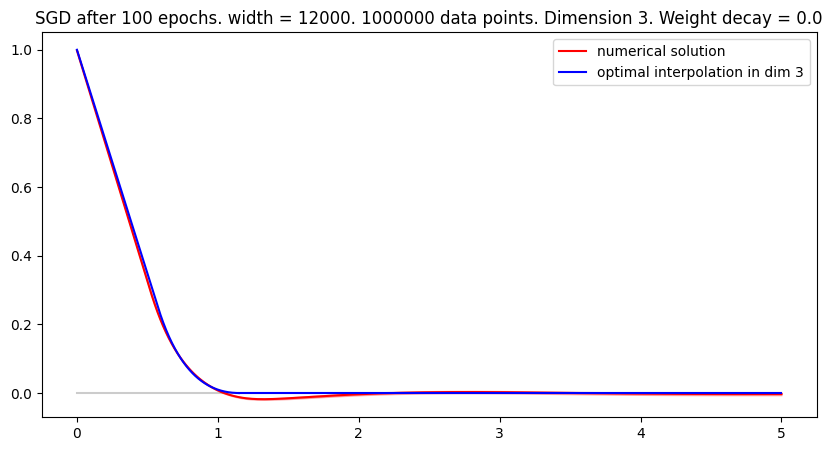}\hfill
    \includegraphics[width=0.49\textwidth]{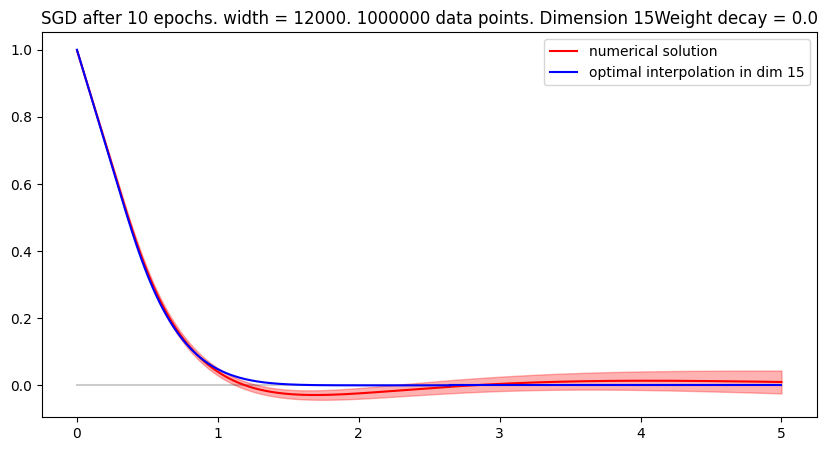}\hfill
    
    \includegraphics[width=0.33\textwidth]{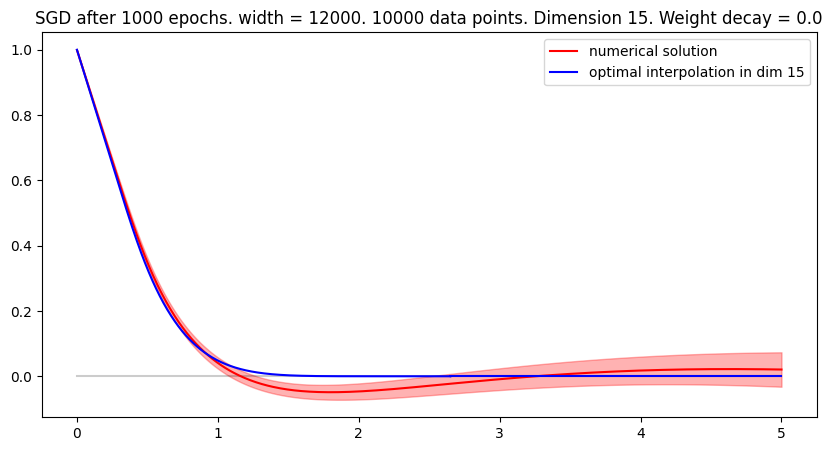}\hfill
    \includegraphics[width=0.33\textwidth]{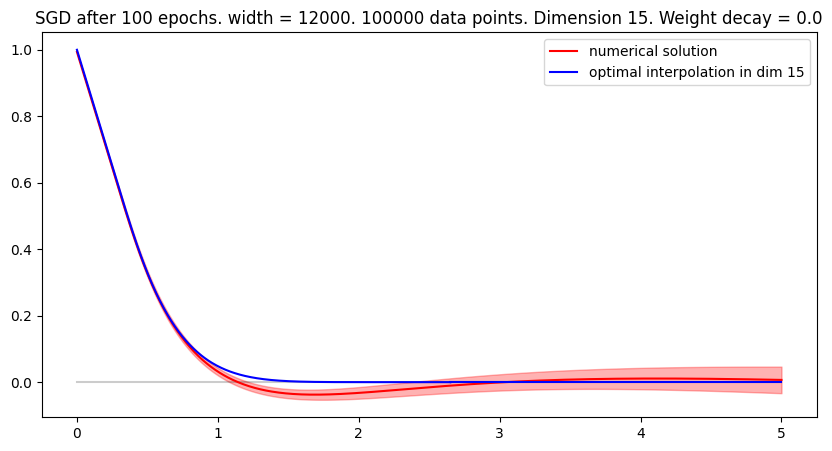}\hfill
    \includegraphics[width=0.33\textwidth]{figures/Jiyoung/dim15/dim15lr1e02sgd_3.png}

    \caption{We perform the same experiments as for Figure \ref{figure momentum}, but with the SGD without momentum optimizer. Learning rate was adjusted to $10^{-2}$ for dimension 15, but for dimension 3 we just used $10^{-3}$ learning rate and runned 10 times more epochs, due to stability of neural network training in dimension 3 case. The results are comparable, but the rescaling factors were chosen as $1/1.15$ in dimension 3 and $1/2.65$ in dimension 15.}
    \label{figure SGD}
\end{figure}

\begin{figure}
    \centering
    \includegraphics[width=0.49\textwidth]{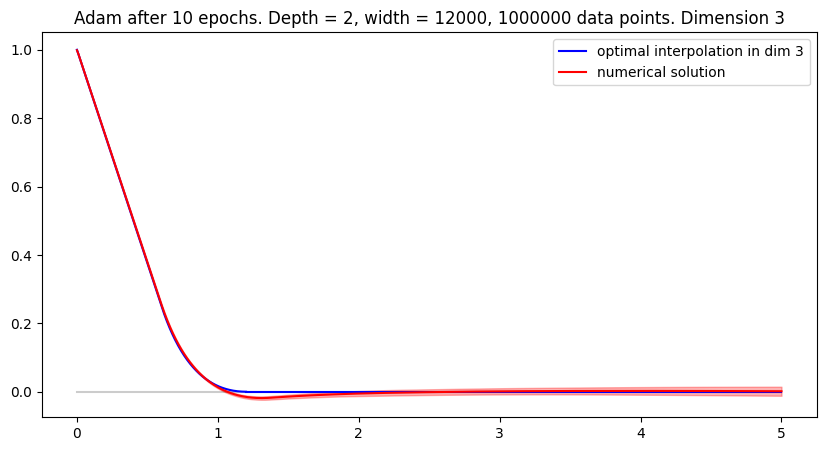}\hfill
    \includegraphics[width=0.49\textwidth]{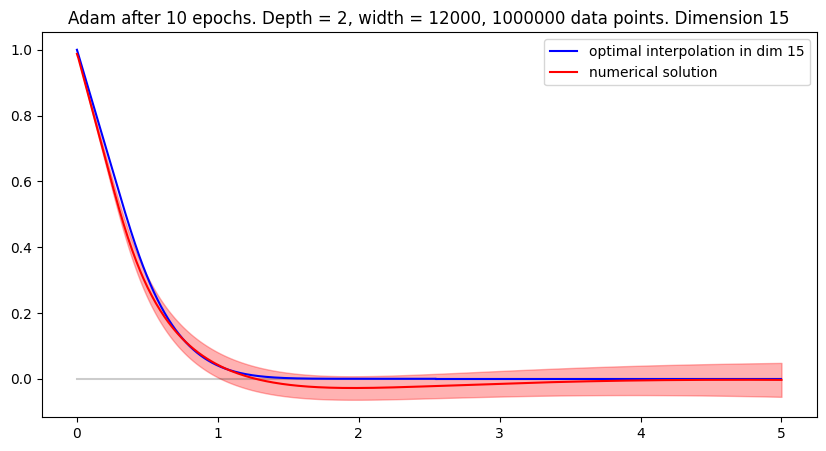}\hfill
    
    \includegraphics[clip = true, trim = 2cm 0.5cm 2cm 1cm, , width=0.33\textwidth]{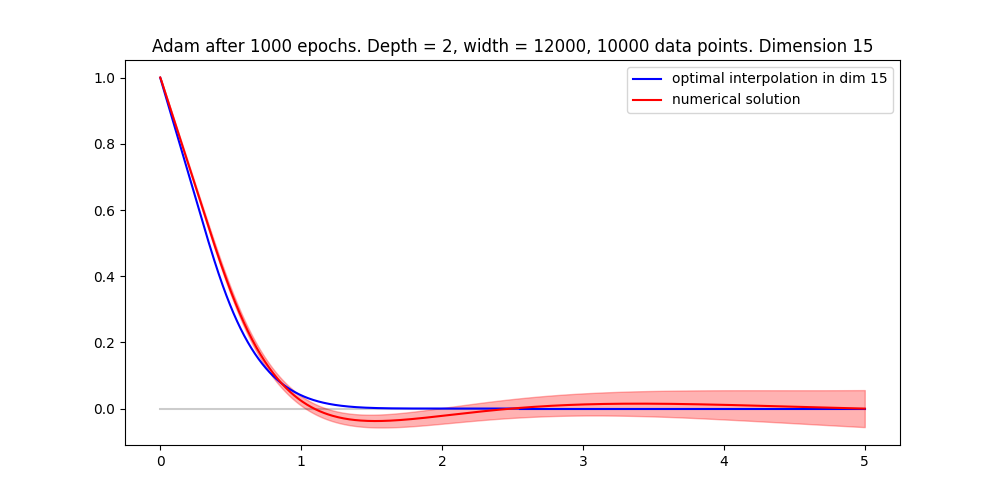}\hfill
    \includegraphics[clip = true, trim = 2cm 0.5cm 2cm 1cm, , width=0.33\textwidth]{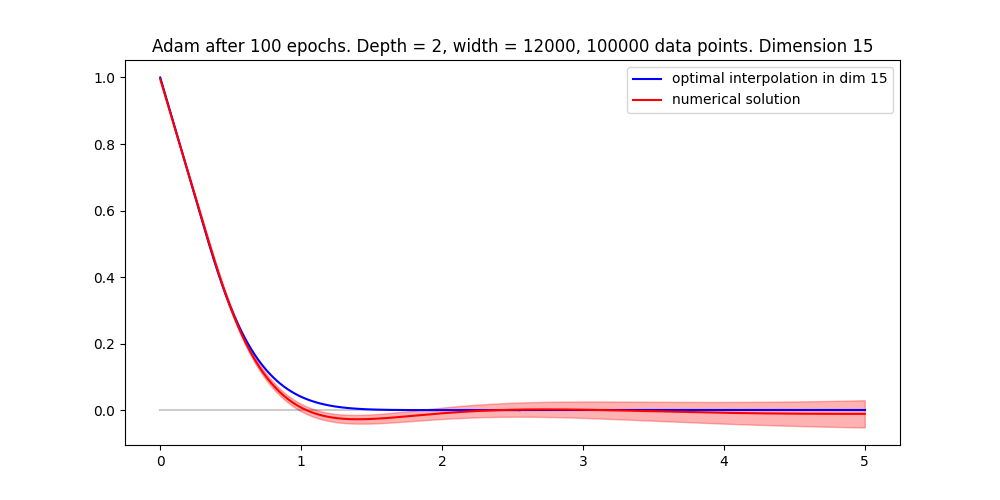}\hfill
    \includegraphics[clip = true, trim = 2cm 0.5cm 2cm 1cm, , width=0.33\textwidth]{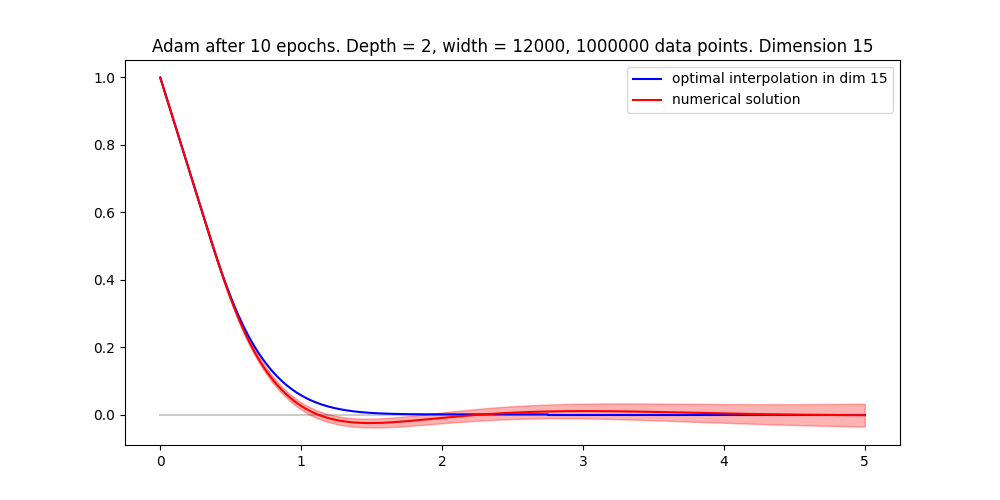}

    \caption{We repeat the experiments of Figure \ref{figure momentum}, but with Adam. The numerical solutions resemble those found by Momentum-SGD, but better rescaling factors for numerical solutions are $1/1.2$ in dimension 3 and $1/2.75$ rather than $1/1.05$ and $1/2.55$, i.e.\ the functions are `flatter'.}
    \label{figure adam}
\end{figure}

We repeat the experiment on Figure \ref{figure momentum} for Stochastic Gradient Descent (SGD) optimizer without momentum and for the Adam optimizer of \citet{kingma2014adam}. The results are displayed in Figure \ref{figure SGD} and \ref{figure adam} respectively. The results strongly resemble those obtained for the SGD optimizer with momentum in Figure \ref{figure momentum}.

\subsection{Radial symmetry in Dimension 31}\label{appendix radial 31}

As indicated in Figure \ref{figure comparative 31}, we present computational results with the radially symmetric normal Xavier initialization in Figure \ref{fig: xavier normal}.
\begin{figure}
    \centering
    \includegraphics[clip = true, trim = 2cm 5mm 2cm 5mm, width=0.33\textwidth]{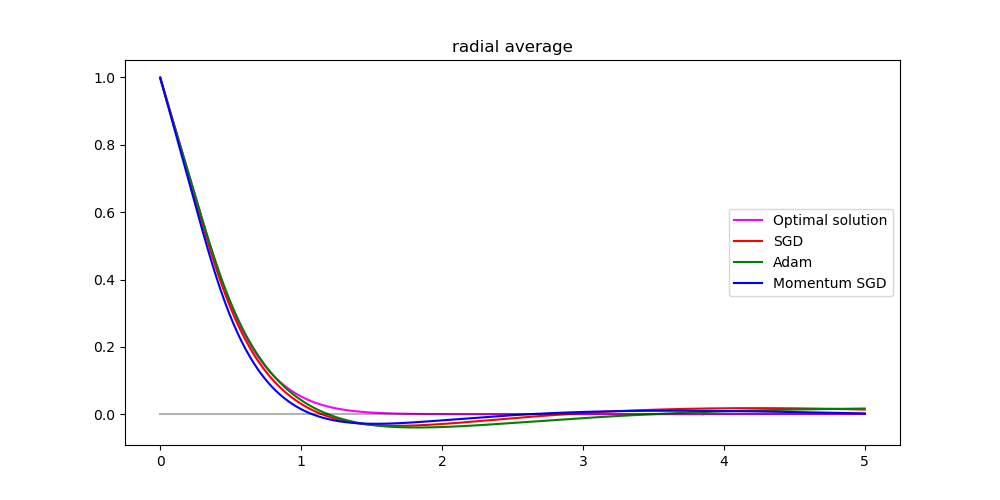}\hfill
    \includegraphics[clip = true, trim = 2cm 5mm 2cm 5mm, width=0.33\textwidth]{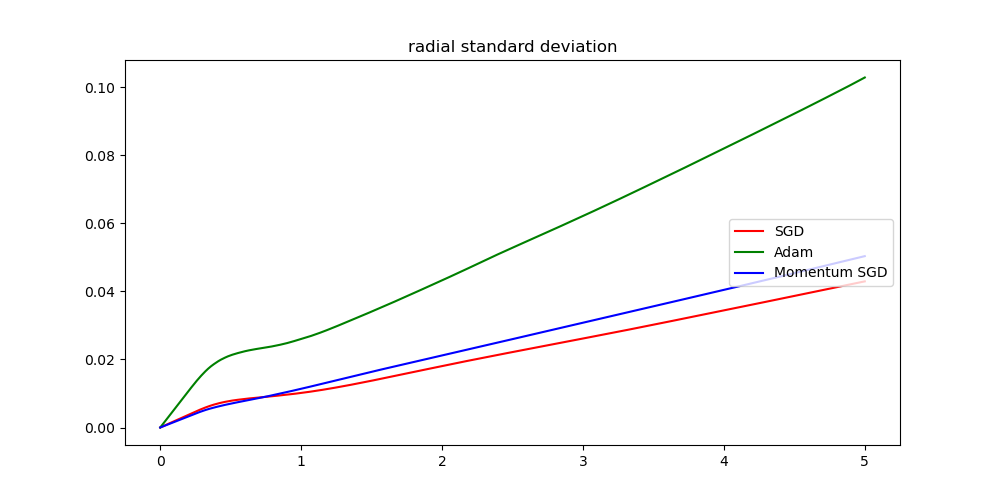}\hfill    \includegraphics[clip = true, trim = 2cm 5mm 2cm 5mm, width=0.33\textwidth]{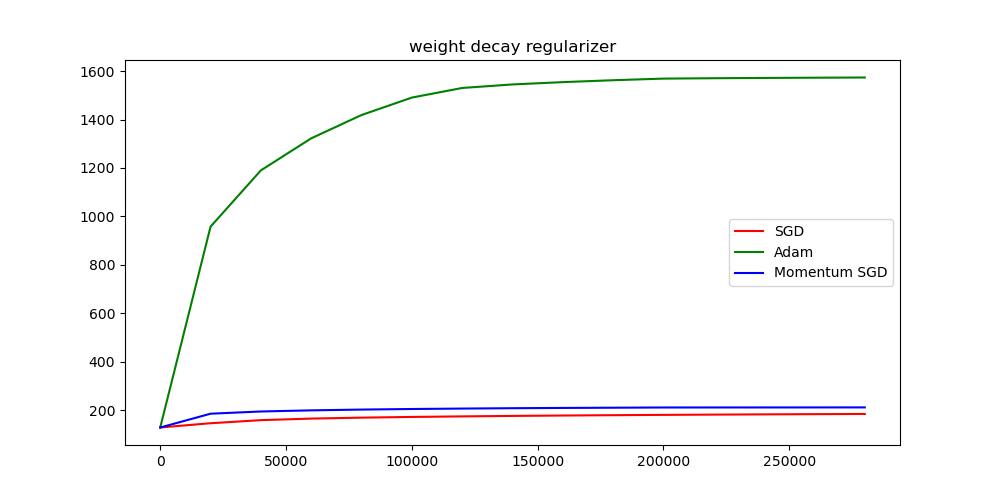}
    \caption{Results between normal and uniform Xavier initialization are essentially identical in this experiment -- compare Figure \ref{figure comparative 31}. (Approximate) radial symmetry is attained even when parameters are initialized in a fashion which is not radially symmetric.}
    \label{fig: xavier normal}
\end{figure}

\subsection{Gradient descent with Momentum}\label{appendix multiple runs momentum}

In Figure \ref{figure appendix momentum}, we present additional runs in the setting of Figure \ref{figure momentum}. Despite quantitative variation, the geometric shapes of solutions are stable over multiple runs and resemble the minimum norm interpolant $f_{15}^*$ in all cases.

\begin{figure}
    \centering
     \includegraphics[width=0.33\textwidth]{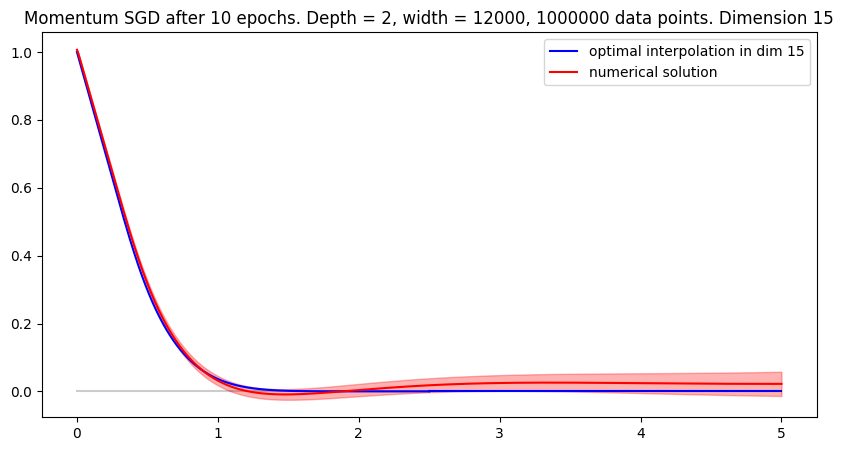}\hfill
    \includegraphics[width=0.33\textwidth]{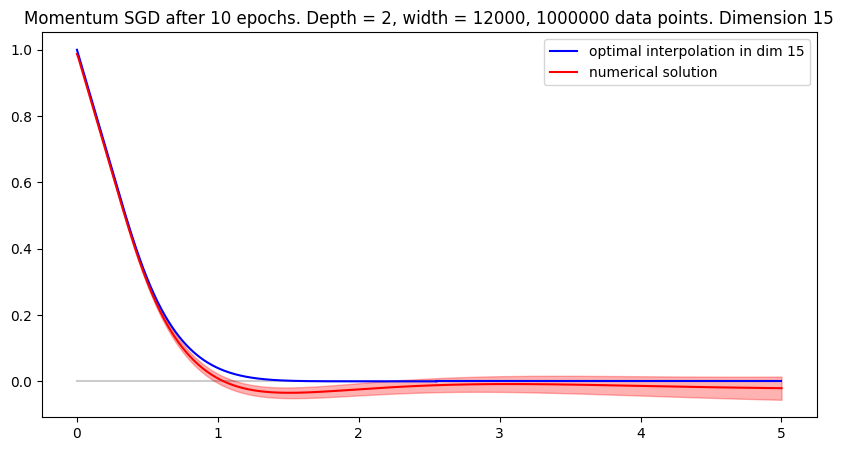}\hfill
    \includegraphics[width=0.33\textwidth]{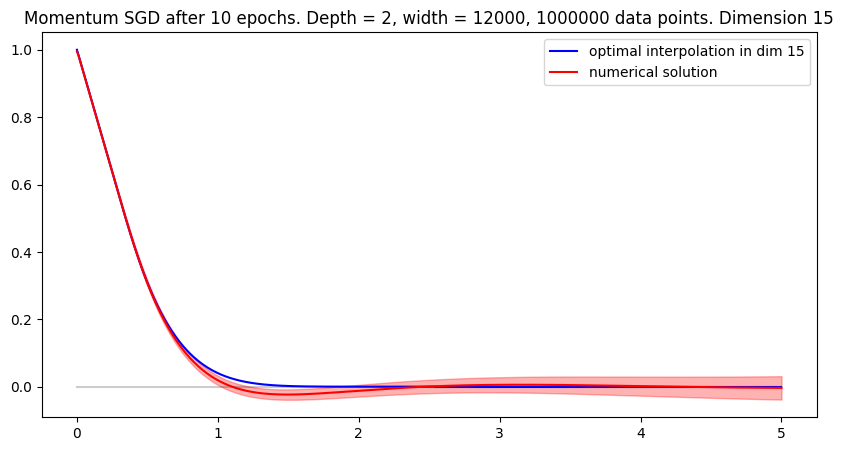}
    
    \caption{Three realizations of numerical interpolations in the setting of Section \ref{section numerical radial}, computed by SGD with learning rate $\eta = 10^{-3}$ and momentum $\mu = 0.99$. In all cases, the minimum norm interpolant shape is attained approximately, but the radial averages briefly dip below zero and exhibit a local minimum which is not found in $f_d^*$. The variation between runs is notable, but not large.
    }
    \label{figure appendix momentum}
\end{figure}

\subsection{\texorpdfstring{$\ell^1$}{L1}-loss and Huber loss}\label{appendix l1}

We repeat the experiment of Figure \ref{figure momentum} with the $\ell^1$-loss function in the place of $\ell^2$-loss. To compensate for the lack of smoothness in the loss function, we reduce the learning rate by a factor of $10$ to $10^{-4}$ and increase the number of epochs by 50\% to compensate. Results are reported in Figure \ref{fig: l1 loss}.

\begin{figure}
    \centering
     \includegraphics[width=0.5\textwidth]{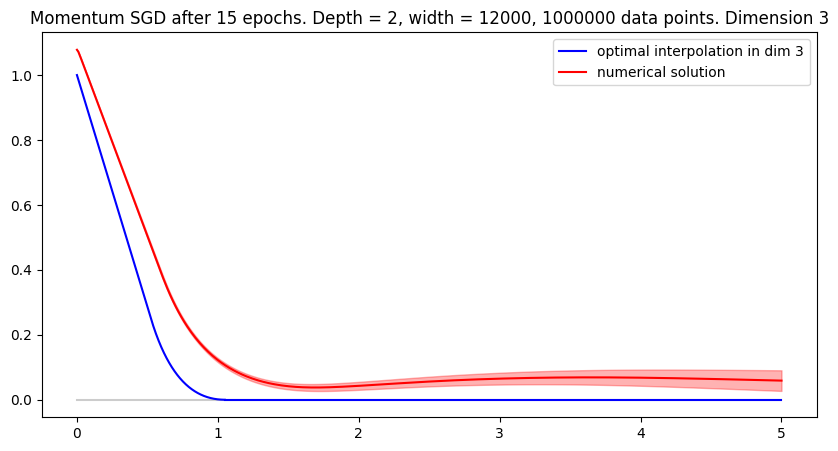}\hfill
    \includegraphics[width=0.5\textwidth]{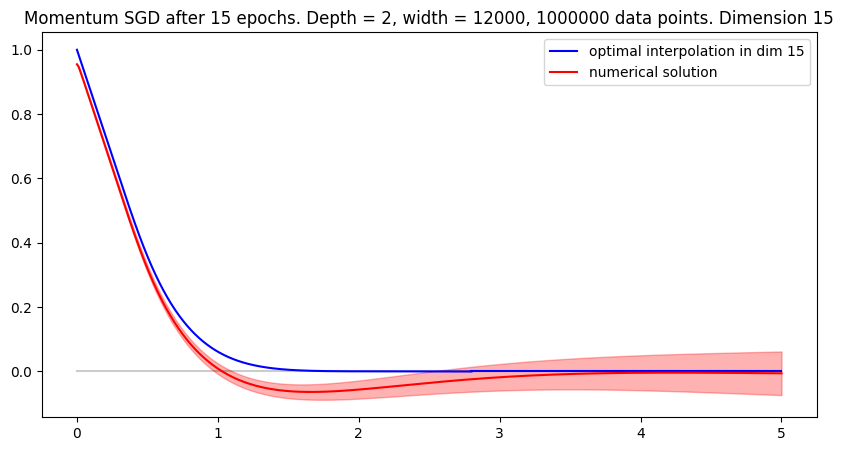}

     \includegraphics[width=0.33\textwidth]{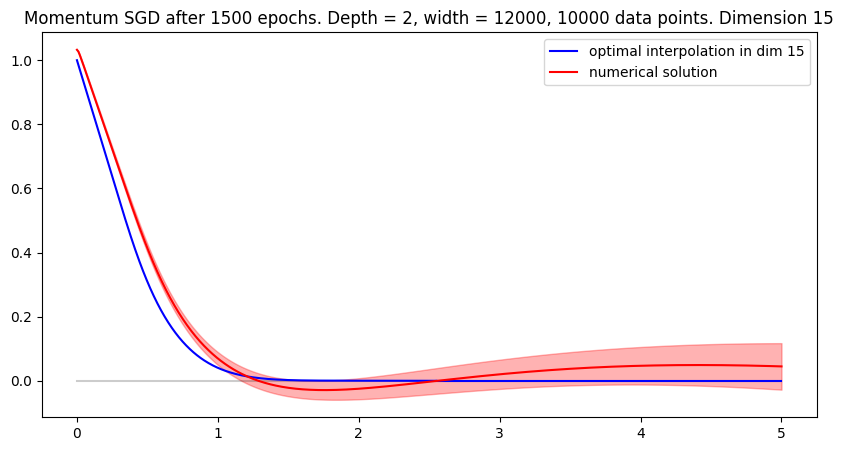}\hfill
    \includegraphics[width=0.33\textwidth]{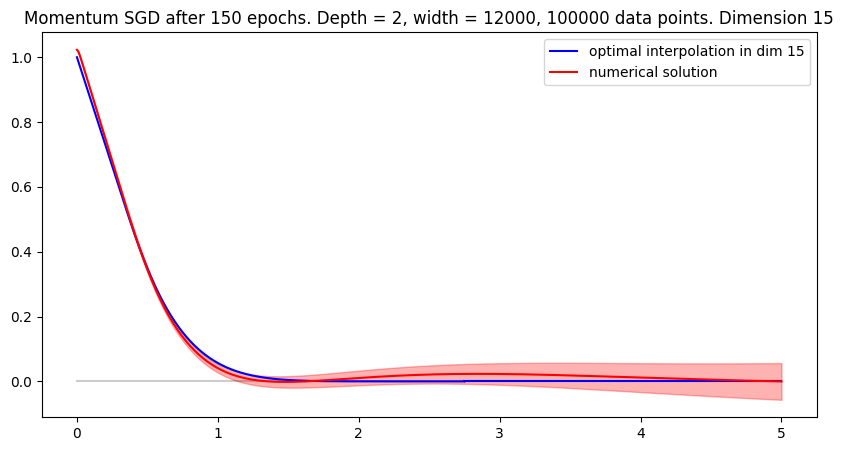}
    \includegraphics[width=0.33\textwidth]{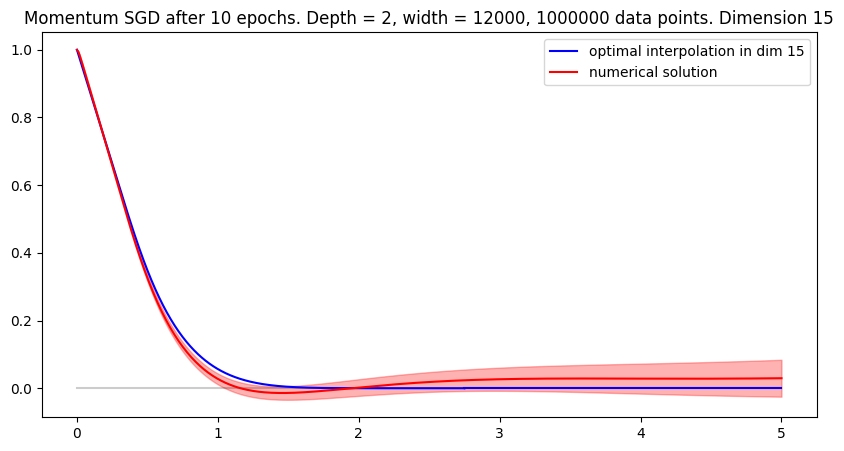}

    \caption{ 
    $\ell^1$-loss leads to similar numerical results with a smaller rescaling factor of $1/2.8$ rather than $1/2.5$ for $\ell^2$-loss. Curiously, $f(0)>1$ for these algorithms, while $f(0)\leq 1$ when optimizing $\ell^2$-loss. 
    In Dimension $d=3$, the non-smoothness leads to a minimization problem that is not well resolved by the numerical optimization algorithm.
    }
    \label{fig: l1 loss}
\end{figure}

Similarly, we repeated experiments with the Huber loss function. Since $|f(x_i) - f^*(x_i)|<1$ over the data set during the final stages of training, the loss function coincides with $\ell^2$-loss in the long run and all experiments are identical to $\ell^2$-loss. We therefore do not present additional plots. 

In the initial stages of training, Huber loss is more stable numerically than $\ell^2$-loss, especially for large gain or He initialization (compare Section \ref{appendix he initialization}).

\subsection{Initialization scaling and explicit regularization: high-dimensional radial data}\label{appendix initialization}

As noted in Section \ref{section one dim simulation} and previously by \cite{chizat2019lazy}, the choice of initialization affects the optimization process of neural networks. Motivated by our observations in the one-dimensional case, we consider the effects of initialization and explicit regularization in the radially symmetric setting (Section \ref{section numerical radial}). Our results support the earlier claim that the effects of regularization are advantageous for poorly chosen initialization with high gain. 

The experiments were performed in higher dimension 31 and with {\em uniform} Xavier initialization for the scenario in which it is most challenging to obtain radially symmetric solutions. As in Appendix \ref{appendix l1}, we used $\ell^1$-loss rather than $\ell^2$-loss. To compensate for the non-smoothness of the loss function, we drop the learning rate by a factor of 10 twice during the training process. Similar results were observed for $\ell^2$-loss, but the effects of initialization and regularization were less pronounced compared to the $\ell^1$-case.

\begin{figure}
    \centering
     \includegraphics[width=0.33\textwidth]{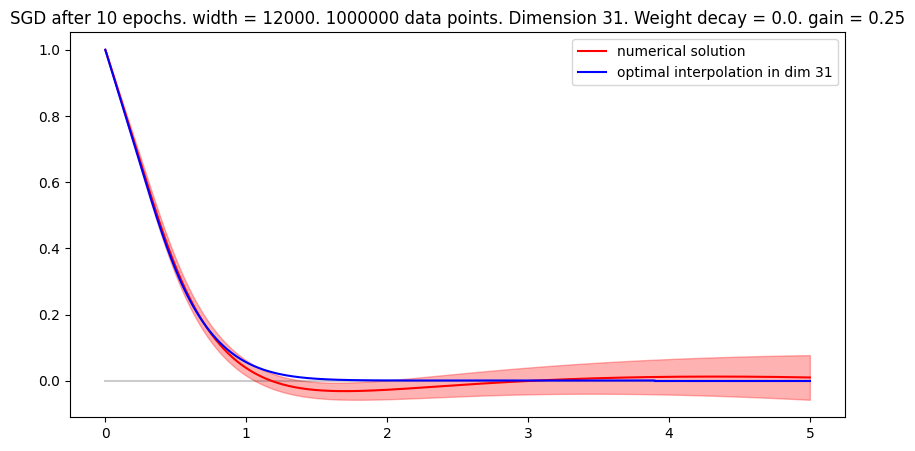}\hfill
     \includegraphics[width=0.33\textwidth]{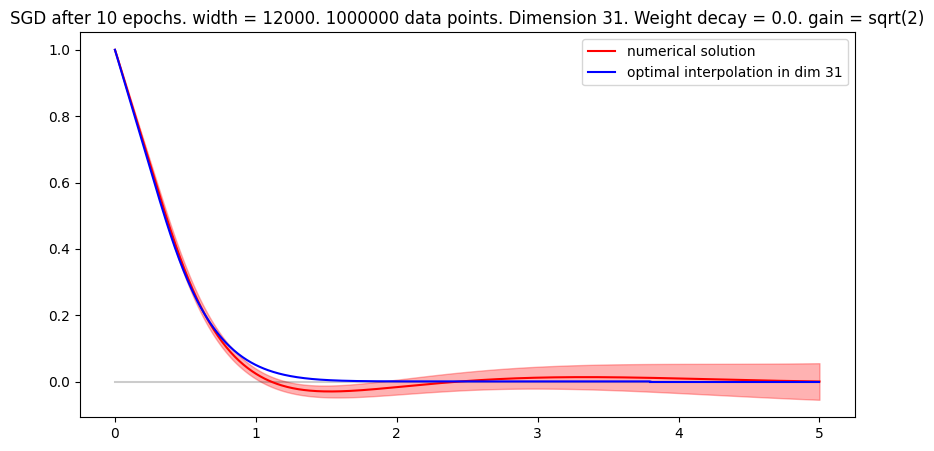}\hfill
     \includegraphics[width=0.33\textwidth]{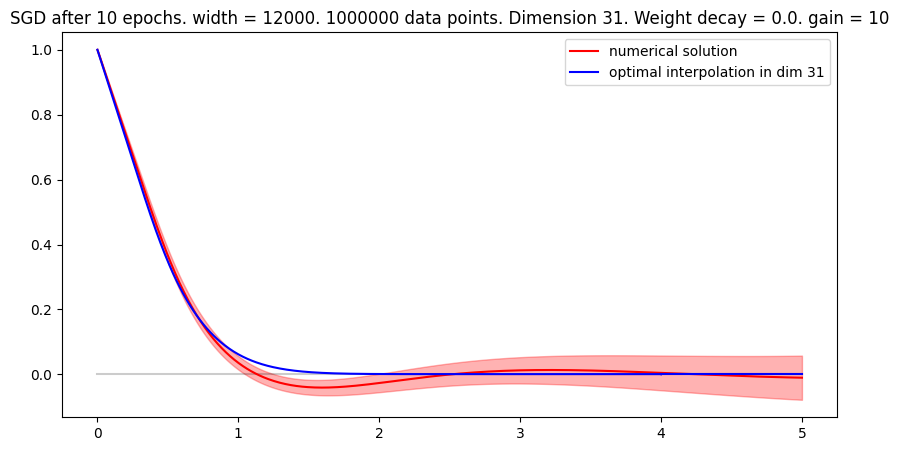}

    \includegraphics[width=0.33\textwidth]{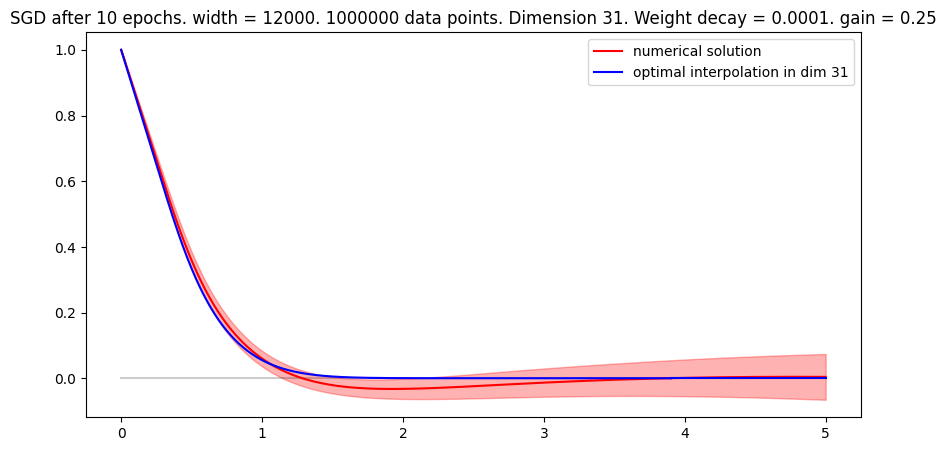}\hfill
    \includegraphics[width=0.33\textwidth]{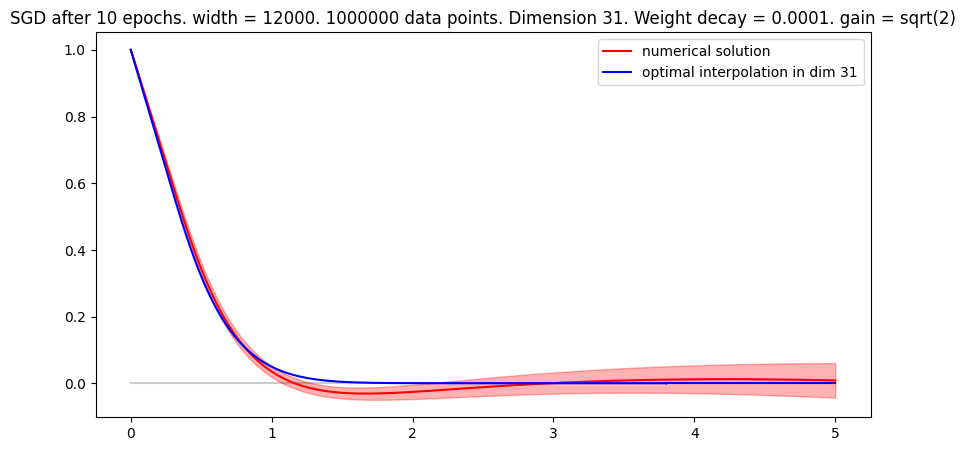}\hfill
    \includegraphics[width=0.33\textwidth]{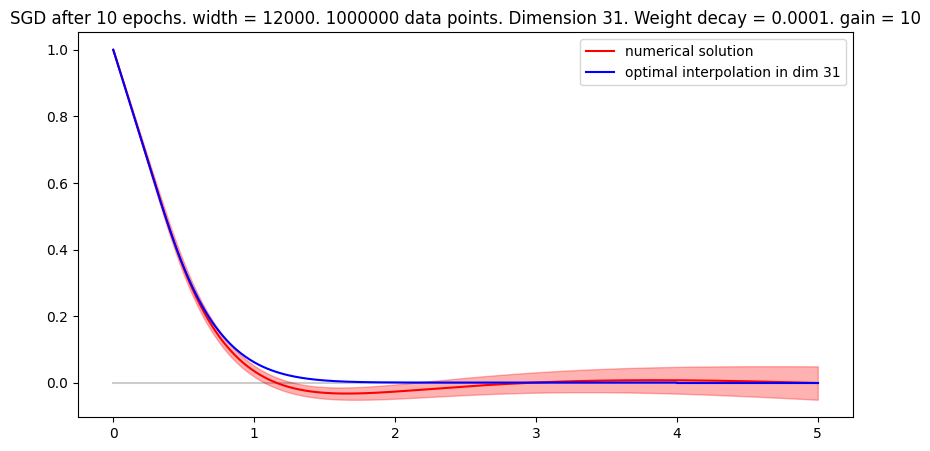}

    \caption{ 
    We vary initialization scale ($\alpha \in \{0.25, \sqrt 2, 10\}$ from left to right) and consider training without weight decay (top row) and with weight decay $\lambda = 10^{-4}$ (bottom row).  
    The rescaling factors are chosen to be $1/3.9$ for $\alpha=0.25$, $1/3.8$ for $\alpha =\sqrt{2}$, and $1/4$ for $\alpha=10$. 
    }
    \label{fig: gain experiment}
\end{figure}

Plots for a single representative run are displayed in Figure \ref{fig: gain experiment}. The explicit regularizer has the clearest effect in the high gain regime, where explicit regularization helps to achieve a better fit with the optimal transition curve and reduces the radial standard variation. Results were less sensitive to poor initialization than the corresponding experiments in one dimension. We conjecture that a higher degree of rigidity is introduced in this setting by the fact that there exists a {\em unique} minimum norm interpolant.

\subsection{He initialization}\label{appendix he initialization}

All experiments so far were performed with the initialization scaling of \citet{glorot2010understanding}. Especially for deeper neural networks, the initialization scheme of \citet{he2015delving} is very popular. \citet{hanin2018start} proves in particular that \citet{he2015delving}'s normalization avoids the vanishing and exploding gradients phenomenon at initialization in expectation. While this consideration does not apply to our shallow networks, we find it informative to compare the two schemes.

For shallow neural networks 
\[
f_m:\R^d\to\R, f_m(x) = \sum_{i=1}^ma_i\,\sigma(w_i\cdot x+b_i)
\]
the effect of initialization is as follows:
\begin{enumerate}
    \item According to \citet{glorot2010understanding}, the parameters $a_i$ and $w_{ij}$, $1\leq i\leq m$ and $1\leq j\leq d$ are chosen as random variables with mean $0$ and standard deviation $\sqrt{{2}/{(m+1)}}$ for $a_i$ and $\sqrt{{2}/({m+d})}$ for $w_{i,j}$. In particular, if $m$ is much larger than $d$, then the $|a_i|\,\|w_i\| = O(1/m)$. As we add $m$ terms of magnitude $\sim 1/m$, we consider this the `law of large numbers' scaling.
    
    \item According to \citet{he2015delving}, the parameters $a_i$ and $w_{ij}$, $1\leq i\leq m$ and $1\leq j\leq d$ are chosen as random variables with mean $0$ and standard deviation $\sqrt{{2}/{m}}$ for $a_i$ and $\sqrt{{2}/{d}}$ for $w_{i,j}$. In particular, if $m$ is much larger than $d$, then the $|a_i|\,\|w_i\| = O(1/\sqrt{m})$. As we add $m$ terms of mean zero and magnitude $\sim 1/\sqrt{m}$, we consider this the `central limit theorem' scaling.
\end{enumerate}

Many authors, such as \citet{sirignano2020mean, sirignano2020mean2, sirignano2019scaling}, present the factor $1/m$ or $1/\sqrt m$ explicitly outside the neural network.
As observed above, the effect of initialization can be significant. Unsurprisingly, results in the central limit regime, where all neurons contribute similarly at initialization, are more consistent and predictable. Experimental results are presented in Figure \ref{figure he init 1d} in the one-dimensional setting and in Figures \ref{figure he init 31d} and \ref{figure he init dims 3 and 15} in the case of radial symmetry. Notably, in high dimension, explicit regularization not only reduced radial variation, but also increased data compliance by reducing the rescaling factor $r_d$. In this setting, we observe the benefits of explicit regularization over relying on implicit bias only.

\begin{figure}
    \centering
    $ $\hfill
    \includegraphics[width=0.4\textwidth]{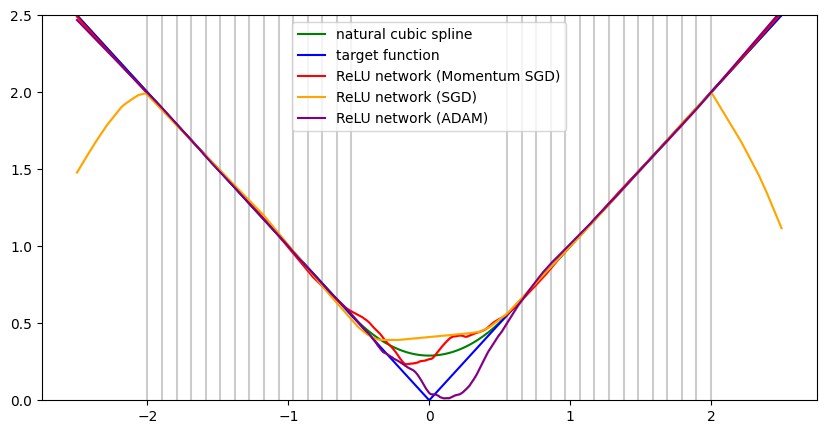}
    \hfill 
    $ $
    \includegraphics[width=0.4\textwidth]{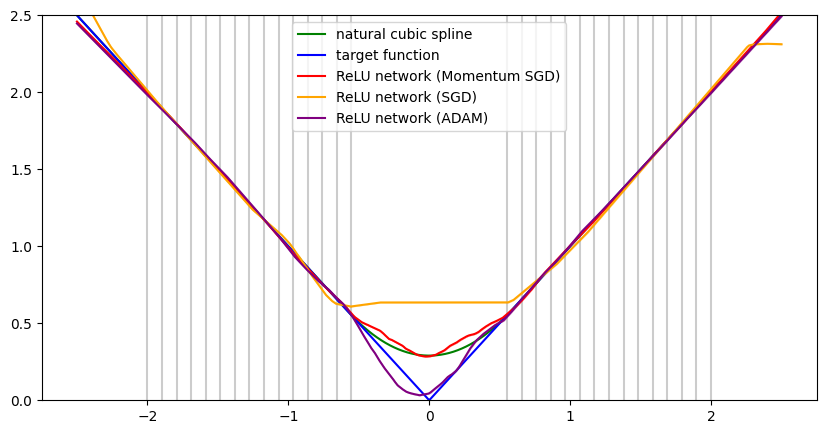}
    \hfill $ $
    \caption{Experiments for He initialization in one Dimension in the same setting as Figure \ref{fig: 1d experiment} with Xavier initialization. Left: No explicit regularization, right: Weight decay regularization $\lambda = 0.002$. Even for Glorot initialization with large gain, this regularizer was sufficient to induce convexity. For He initialization, it has a notable regularizing effect, but it is insufficient to impose convexity.
We observe greater deviation from a linear function in the small intervals between known data points on either side of the big `gap'.}
    \label{figure he init 1d}
\end{figure}

\begin{figure}
    \centering
    $ $\hfill
    \includegraphics[width=0.45\textwidth]{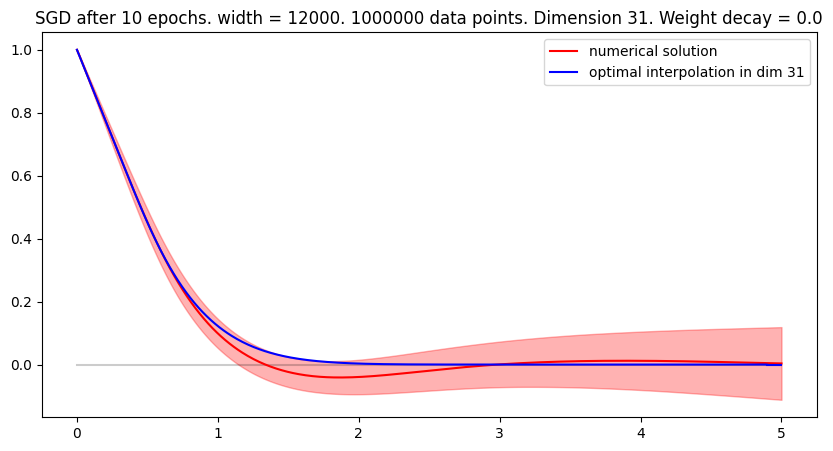}\hfill
    \includegraphics[width=0.45\textwidth]{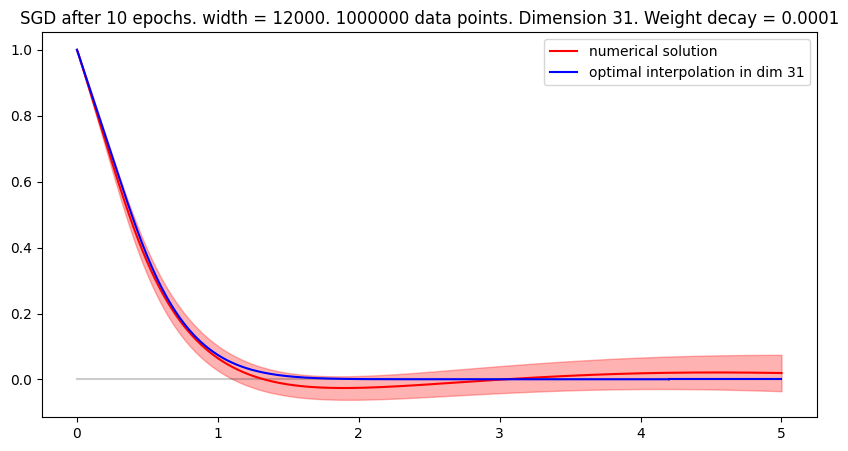}
    \hfill $ $

    $ $\hfill
    \includegraphics[width=0.45\textwidth]{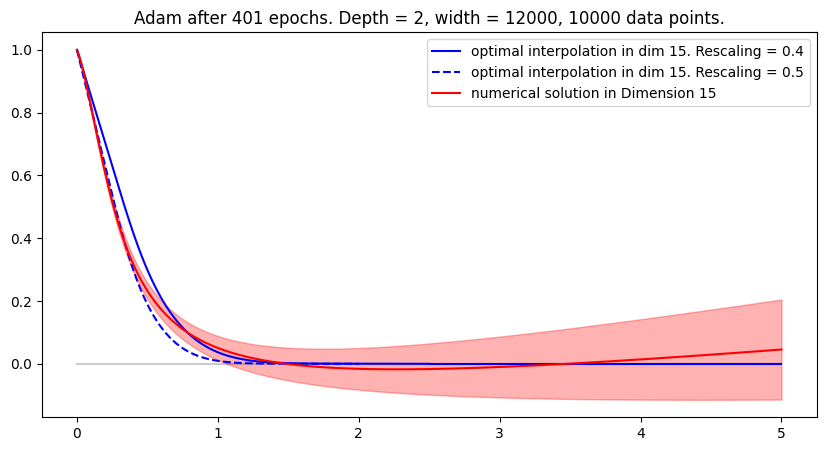}\hfill
    \includegraphics[width=0.45\textwidth]{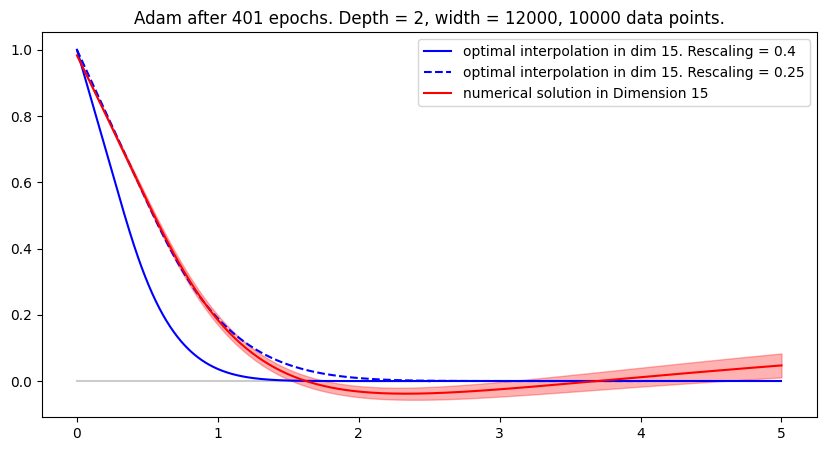}
    \hfill $ $

    \caption{{\bf Top row:} Experiments for He initialization in dimension $31$ in the same setting as Appendix \ref{appendix initialization}. \textbf{Left}: No explicit regularization with a rescaling factor $r_d = 1/4.9$, \textbf{Right}: Weight decay regularization with $\lambda = 10^{-4}$ and rescaling factor $r_d = 1/4.2$. We observe that under He initialization the effects of the explicit regularizer is even more pronounced. Unlike in other experiments, the presence of regularization {\em increases} the rescaling factor and thus improves the fit to training data (for the radial profile).\\
    {\bf Bottom row:} We repeat the same as above with MSE loss rather than $\ell^1$-loss, using the Adam optimizer with learning rate $10^{-5}$ instead of SGD with momentum, and using an overparametrized rather than underparametrized neural network. Without explicit regularization (left), the radial standard deviation is substantial, while explicit regularization leads to a more radially symmetric function, albeit at the price of a higher rescaling factor. For easy comparison to Figure \ref{figure momentum}, we present the optimal profile as rescaled in the main document as well. Both functions achieve training loss $<10^{-3}$, but clearly generalization is poor without regularization: While the radial average is close to the target function $f^*\equiv 0$ for inputs with $\|x\|\geq 1$, the radial variation is high.}
    \label{figure he init 31d}
\end{figure}

\begin{figure}
    \centering
    \includegraphics[width=0.33\textwidth]{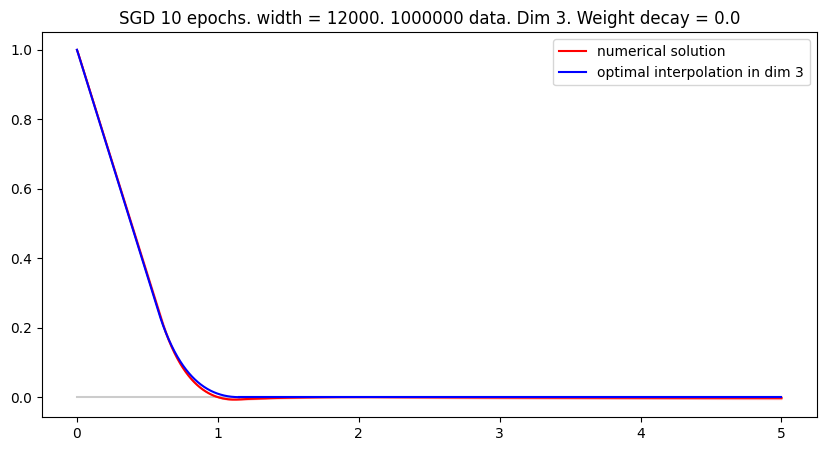}\hfill
    \includegraphics[width=0.33\textwidth]{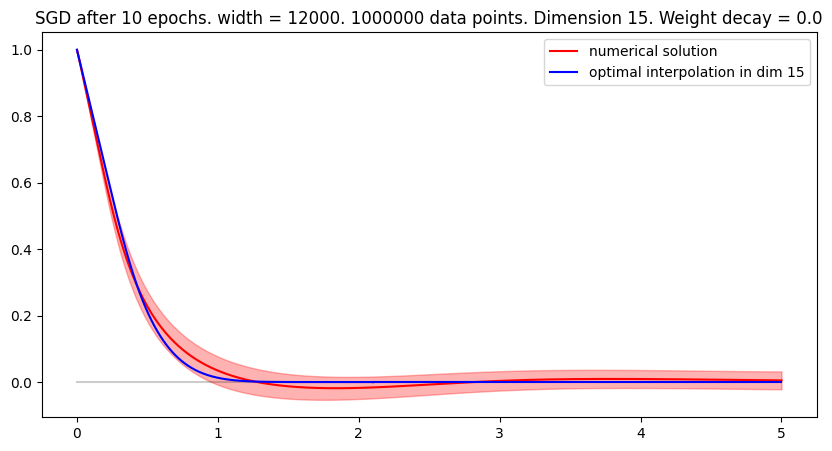}\hfill
    \includegraphics[width=0.33\textwidth]{figures/Jiyoung/he_init/dim31_he_wd0_l1_dil49.png}

    \includegraphics[width=0.33\textwidth]{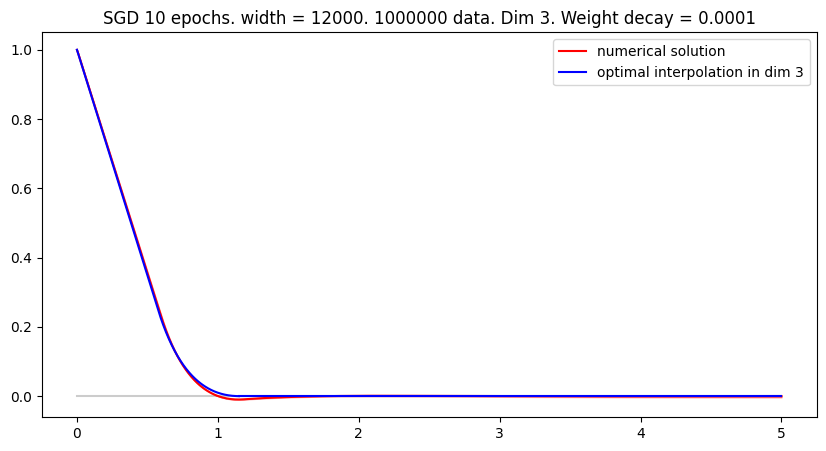}\hfill
    \includegraphics[width=0.33\textwidth]{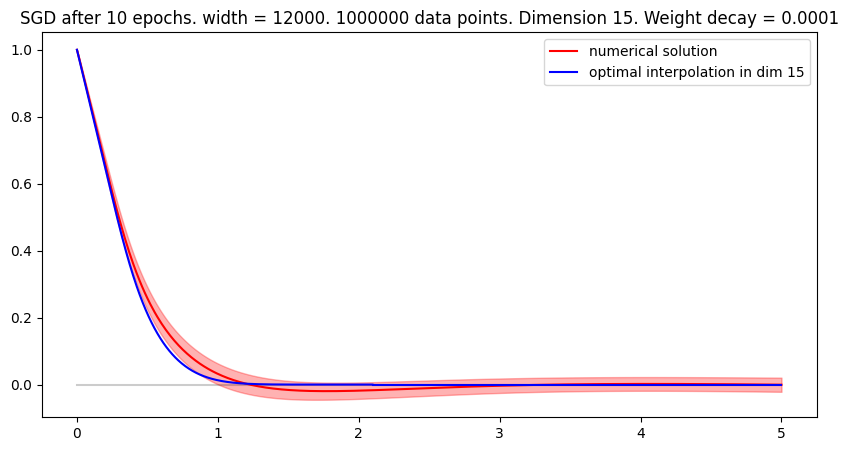}\hfill
    \includegraphics[width=0.33\textwidth]{figures/Jiyoung/he_init/dim31_he_wd00001_l1_dil42.png}
    \caption{{We examine the effects of the explicit regularizer (weight decay penalty $\lambda = 0$ for the top row and $\lambda = 10^{-4}$ for the bottom row) while varying dimensions ($d = 3, 15, 31$ from left to right) in the He initialization scheme. Optimizer settings were identical to the top row in Figure \ref{figure he init 31d}. The rescaling factors were $r_{3} = 1/1.15$, $r_{15} = 1/2.1$, and $r_{31} = 1/4.2$. As dimension grows implicit bias may be insufficient to find a minimum norm interpolant shape with reasonable scaling factor and may not enforce radial symmetry. In this case explicit regularizer may have an advantage.}}
    \label{figure he init dims 3 and 15}
\end{figure}

\subsection{Linearized dynamics}\label{appendix ntk}

Parameter optimization in neural networks depends heavily on the choice of initialization. While the dynamics are truly non-linear in the regime studied by  \citet{chizat2018global, rotskoff2018neural, mei2018mean, sirignano2020mean} and \citet{wojtowytsch2020convergence}, there are scalings for which the directions $w_i$ remain close to their initialization for all time -- see e.g.\ \citep{weinan2019comparative} for a derivation. In this case, the solution produced by a neural network is similar to that of a random feature model. In this section, we numerically find the minimum norm interpolant of a random feature model by `freezing' the inner layer coefficients at their random initialization. We find that the random feature solution differs geometrically from the Barron space solution, e.g.\ in that it is smooth at the origin, where the Barron space solution has a cone-like singularity of the form $f(x) = 1 - c_d\|x\|_2$ for small $x$. In particular, we find that our experiments were set appropriately in the non-linear training regime. See Figure \ref{figure ntk}.

\begin{figure}
    \centering
    
    \includegraphics[width=0.32\textwidth]{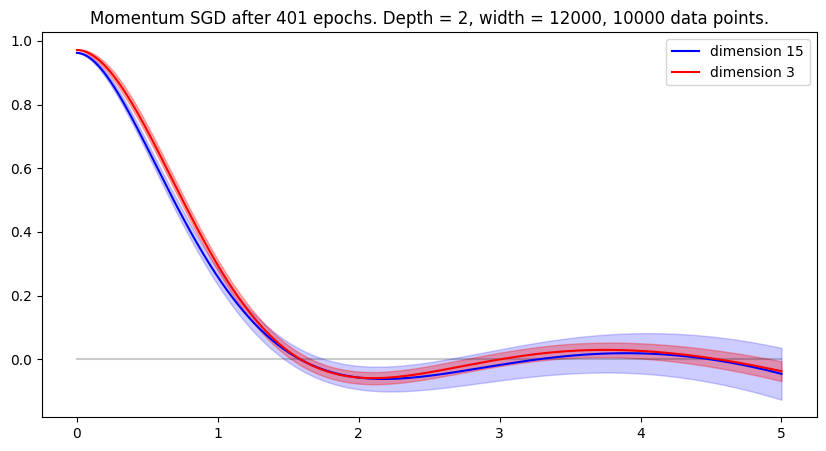}
    \hfill
    \includegraphics[width=0.32\textwidth]{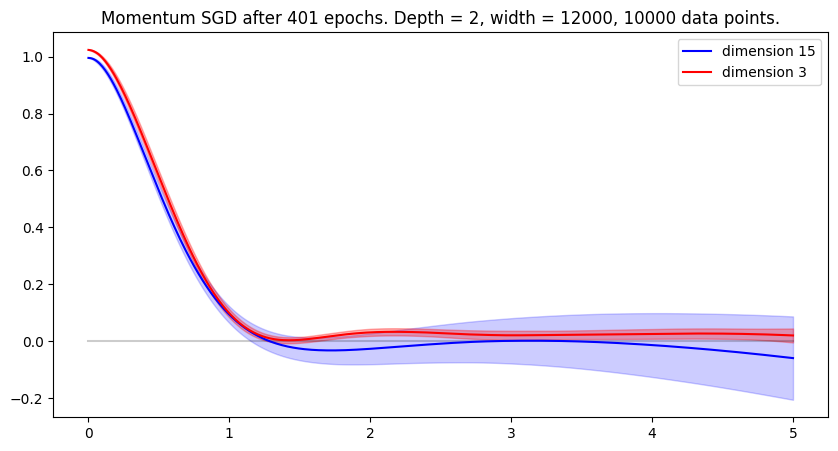}
    \hfill
    \includegraphics[width=0.32\textwidth]{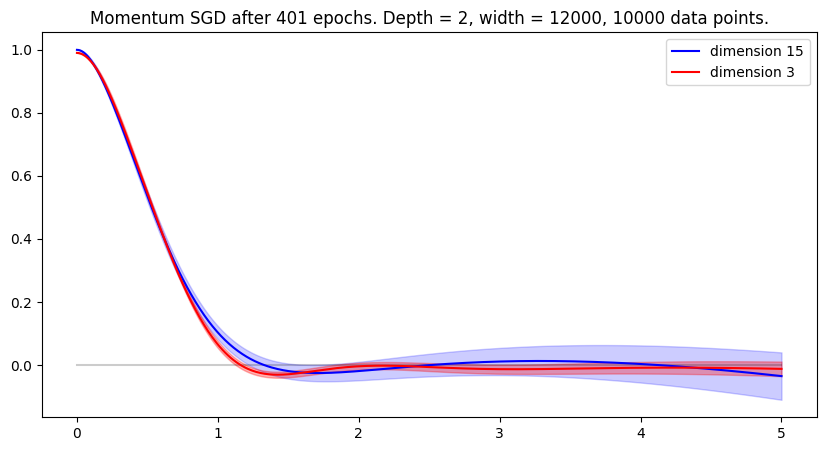}

    \caption{A random feature model trained on the same dataset as the neural networks. The solutions produced this way are geometrically distinct from neural network solutions as they are `flat' at the origin. The left two figures correspond to different initializations: Gain $\alpha=\sqrt{2}$ (left) and gain $\alpha=5$ (right). Notably, the variation is higher in radial direction in high dimension and higher than for the comparable neural network model. Perhaps surprisingly, higher gain appears to induce a better implicit bias in this case. No explicit regularization was used. Here we initialized the random feature by the same law as the neural network rather than initializing the outer layer at zero, since our goal is to study neural network dynamics, not find the optimal random feature solution. For the right plot, the initialization was random normal with gain 5 in the inner layer and zero in the outer layer with unsurprisingly better results.}
    \label{figure ntk}
\end{figure}

\subsection{Leaky ReLU activation}\label{appendix leaky}

As noted by \citet{wojtowytsch2022optimal}, minimum norm interpolation is not stable when passing to an equivalent norm. A Barron space theory can be developed in perfect analogy for networks with the leaky ReLU activation function, and it is easy to see that the `Barron' spaces for both activation functions coincide with equivalent norms, depending on the negative slope of the leaky ReLU function. However, the minimum norm interpolant $f_d^*$ with respect to the ReLU-based Barron-norm is not guaranteed to coincide with the minimum interpolant for the leaky-ReLU-based Barron norm. Experimentally, however, we observe strong agreement between the geometry of numerical solutions here.

\begin{figure}
    \centering
    \includegraphics[width=.33\textwidth]{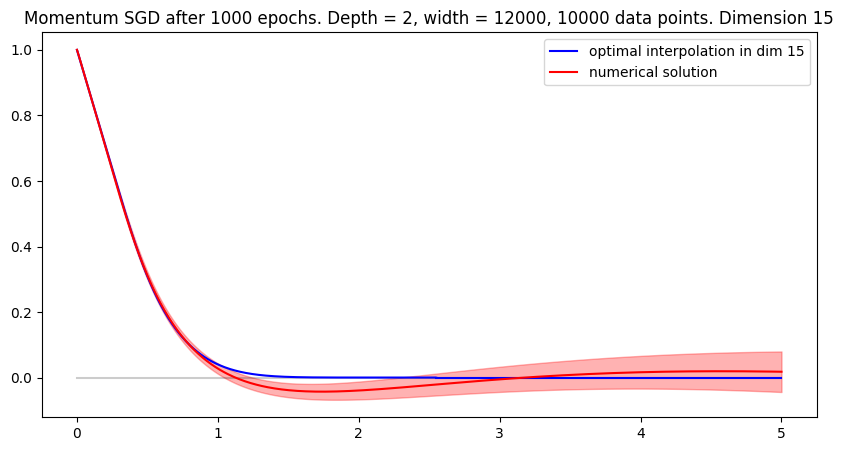}\hfill
    \includegraphics[width=.33\textwidth]{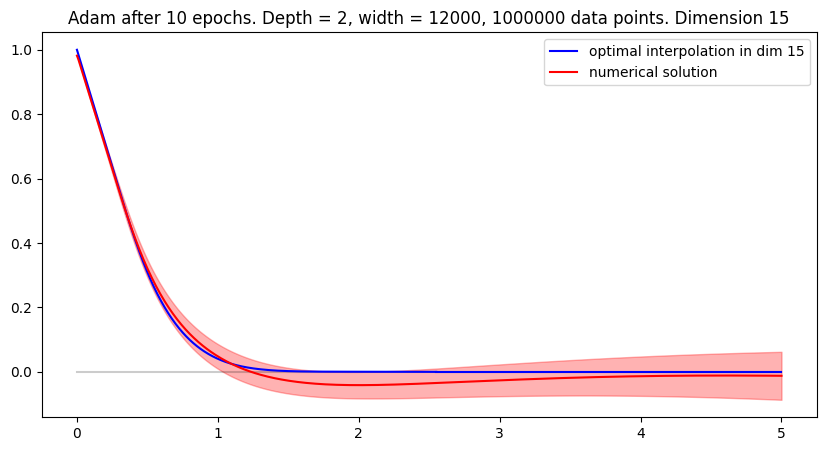}\hfill
    \includegraphics[width=.33\textwidth]{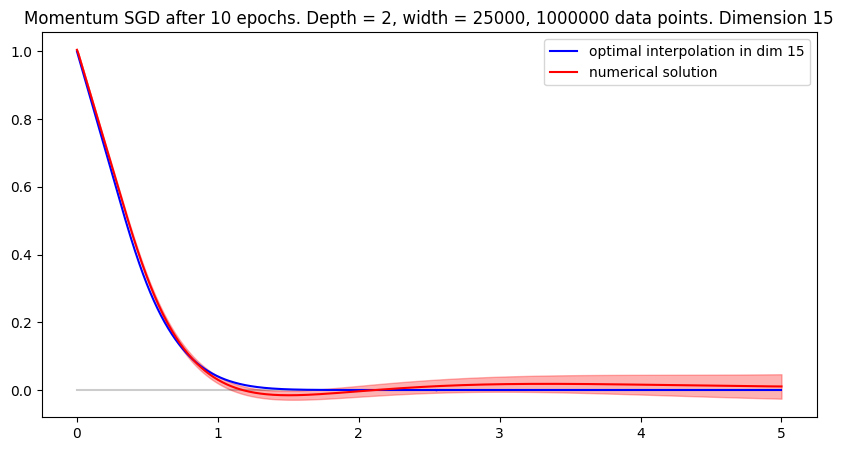}

    \caption{Neural networks with a single hidden layer and leaky ReLU activation $\sigma(z) = \max\{z,0\} + 0.1\,\min\{z,0\}$ trained in the setting of Section \ref{section numerical radial}. Without theoretical foundation, we observe that the shape of $f_d^*$ is attained to high accuracy also in this setting with the same rescaling factors as in the ReLU setting. {\bf Left:} Momentum-SGD, {\bf Middle:} Adam, {\bf Right:} Momentum-SGD for a wider network with $m=25,000$.}
    \label{figure leaky ReLU}
\end{figure}

\begin{figure}
    \centering
    \includegraphics[width = .245\textwidth]{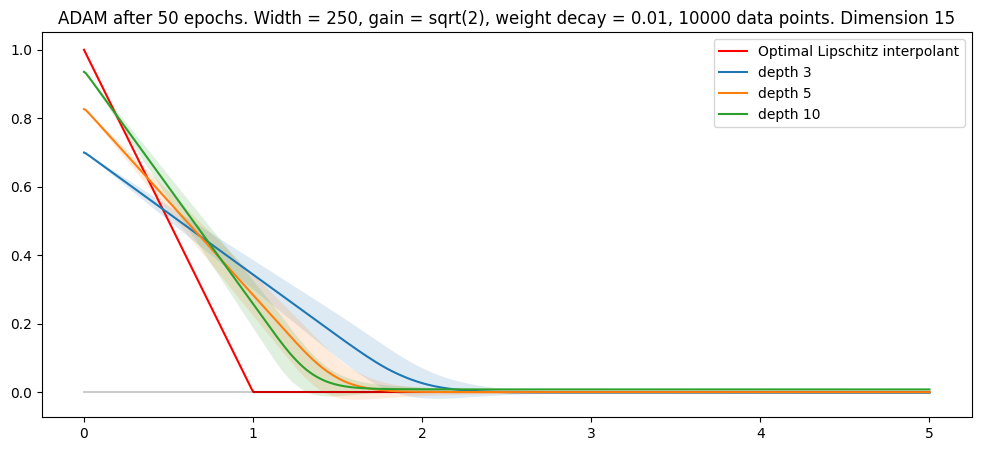}\hfill
    \includegraphics[width = .245\textwidth]{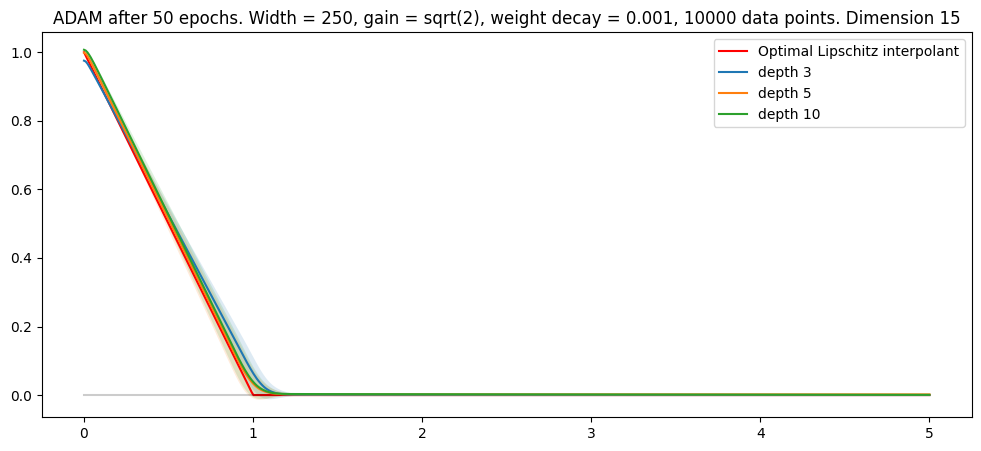}\hfill
    \includegraphics[width = .245\textwidth]{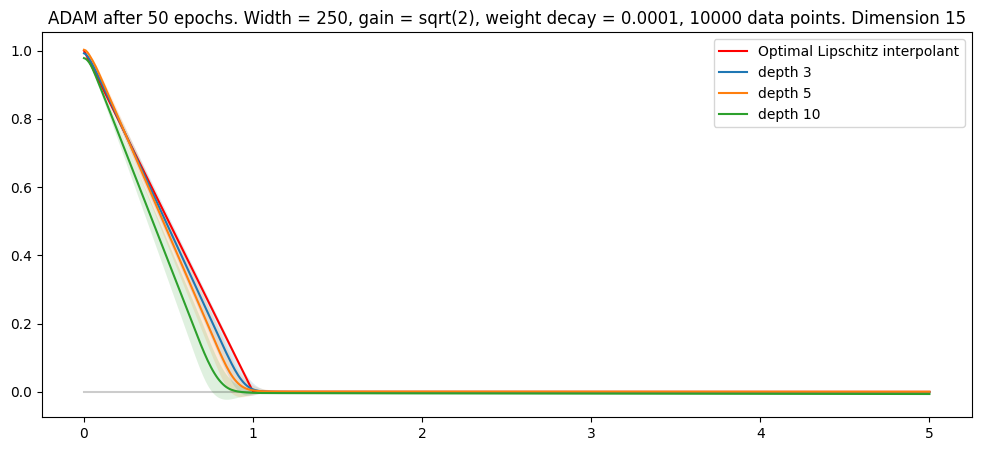}\hfill
    \includegraphics[width = .245\textwidth]{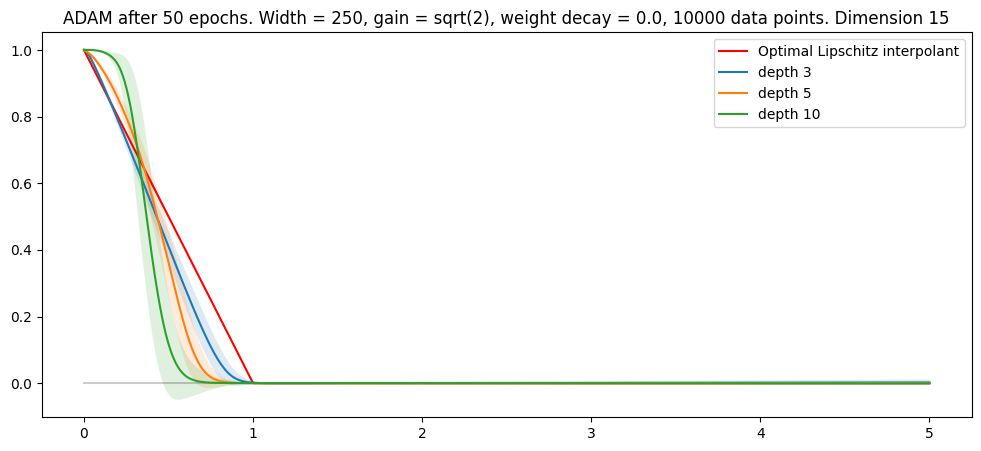}
    \caption{Neural networks of width 250 and varying depth were trained to fit data generated as in Section \ref{section numerical radial} with weight decay regularizers $10^{-2}, 10^{-3}, 10^{-4}$ and $0$ (left to right). The initialization gain variable was chosen as $\alpha = \sqrt{2}$ as required to avoid exploding and vanishing gradients in deeper networks. Evidently, a small amount of weight decay regularization provides useful geometric prior without diminishing the quality of data fit.\\
    Unlike networks with one hidden layer, deeper networks have positive (or nearly positive) outputs everywhere. While two-layer networks follow the minimum norm interpolation shape closely by the origin and have radial variances which increase outside the unit ball, deeper networks have positive radial variance inside the unit ball, but are essentially radially symmetric outside -- compare e.g.\ Figure \ref{figure appendix momentum}.}
    \label{figure deeper}
\end{figure}

\subsection{Deeper neural networks}\label{appendix deeper}

We train neural networks of depth $L>2$ to fit the same radially symmetric data as in Section \ref{section numerical radial}. We see in Figure \ref{figure deeper} that for depth $L\geq 3$, weight decay-regularized networks strongly resemble the interpolant $f_{Lip}(x) = \max\{1-\|x\|_2,0\}$ with minimal (Euclidean) Lipschitz constant. This function can be written as a composition of two Barron functions $f = f_{bump} \circ f_{norm}$
\[
 f_{bump}(z) = \max\{1-z,0\} = \sigma(1-z), \qquad f_{norm}(x) = \|x\|_2 = c_d\,\E_{\nu\sim\pi^0}\big[\sigma(\nu\cdot x)\big]
\]
where $\pi^0$ denotes the uniform distribution on the unit sphere and $c_d\sim \sqrt d$ is a dimension-dependent constant. We can thus approximate $f_{Lip}$ efficiently by neural networks of depth $L\geq 3$ as long as the first layer is sufficiently wide.

Unlike their shallow counterparts, neural networks with multiple hidden layers have no strong geometric prior without weight decay regularization. With weight decay, the observed behavior was relatively stable over a range of dimensions, initializations scalings and optimization algorithms. We are led to conjecture that $f_{Lip}$ is a minimum norm interpolant in this setting. The statement remains imprecise at this point as no function space theory for deeper networks with weight decay regularizer has been developed to the same extent as Barron space theory.

\stopcontents[appendix a]

\section{\texorpdfstring{$\Gamma$}{Gamma}-convergence}\label{appendix gamma-convergence}

In this appendix, we recall the definition and a few properties of $\Gamma$-convergence, a popular notion of the convergence of functionals introduced by \citet{de1975tipo} in the calculus of variations to study the convergence of minimization problems. \citet{MR1968440,
dal2012introduction} provide introductions to the theory and its applications. As the notion is likely not familiar to readers from the machine learning community, we provide some full proofs as well.

\begin{definition}
Let $(X,d)$ be a metric space and $F_n, F:X\to \R\cup\{-\infty, \infty\}$ be functions. We say that $F_n$ converges to $F$ in the sense of $\Gamma$-convergence if two conditions are met:
\begin{enumerate}
    \item ($\liminf$-inquality) If $x_n$ is a sequence in $X$ and $x_n\to x$, then $\liminf_{n\to\infty} F_n(x_n) \geq F(x)$.
    \item ($\limsup$-inequality) For every $x\in X$, there exists a sequence $x_n^*\in X$ such that $x_n^*\to x$ and $\limsup_{n\to \infty} F_n(x_n^*)\leq F(x)$.
\end{enumerate}
\end{definition}

Intuitively, the first condition means that $F(x)$ is (almost) a lower bound for $F_n(x_n)$ if $n$ is `large' and $x_n$ is `close' to $x$, while the second condition means that there is no larger lower bound that we could choose. The sequence $x_n^*$ is often referred to as a `recovery sequence'. Of course, combining the liminf- and limsup-inequalities, we find that in fact $F_n(x_n^*) \to F(x)$. We employ $\Gamma$-convergence when dealing with minimization problems where uniform convergence fails, but we hope for convergence of minimizers to minimizers.

Often, $\Gamma$-convergence is considered as a continuous parameter $\eps$ approaches $0^+$ rather than as the discrete parameter $n$ approaches infinity. The definitions remain largely identical (with obvious substitutions).

To get a feeling for $\Gamma$-convergence, we consider a particularly simple situation by looking at two constant sequences of functions. Note that the sequence is constant, not the functions.

\begin{eg}\label{example gamma-convergence}
Let $X= \R$ and consider the constant sequences
\[
F_n(x) = f(x) = \begin{cases} 1 &x\neq 0\\ 0 &x= 0\end{cases}, \qquad G_n(x) = g(x) = \begin{cases} 0 &x\neq 0\\ 1 &x= 0\end{cases}.
\]
We claim that
\[
\big(\Gamma-\lim_{n\to\infty}F_n\big)(x) = f(x), \quad \big(\Gamma-\lim_{n\to\infty}G_n\big)(x) = 0 \qquad \forall\ x\in \R.
\]
If $x_n\to x$ and $x\neq 0$, then $F_n(x_n) = 1$ and $G_n(x_n) = 0$ for all but finitely many $n\in \mathbb N$, meaning that $F_n(x_n)\to 1=f(x)$ and $G_n(x_n)\to 0$. It remains to consider the case $x_n\to 0$. 

We see immediately that $F_n(x_n) \geq 0= f(0)$ for all $n\in \mathbb N$. Conversely, if we take $x_n^*=0$ for all $n$, then $x_n^*\to 0$ and $F_n(x_n^*) = f(0) \to f(0)$. In total, we conclude that $\Gamma-\lim F_n = f$.

For $G_n$, we find that $G_n(x_n) \geq 0$ for all $n\in \N$. Additionally, we can choose the sequence $x_n = 1/n$ such that $G_n(x_n) =0$ for all $n\in \N$. Altogether, we find that $\Gamma-\lim G_n = 0$.
\end{eg}

More generally, if $F_n = F$ for all $n\in\N$ and some $F:X\to\R$, then $\Gamma-\lim_{n\to \infty} F_n = \overline{F}$ is the lower semi-continuous envelope of $F$. In particular, $\Gamma-\lim_{n\to\infty} F = F$ if and only if $F$ is lower semi-continuous.
The main useful properties of $\Gamma$-convergence are summarized in the following lemma.

\begin{lem}\label{lemma gamma-convergence and minimizers}
Assume that $F_n\to F$ in the sense of $\Gamma$-convergence, $\eps_n\to 0^+$ and $x_n\in X$ is a sequence such that
\[
F_n(x_n) \leq \inf_{x\in X} F_n(x) + \eps_n.
\]
Assume that $x_n\to x^*$. Then $F(x^*) = \inf_{x\in X} F(x)$. In particular, if $x_n$ is a minimizer of $F_n$ and the sequence $x_n$ converges, then the limit point is a minimizer of $F$.
\end{lem}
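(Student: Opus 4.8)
\textbf{Proof plan for Lemma \ref{lemma gamma-convergence and minimizers}.}

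The plan is to combine the two halves of the definition of $\Gamma$-convergence: the $\liminf$-inequality controls $F(x^*)$ from below against the values $F_n(x_n)$, and the $\limsup$-inequality (the recovery sequence) lets us test against an arbitrary competitor to get an upper bound. Concretely, fix an arbitrary $y\in X$; the goal is to show $F(x^*)\le F(y)$, which since $y$ was arbitrary gives $F(x^*)=\inf_X F$.

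First I would apply the $\liminf$-inequality to the convergent sequence $x_n\to x^*$: this yields
\[
F(x^*)\le\liminf_{n\to\infty}F_n(x_n).
\]
Next I would use the almost-minimality hypothesis $F_n(x_n)\le\inf_{x\in X}F_n(x)+\eps_n$ together with a recovery sequence for the fixed competitor $y$: by the $\limsup$-inequality there exists $y_n^*\to y$ with $\limsup_{n\to\infty}F_n(y_n^*)\le F(y)$. Since $\inf_{x\in X}F_n(x)\le F_n(y_n^*)$, we get
\[
F_n(x_n)\le F_n(y_n^*)+\eps_n.
\]
Taking $\liminf$ on the left and $\limsup$ on the right (and using $\eps_n\to0$), I would chain these together:
\[
F(x^*)\le\liminf_{n\to\infty}F_n(x_n)\le\limsup_{n\to\infty}\big(F_n(y_n^*)+\eps_n\big)=\limsup_{n\to\infty}F_n(y_n^*)\le F(y).
\]
Since $y\in X$ was arbitrary, $F(x^*)\le\inf_{x\in X}F(x)$, and the reverse inequality is trivial, so $F(x^*)=\inf_{x\in X}F(x)$. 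The final sentence of the statement is the special case $\eps_n=0$.

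There is no real obstacle here — the only minor care needed is handling the extended-real-valued setting cleanly (the inequalities above remain valid when values are $\pm\infty$, with the usual conventions), and noting that the existence of the recovery sequence $y_n^*$ is guaranteed for \emph{every} point $y$, including points where $F(y)=+\infty$, in which case the bound is vacuous. The argument uses $\Gamma$-convergence exactly once in each direction, which is the point of the lemma: it is precisely the property that makes $\Gamma$-convergence the `right' notion of convergence for minimization problems.
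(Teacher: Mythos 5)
Your proposal is correct and follows essentially the same argument as the paper: the $\liminf$-inequality applied to $x_n\to x^*$ gives the lower bound, and a recovery sequence for an arbitrary competitor combined with the $\eps_n$-almost-minimality gives the upper bound. The only cosmetic difference is that the paper routes the comparison through the intermediate quantity $\liminf_{n\to\infty}\inf_{x\in X}F_n(x)$, whereas you chain the inequalities directly; the content is identical.
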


Clearly, this is most useful if we can guarantee that the sequence $x_n$ converges. For many useful sequences of functionals, the existence of a convergent subsequence can be established by compactness. This is easily sufficient, as we can also pass to a subsequence in $F_n$. 

\begin{proof} 
Due to the liminf-inequality, we have 
\[
F(x^*) \leq \liminf_{n\to \infty} F_n(x_n) = \liminf_{n\to\infty} \inf_{x\in X} F_n(x).
\]
On the other hand, let $x\in X$ be any point. Then, due to the limsup-inequality, there exists some sequence $x_n'$ such that 
\[
x_n' \to x, \qquad F(x) = \lim_{n\to \infty} F_n(x_n') \geq \liminf_{n\to \infty} \inf_{x\in X} F_n(x).
\]
In particular $\inf_{x\in X} F(x) \geq \liminf_{n\to \infty} \inf_{x\in X} F_n(x)$. Combining the two estimates, we find that $F(x^*) \leq \inf_{x\in X} F(x)$, which means that $x^*$ is a minimizer of $F$.
\end{proof}

For completeness, a few observations are in order.

\begin{enumerate}
    \item The notion of $\Gamma$-convergence relies on the notion of convergence on the underlying space $X$, and $\Gamma$-limits can change when keeping the set $X$ fixed, but passing to a different topology (e.g.\ a weak topology in infinite-dimensional spaces). 
    
    \item $\Gamma$-convergence is made for minimization problems, and it does not behave well under multiplication by negative real numbers: In general $\Gamma-\lim (-F_n) \neq - (\Gamma-\lim F_n)$, even if both limits exist. The reason is the asymmetry between the $\liminf$- and the $\limsup$-condition. To see this,  consider for instance $F_n$ and $G_n-1$ in Example \ref{example gamma-convergence}. 
    
    \item If $F_n\to F$ and $G_n\to G$, it is not necessarily true that $F_n+G_n\to F+G$. While it remains true that $\liminf_{n\to\infty} (F_n+G_n)(x_n)\geq (F+G)(x)$ if $x_n\to x$, it may no longer be possible to find a recovery sequence $x_n$ for $F_n+G_n$. For example, if $F_n = 1_{\mathbb Q}$ and $G_n = 1_{\mathbb R\setminus \mathbb Q}$ for all $n\in \N$, then $F_n \xrightarrow{\Gamma} 0$ and $G_n\xrightarrow{\Gamma}0$, but $F_n + G_n = 1$ for all $n$ and $F_n+ G_n \xrightarrow{\Gamma}1$.
    
    However, if $F_n\xrightarrow{\Gamma}F$ and $G_n$ converges to a {\em continuous} limit $G$ {\em uniformly}, then $(F_n+G_n)\xrightarrow{\Gamma} F+G$. In particular, uniform convergence implies $\Gamma$-convergence. Namely, if $G_n\to G$ uniformly, $G$ is continuous and $x_n\to x$, then for given $\eps>0$, we can choose $N\in\N$ so large that
    \begin{enumerate}
          \item $|G(x_n) - G(x)|< \eps/2$ for all $n\geq N$ since $G$ is continuous at $x$ and
          \item $|G_n(x_n) - G(x_n)| < \eps/2$ for all $n\geq N$ due to uniform convergence.
    \end{enumerate}
    Then
    \[
    \big|G_n(x_n) - G(x)\big| \leq \big|G_n(x_n) - G(x_n)\big| + \big| G(x_n) - G(x)\big| < \eps.
    \]
    In particular $G_n(x_n) \to G(x)$. Recall that $G$ is guaranteed to be continuous if $G_n$ is continuous for all $n\in\N$.

    \item $\Gamma$-convergence is unrelated to pointwise convergence of functions: Neither does it imply pointwise convergence, nor is it implied by it. Namely, the sequence $G_n$ in Example \ref{example gamma-convergence} has the function $g$ as a pointwise limit and the constant function $0$ as a $\Gamma$-limit.

    \item $\Gamma$-convergence is not a notion of convergence derived from a topology. Indeed, even if $F_n$ is a constant sequence, i.e.\ if $F_n = G$ for all $n$, it may happen that $\Gamma-\lim_{n\to\infty}F_n \neq G$ (see $G_n$ in Example \ref{example gamma-convergence}). The $\Gamma$-limit is related to $G$, though: It is the lower semi-continuous envelope of the function $G$. In fact, every $\Gamma$-limit is lower semi-continuous. 
\end{enumerate}

Despite its somewhat counterintuitive properties, $\Gamma$-convergence has proved invaluable in many areas of the calculus of variations. It has been applied to homogenization by \citet{bach2021gamma}, dimension reduction for thin sheets and shells by \citet{friesecke2002rigorous,friesecke2003derivation, friesecke2002theorem, bhattacharya2016plates, lewicka2010shell} and the study of  phase boundaries by \citet{MR0445362,modica:1987us}. While the $\Gamma$-convergence of functionals does not imply the convergence of their gradient flows even in situations of practical significance (see e.g.\ the example of \citet{dondl2019effect}), \citet{serfaty2011gamma, sandier2004gamma, MR2952409, MR1237490, MR1308851} provide important examples of situations where this can be established in a suitable sense. Even more, \citet{MR1075075} use $\Gamma$-convergence to gain insight into PDE dynamics.

\section{Homogeneous Barron spaces}\label{appendix barron review}

In this section, we introduce the abstract framework which is used to prove our main theoretical result, Corollary \ref{thm main}.
A neural network with $m$ neurons in a single hidden layer can be represented as
\begin{equation}\label{eq finite two-layer network}
f_m(x) = b_0 + \sum_{i=1}^m a_i \,\sigma(w_i^Tx+b_i)\qquad\text{or}\quad f_m(x) = b_0 + \frac1m \sum_{i=1}^m a_i\,\sigma(w_i^Tx+b_i).
\end{equation}
The network weights and biases are $(a,W,b)\in \R^m\times \R^{m\times d}\times \R^{m+1}$.
The normalization depends on personal preference, with the former being more common in practice and the latter more common in theoretic analyses. We define the weight decay regularizer by 
\[
\Reg(a,W,b) = \frac{\|a\|_{\ell^2}^2 + \|W\|_F^2}2 = \frac12 \left(\sum_{i=1}^m a_i^2 + \sum_{i=1}^m\sum_{j=1}^d w_{ij}^2\right)
\]
or $\Reg(a,W,b) = \frac1{2m}\sum_{i=1}^m \big(a_i^2 + \|w_i\|_{\ell^2}^2\big)$ respectively. Here $\|\cdot\|_2$ denotes the Euclidean $\ell^2$-norm of a vector and $\|W\|_F$ denotes the Frobenius norm of the matrix $W$ whose rows are the vectors $w_i^T$. Note that we do not control the magnitude of the biases $b_i$ in the regularizer. This is a common approach, as the bias does not influence the Lipschitz-constant of the function represented by the neural network, which is useful in studying the generalization of the neural network. 

We study function classes corresponding to arbitrarily wide neural networks with a single hidden layer, where the norm corresponds to the weight decay regularizer. We dub these function spaces `homogeneous Barron spaces' in analogy to the more classical Barron spaces studied by \citet{weinan2019lei,ma2020machine,barron_new}, which correspond to a weight decay regularizer which also controls the bias. For homogeneous Barron spaces, coordinate transformations by Euclidean motions induce an isometry of the function class, while the origin plays a special role in classical Barron spaces. This justifies our terminology, as the data space is treated as isotropic and homogeneous by this function class.
Homogeneous Barron spaces have also been studied by \citet{ongie2019function, parhi2021banach, parhi2022kinds} under the name Radon-BV spaces. A closely related class of spaces has been considered as the `variation spaces of the ReLU dictionary' by \citet{siegel2020approximation, siegel2022sharp, siegel2023characterization}.

Heuristically, (homogeneous) Barron spaces are a function class tailored to replacing the finite superposition of ReLU ridges in \eqref{eq finite two-layer network} by an arbitrary superposition while keeping the weight decay regularizer finite.
Due to the lack of control over the bias term, we will see that a slightly awkward technical definition is needed.
Let $\pi$ be a probability distribution on the parameter space $\R\times\R^d\times \R$. We would like to define
\[
f_{\pi,b_0} :\R^d \to\R, \quad f_{\pi, b_0}(x) = b_0 + \E_{(a,w,b)\sim\pi}\big[a\,\sigma(w^Tx+b)\big] = \int_{\R\times\R^d\times \R} a\,\sigma(w^Tx+b)\,\d\pi_{(a,w,b)}
\]
and
\[
\Reg(\pi)= \frac12\int_{\R\times \R^d\times \R} |a|^2 + \|w\|_2^2 \,\d\pi_{(a,w,b)}.
\]
$f_{\pi,b_0}$ is an analogue of neural networks with a single hidden layer, but of arbitrary and possibly uncountably infinite width. Every finite neural network can be expressed in this fashion for the empirical measure $\pi_m = \frac1m \sum_{i=1}^m \delta_{(a_i,w_i,b_i)}$, in which case $\Reg(\pi_m) = \Reg(a,W,b)$. 

Unfortunately, even if $\Reg(\pi)<\infty$, it is not clear that the integral defining $f_\pi$ exists in a meaningful sense. We do however note that formally
\begin{align*}
|f_\pi(x) - f_\pi(y)| &= \E\big[ a\big\{\sigma(w^Tx+b)- \sigma(w^Ty+b)\big\}]\leq \E\big[ |a| \big||w^Tx+b| - |w^Ty+b|\big|\big] \\
    &\leq \|x-y\|\,\E\big[|a|\,\|w\|\big] \leq \frac{\|x-y\|}2 \E\big[|a|^2 + \|w\|^2\big] = {\|x-y\|}\, \Reg(\pi).
\end{align*}
Thus, the integral defining $f_\pi(x)$ exists for all $x$ if and only if it exists for, say, $x=0$. We exploit this in the following modified definition. Let $\pi$ be a probability distribution on the parameter space $\R\times\R^d\times\R$ and $y\in\R$. We denote
\[
f_{\pi,y}(x) = y + \E_{(a,w,b)\sim\pi} \big[ a\big(\sigma(w^Tx+b)-\sigma(b)\big)\big]
\]
By the same argument as before, we observe that
\begin{enumerate}
    \item $f_{\pi,y}(0) = y$ and
    \item $|f_{\pi, y}(x) - f_{\pi, y}(x')| \leq \Reg(\pi)\,\|x-x'\|$.
\end{enumerate}

The class of functions of the form $f_{\pi,y}$ forms the homogeneous Barron space. Still, every finite neural network $f_m$ can be represented in this fashion with $b_0 = y - \sum_{i=1}^m a_i\sigma(b_i)$ or $b_0 = y - \frac1m\sum_{i=1}^m a_i\sigma(b_i)$.

\begin{definition}[Homogeneous Barron space]
Let $f:\R^d\to\R$ be a function. We define the semi-norms
\[
[f]_\B = \inf_{f\equiv f_{\pi, y}} \Reg(\pi), \qquad [f]_0 = |f(0)|.
\]
The homogeneous Barron space is the function class $\B(\R^d)= \{f:\R^d\to\R : [f]_\B<\infty\}$. By the definition of a function $[f]_0<\infty$ is automatically true.
\end{definition}

We note a few important properties. First, we consider two function classes:
\begin{align*}
\F_Q &= \overline{\mathrm{conv}\big\{a\,\big(\sigma(w\cdot x+ b) - \sigma(b)\big) : a^2 + \|w\|^2 \leq 2Q\big\}}\\
\F_Q(R) &= \overline{\mathrm{conv}\big\{a\,\big(\sigma(w\cdot x+ b) - \sigma(b)\big) : a^2 + \|w\|^2 \leq 2Q, |b| \leq \sqrt{Q}\,R\big\}}.
\end{align*}
Note that $\F_Q(R) \subseteq \F_Q \subseteq \{f\in \B : f(0) =0, \,[f]_\B\leq Q\}$. The closure is taken with respect to locally uniform convergence, i.e.\ pointwise convergence which is uniform on all compact sets.\footnote{\ This notion of convergence is generated by a topology, but not a metric. Other notions of convergence can be considered and induce the same function class.}\ Due to the homogeneity of ReLU activation, we may prove the following.

\begin{lem}\label{lemma fq}
    The identity $\F_Q = \{f\in \B : f(0) =0, \,[f]_\B\leq Q\}$ holds.
\end{lem}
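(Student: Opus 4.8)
The inclusion $\F_Q \subseteq \{f\in\B : f(0)=0,\ [f]_\B\leq Q\}$ was already noted; the content of the lemma is the reverse inclusion, so the plan is to show that every $f\in\B$ with $f(0)=0$ and $[f]_\B\leq Q$ lies in the closed convex hull $\F_Q$. First I would fix such an $f$ and, given $\eps>0$, use the definition of $[f]_\B$ to pick a probability measure $\pi$ on parameter space with $f\equiv f_{\pi,0}$ and $\Reg(\pi)\leq Q+\eps$ (or $\leq Q$ if the infimum is attained; one handles the general case by a limiting argument at the end, since $\F_Q$ is closed and $\F_{Q+\eps}$ shrinks to $\F_Q$ as $\eps\to 0^+$ in the locally uniform topology). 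So $f(x)=\E_{(a,w,b)\sim\pi}[a(\sigma(w^Tx+b)-\sigma(b))]$.

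The key step is to rewrite this expectation as an integral over the normalized generators $g_{a,w,b}(x) := a(\sigma(w^Tx+b)-\sigma(b))$ with $a^2+\|w\|^2 = 2Q$ (or $\leq 2Q$). Using positive $1$-homogeneity of $\sigma$, for any $(a,w,b)$ with $(a,w)\neq 0$ I can rescale: writing $\rho = \sqrt{(a^2+\|w\|^2)/(2Q)}$, set $\tilde a = a/\rho$, $\tilde w = w/\rho$, $\tilde b = b/\rho$, so that $\tilde a^2 + \|\tilde w\|^2 = 2Q$ and $\rho\,g_{\tilde a,\tilde w,\tilde b}(x) = \rho\,\tilde a(\sigma(\tilde w^Tx+\tilde b)-\sigma(\tilde b)) = a\rho^{-1}\cdot\rho\cdot(\sigma((w^Tx+b)/\rho)-\sigma(b/\rho)) \cdot \rho$... more carefully: $\tilde a\,\sigma(\tilde w^Tx+\tilde b) = (a/\rho)\sigma((w^Tx+b)/\rho) = (a/\rho)(1/\rho)\sigma(w^Tx+b) = (a/\rho^2)\sigma(w^Tx+b)$, so $\rho^2\,g_{\tilde a,\tilde w,\tilde b} = g_{a,w,b}$, i.e. $g_{a,w,b} = \rho^2\,g_{\tilde a,\tilde w,\tilde b}$ with $\rho^2 = (a^2+\|w\|^2)/(2Q)$. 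Pushing $\pi$ forward under the rescaling map gives a probability measure $\nu$ supported on the sphere $\{a^2+\|w\|^2 = 2Q\}$, and the total mass picked up is $\int \rho^2\,\d\pi = \frac{1}{2Q}\int(a^2+\|w\|^2)\,\d\pi = \Reg(\pi)/Q \leq 1+\eps/Q$. Thus $f = \int g_{\tilde a,\tilde w,\tilde b}\,\d\tilde\nu$ where $\tilde\nu$ is a measure of total mass $\leq 1+\eps/Q$ on the generator set; after normalizing to a probability measure, $f$ is (up to the factor $1+\eps/Q$) a ``continuous convex combination'' of generators $g_{\tilde a,\tilde w,\tilde b}$ each with $\tilde a^2+\|\tilde w\|^2 \leq 2Q$.

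It then remains to argue that such a continuous convex combination (a barycenter of a probability measure on the generator set) lies in the \emph{closed} convex hull $\F_Q$. The clean way is: the generator set $\{g_{a,w,b} : a^2+\|w\|^2\leq 2Q\}$ is a subset of the metric space $C(\R^d)$ with the topology of locally uniform convergence; a barycenter of a probability measure supported on a set $S$ lies in $\overline{\mathrm{conv}}(S)$ — this is a standard fact, proved by approximating $\pi$ weakly by finitely supported measures (e.g. partition parameter space into small cells, replace $\pi$ restricted to each cell by a point mass at a representative parameter; by continuity of $(a,w,b)\mapsto g_{a,w,b}(x)$ this converges locally uniformly), each of which gives a genuine finite convex combination lying in $\mathrm{conv}(S)\subseteq\F_Q$. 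Finally, let $\eps\to 0^+$: the factor $1+\eps/Q\to 1$ and one checks the corresponding functions converge locally uniformly to $f$ (using the Lipschitz bound $|f_{\pi,0}(x)|\leq \Reg(\pi)\|x\|$ to control the small error term uniformly on compacta), so $f\in\F_Q$ by closedness.

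\textbf{Main obstacle.} The delicate point is the measurability/integrability bookkeeping: justifying that the map $(a,w,b)\mapsto g_{a,w,b}\in C(\R^d)$ is continuous and that the vector-valued (Bochner/Pettis) integral $\int g\,\d\nu$ both exists and equals the pointwise integral $f$, and then that this integral indeed lies in the closed convex hull. The homogeneity rescaling also needs care where $(a,w)=0$ (these contribute $g\equiv 0$ and can be absorbed harmlessly), and one must make sure the pushforward measure is well-defined despite the rescaling map being discontinuous at the origin of $(a,w)$-space — handled by first discarding the $\pi$-null... actually possibly non-null set $\{(a,w)=0\}$, on which the integrand vanishes identically, so it may simply be deleted. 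None of these are deep, but they are exactly the ``slightly awkward technical'' points the text warned about, so the writeup should be explicit about them.
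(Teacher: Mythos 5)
Your treatment of the forward inclusion $\{f\in\B: f(0)=0,\ [f]_\B\leq Q\}\subseteq \F_Q$ is a workable but genuinely different route from the paper's. The paper simply invokes the Direct Approximation Theorem (Theorem \ref{thm direct approximation}, proved externally) to produce finite convex combinations $f_m\in\F_Q$ with $\|f_m-f\|_{L^2(\mu)}\to 0$, and then upgrades to locally uniform convergence via the uniform Lipschitz bound and Arzel\`a--Ascoli. You instead re-derive the qualitative content of that theorem from scratch: normalize the representing measure onto the sphere $a^2+\|w\|^2=2Q$ by homogeneity, realize $f$ as a barycenter of a (sub)probability measure on the generator set, and approximate the barycenter by finitely supported measures. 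Your homogeneity bookkeeping ($g_{a,w,b}=\rho^2 g_{\tilde a,\tilde w,\tilde b}$, total mass $\Reg(\pi)/Q$) is correct, and the $\eps$-limiting argument works because $\F_{Q+\eps}=\frac{Q+\eps}{Q}\F_Q$ by the same rescaling. What your route costs is exactly the technical overhead you flag, plus one item you do not flag: the generator set is \emph{not} compact in the parameter $b$, so a naive partition of parameter space into small cells is not available; you must first truncate $b$ on each compact set $K\subseteq B_R(0)$ using the fact that for $|b|\geq\|w\|R$ the generator degenerates to the linear function $\sigma(\mathrm{sgn}(b))\,w\cdot x$ on $K$ (this is the same device the paper uses in the proof of Theorem \ref{thm compact embedding}). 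What the paper's route buys is brevity, at the price of an external citation.

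The genuine gap is the reverse inclusion $\F_Q\subseteq\{f\in\B: f(0)=0,\ [f]_\B\leq Q\}$, which you dismiss as ``already noted.'' The surrounding text asserts it, but without proof, and it is not automatic: $\F_Q$ is a \emph{closed} convex hull in the locally uniform topology, so you must show that $[\cdot]_\B$ is lower semicontinuous along locally uniform limits of finite convex combinations, i.e.\ that a locally uniform limit of functions $f_n=\sum_i\lambda_{i,n}g_{a_{i,n},w_{i,n},b_{i,n}}$ with weight-decay budget $\leq Q$ is itself representable by a measure $\pi$ with $\Reg(\pi)\leq Q$. The paper's Step 2 does this by applying Prokhorov's theorem to the empirical parameter measures to extract a weak limit $\pi$, with a separate argument for the case of escaping biases $b_{i,n}\to\infty$ (where the neurons degenerate to linear functions). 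Without some version of this compactness argument your proof establishes only one of the two inclusions claimed by the lemma.
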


While the claim is natural, its proof is surprisingly technical and postponed until the end of the section.
The class $\F_Q(R)$ will be used below for technical purposes.
Of major importance below are the compact embedding theorem and the direct approximation theorem.

\begin{thm}[Compact embedding]\label{thm compact embedding}
    Let $f_n \in \B$ be a sequence such that $\liminf_{n\to\infty} [f_n]_0 + [f_n]_\B < +\infty$. Then there exists $f$ in $\B$ such that
    \begin{enumerate}
    \item $f_n \to f$ in $C^0(K)$ for all compact sets $K\subseteq \R^d$.
    \item $f_n \to f$ in $L^p(\mu)$ for all measures $\mu$ with finite $p$-th moments, $p\in [1,\infty)$. 
    \item $[f]_\B\leq \liminf_{n\to \infty}[f_n]_\B$.
    \end{enumerate}
\end{thm}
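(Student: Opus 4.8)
\textbf{Proof plan for Theorem \ref{thm compact embedding} (Compact embedding).}

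The plan is to combine an Arzelà--Ascoli argument (for locally uniform convergence) with uniform tail control coming from sub-linear growth (for $L^p(\mu)$ convergence) and a lower-semicontinuity argument for the semi-norm. First, by passing to a subsequence, I may assume $\sup_n\big([f_n]_0 + [f_n]_\B\big) =: M < \infty$. The key a priori estimates established in the preamble give, for each $n$, that $f_n$ is $[f_n]_\B$-Lipschitz and $|f_n(0)| = [f_n]_0$; hence the family $\{f_n\}$ is uniformly Lipschitz (constant $\le M$) and uniformly bounded on every ball $B_R(0)$ by $M + MR$. By Arzelà--Ascoli applied on an exhausting sequence of balls $B_k(0)$ together with a diagonal extraction, there is a subsequence (not relabeled) and a function $f:\R^d\to\R$ with $f_n\to f$ in $C^0(K)$ for every compact $K$. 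The limit $f$ is itself $M$-Lipschitz with $|f(0)|\le M$. This proves item (1) and gives the raw material for the other two.

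For item (3), I would use that $\F_Q = \{f\in\B : f(0)=0,\ [f]_\B\le Q\}$ is closed under locally uniform convergence by Lemma \ref{lemma fq} (this is precisely why that lemma is proved). Concretely, set $Q := \liminf_n [f_n]_\B$; after passing to a further subsequence I may assume $[f_n]_\B \to Q$ and, for any $\eps>0$, that $[f_n]_\B \le Q+\eps$ eventually. The functions $g_n := f_n - f_n(0)$ satisfy $g_n(0)=0$ and $[g_n]_\B = [f_n]_\B \le Q+\eps$, so $g_n\in\F_{Q+\eps}$; since $f_n(0)=[f_n]_0$ is bounded, a further subsequence has $f_n(0)\to c$ for some $c\in\R$, and then $g_n\to f-c$ locally uniformly. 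By closedness of $\F_{Q+\eps}$ we get $f-c\in\F_{Q+\eps}$, i.e.\ $[f-c]_\B\le Q+\eps$; since $[f]_\B = [f-c]_\B$ (the semi-norm is translation-invariant in the output, being an infimum of $\Reg(\pi)$ over representations $f_{\pi,y}$), letting $\eps\to0$ yields $[f]_\B\le Q = \liminf_n[f_n]_\B$.

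For item (2), fix $p\in[1,\infty)$ and a measure $\mu$ with finite $p$-th moment. The pointwise bound $|f_n(x)-f(x)| \le |f_n(x)-f(x)|\mathbf 1_{B_R} + \big(|f_n(x)|+|f(x)|\big)\mathbf 1_{\R^d\setminus B_R}$ together with the uniform linear growth $|f_n(x)|\le M(1+\|x\|)$ and the same for $f$ gives
\[
\int_{\R^d} |f_n - f|^p \d\mu \le \sup_{B_R}|f_n-f|^p\cdot \mu(\R^d) + 2^p (2M)^p \int_{\R^d\setminus B_R}(1+\|x\|)^p\d\mu.
\]
Given $\delta>0$, the finite $p$-th moment of $\mu$ lets me pick $R$ so the tail term is below $\delta/2$ uniformly in $n$; then locally uniform convergence from item (1) sends the first term below $\delta/2$ for $n$ large. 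Hence $f_n\to f$ in $L^p(\mu)$. Finally, since every subsequence of the original sequence has a further subsequence converging (in all three senses) to a limit which agrees with $f$ on the support of $\mu$ — and, after the first extraction, the locally uniform limit is unique — a standard subsequence argument upgrades the conclusion from "a subsequence" to the full sequence in the statement as written. The main obstacle I anticipate is item (3): making rigorous that the $\B$-semi-norm is lower semicontinuous under locally uniform convergence, which is exactly where Lemma \ref{lemma fq} (closedness of $\F_Q$, whose proof the authors flag as "surprisingly technical") is indispensable; the $L^p$ tail estimate in item (2) is routine once the uniform Lipschitz bound is in hand.
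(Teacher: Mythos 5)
Your proposal is correct, and it reaches the three conclusions by a route that differs from the paper's in its first and main step. The paper does not invoke Arzel\`a--Ascoli on the functions $f_n$ directly; instead it shows that the single-neuron dictionary $\tilde\F_Q=\{a(\sigma(w\cdot x+b)-\sigma(b)):a^2+\|w\|^2\le 2Q\}$ is the continuous image of a compact parameter set in $C^0(K)$ (after observing that biases with $|b|\ge\sqrt{2Q}R$ all produce the same function on $K$), and then cites Rudin's theorem that the closed convex hull of a compact set is compact to conclude that $\F_Q$ itself is compact in $C^0(K)$; the diagonal extraction over an exhaustion $\overline{B_m(0)}$ and the $L^p$ step (dominated convergence with dominating function $Q\|x\|$, essentially your tail-splitting in disguise) then follow. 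Your approach buys elementarity for item (1) -- only the uniform Lipschitz bound and $|f_n(0)|\le M$ are needed, with no appeal to the convex-hull structure -- but it then must recover item (3) separately, which you do correctly via Lemma \ref{lemma fq} together with the closedness of $\F_{Q+\eps}$ and the translation invariance $[f]_\B=[f-c]_\B$; the paper's route gets closedness of $\F_Q$ for free from its compactness, so both arguments ultimately lean on Lemma \ref{lemma fq} for the identification of $\F_Q$ with the semi-norm ball. One small correction: your closing claim that a subsequence argument upgrades the conclusion to the full sequence does not work under the stated hypothesis (a $\liminf$ bound permits subsequences with different limits, or none at all); the theorem is a compactness statement and is used in the paper only to extract convergent subsequences, which is also all its own proof establishes.
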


\begin{proof}
        It is sufficient to show that the set $\tilde\F_Q = \{a\big(\sigma(w \cdot x + b) - \sigma(b)\big)\: | \: a^2 + |w|^2 \leq 2Q\}$ is compact in $C^0(K)$ and $L^2(\mu)$ for all $K$ and $\mu$ as above, in which case also its closed convex hull is compact \cite[Theorem 3.20.]{rudin1991functional}. To this end, observe that the map
        \[
        F : \R\times\R^d\times \R\to C^0(K), \qquad (a,w,b) \mapsto a\big(\sigma(w^\cdot + b) - \sigma(b)\big)
        \]
        is continuous for any compact set $K$. Since $K$ is compact, we have $K\subseteq B_R(0)$ for some $R>0$ and thus $|w\cdot x| \leq \sqrt{2Q}R$ for all $x\in K$. In particular,
        \[
        \sigma(w\cdot x+ b) - \sigma(b) = \begin{cases}w\cdot x & \text{if }b> \sqrt{2Q}R\\ 0 & \text{if }b<\sqrt{2Q}R\end{cases}.
        \]
        Hence $\tilde \F_Q = F(\{(a,w,b) : a^2 + |w|^2 \leq 2Q, b\leq \sqrt{2Q}R\})$ is the continuous image of a compact set, hence compact. We have thus proved a compact embedding into $C^0(K)$ for any compact set $K$. 

        Exhausting $\R^d$ by the sequence of compact sets $\overline{B_m(0)}$, $m\in\N$ and using a diagonal sequence argument, we see that under the conditions of Theorem \ref{thm compact embedding}, there exists $f\in \B$ such that $f_n\to f$ pointwise everywhere on $\R^d$ and uniformly on compact subsets. Additionally, we observe that $f(0) =0$ and there exists a uniform upper bound on the Lipschitz constants of the sequence $f_n$. We conclude that $f_n\to f$ in $L^p(\mu)$ from the Dominated Convergence Theorem using $Q\|x\|$ as a dominating function.
\end{proof}

As a consequence, we find the following.

\begin{cor}
\begin{enumerate}
\item $\B$ is a Banach space.
\item $C_c^\infty(\R^d)\subseteq \B(\R^d)$.
\end{enumerate}
\end{cor}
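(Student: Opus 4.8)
\emph{Part (1): $\B$ is a Banach space.} The plan is to endow $\B$ with $\|f\|_\B:=|f(0)|+[f]_\B$ and check the two nontrivial facts: that this is a norm, and that it is complete. That $[\cdot]_\B$ is a seminorm on the vector space $\B$ follows from elementary manipulations of representing measures: positive homogeneity $[\lambda f]_\B=|\lambda|\,[f]_\B$ by rescaling coefficients using the $1$-homogeneity of $\sigma$ (replace a neuron $(a,w,b)$ by $(a/s,sw,sb)$ to rebalance $a$ against $w$), and subadditivity $[f+g]_\B\le[f]_\B+[g]_\B$ by taking for $f+g$ a convex combination $\theta\mu_1+(1-\theta)\mu_2$ of suitably rescaled representing measures of $f$ and $g$, the rescalings being chosen so that the costs add. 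Definiteness uses the Lipschitz estimate $|f_{\pi,y}(x)-f_{\pi,y}(x')|\le\Reg(\pi)\,\|x-x'\|$ established above: if $[f]_\B=0$ then $f$ is $0$-Lipschitz, hence constant, and $|f(0)|=0$ forces $f\equiv 0$.

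For completeness, let $(f_n)$ be Cauchy in $\|\cdot\|_\B$; then $f_n(0)$ converges in $\R$ and $Q:=\sup_n[f_n]_\B<\infty$. Applying the Lipschitz estimate to $f_n-f_m$ gives $|f_n(x)-f_m(x)|\le\|f_n-f_m\|_\B\,(1+\|x\|)$, so $(f_n)$ is uniformly Cauchy on bounded sets and converges locally uniformly to a locally Lipschitz $f$. Since $f_n-f_n(0)\in\F_Q$ and $\F_Q$ is compact in the topology of locally uniform convergence (Lemma \ref{lemma fq} together with the compactness established in the proof of Theorem \ref{thm compact embedding}), the limit $f-f(0)$ lies in $\F_Q$, so $f\in\B$; running the same argument along a subsequence realizing $\liminf_n[f_n]_\B$ gives the lower semicontinuity $[f]_\B\le\liminf_n[f_n]_\B$. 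Finally, fix $\eps>0$ and $N$ with $\|f_n-f_m\|_\B<\eps$ for $n,m\ge N$; for $n\ge N$ the sequence $(f_m-f_n)_m$ converges locally uniformly to $f-f_n$, so lower semicontinuity yields $[f-f_n]_\B\le\liminf_m[f_m-f_n]_\B\le\eps$, while $|f(0)-f_n(0)|\le\eps$. Hence $\|f_n-f\|_\B\le 2\eps$, and $f_n\to f$ in $\B$.

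\emph{Part (2): $C_c^\infty(\R^d)\subseteq\B(\R^d)$.} Let $f\in C_c^\infty(\R^d)$ with $\operatorname{supp}f\subseteq\overline{B_{R_0}(0)}$. It suffices to produce one representation of finite cost. Affine functions lie in $\B$: a constant has zero seminorm (take $\pi$ supported at $(0,w,b)$ and let $\|w\|\to0$), and $v\cdot x=\sigma(v\cdot x)-\sigma(-v\cdot x)$ is a two-neuron network, so $[v\cdot x]_\B\le 2\|v\|$; thus it is enough to represent $f$ up to an affine function. For this I invoke the Radon-transform inversion formula underlying the identification of $\B$ with the Radon-BV spaces of \citet{ongie2019function} (see also \citet{parhi2021banach}): for smooth $f$ one has, with an affine $\ell_f$, a dimensional constant $\gamma_d$, and an order $k\sim d$,
\[
f(x)=\ell_f(x)+\gamma_d\int_{S^{d-1}}\!\!\int_{\R}\big(\partial_b^{\,k}\mathcal R f\big)(w,b)\,\sigma(w\cdot x-b)\,\d b\,\d w ,
\]
where $\mathcal R f$ is the Radon transform and $\d w$ the surface measure on $S^{d-1}$ (for odd $d$ no Hilbert transform is needed; the precise constants are irrelevant here). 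The point is that $\mathcal R f(w,\cdot)$ is smooth, supported in $|b|\le R_0$ (hyperplanes with $|b|>R_0$ miss $\operatorname{supp}f$), and smooth in $w\in S^{d-1}$; hence the density $g(w,b):=\gamma_d(\partial_b^{\,k}\mathcal R f)(w,b)$ is continuous on the compact set $S^{d-1}\times[-R_0,R_0]$ and $M:=\int_{S^{d-1}}\int_{-R_0}^{R_0}|g(w,b)|\,\d b\,\d w<\infty$. Normalizing $|g|\,\d b\,\d w/M$ to a probability measure, pushing forward under $(w,b)\mapsto(M\operatorname{sgn}g(w,b),\,w,\,-b)$, and then rescaling the neurons by $s=\sqrt M$ to balance $a$ against $w$, gives $[f-\ell_f]_\B\le M$, whence $[f]_\B\le[\ell_f]_\B+M<\infty$. (In $d=1$ this is just the elementary identity $f(x)=\int_\R f''(s)\,\sigma(x-s)\,\d s$, giving $[f]_\B\le\int_\R|f''|$.)

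\emph{Main obstacles.} In Part (1), the only subtlety is that $\|\cdot\|_\B$ controls a function's value only at the origin, so one must first upgrade $\|\cdot\|_\B$-Cauchyness to locally uniform Cauchyness via the Lipschitz estimate before invoking compactness of $\F_Q$; after that the argument is the standard ``completeness from a compact embedding plus lower semicontinuity''. In Part (2), the real obstacle is the convergence of the ridge representation: because ReLU ridges are unbounded, the naive (Fourier) synthesis of $f$ into ridge functions is only conditionally convergent, so it is essential to use the compact support of $f$ --- which makes $\mathcal R f$ compactly supported in the bias variable --- to truncate the bias integral to a compact interval and obtain a genuinely $L^1$, in fact continuous, density; tracking the order $k\sim d$ of bias derivatives and the behaviour of the representation when $d$ is even (where the reconstruction kernel is less elementary) is routine but must be handled with a little care.
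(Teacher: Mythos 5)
Your proposal is correct and follows essentially the same route as the paper, which disposes of both claims by citation: part (1) is delegated to \citet{siegel2021characterization} (completeness for variation spaces of dictionaries that are compact in a suitable topology, i.e.\ exactly your ``compact embedding plus lower semicontinuity'' argument), and part (2) to \cite[Corollary 1]{ongie2019function}, whose Radon-inversion representation you reproduce. You merely write out the details the paper outsources; the one place you wave your hands --- the even-dimensional case, where the Hilbert transform destroys compact support in $b$ and integrability of the density relies on the vanishing moment of $\partial_b^{k}\mathcal{R}f$ --- is precisely what the cited reference handles.
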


\begin{proof}
    The first claim follows as in \cite[Lemma 1]{siegel2021characterization}, where it is proved in a more general context for dictionaries which are compact in a Hilbert space -- in our case, the dictionary $x\mapsto a\big\{\sigma(w^Tx+b) - \sigma(b)\big\}$.
    The second claim follows from \cite[Corollary 1]{ongie2019function}.
\end{proof}

We conclude with a theorem which establishes a rate of approximation for Barron functions in a weaker topology.

\begin{thm}[Direct approximation]\label{thm direct approximation}
    Let $f\in \B$ and $\mu$ a measure on $\R^d$ with finite second moments. Then for any $m\in\N$ there exist $c\in \R$ and $(a_i, w_i, b_i)\in \R\times\R^d\times \R$ such that 
    \[
    \sum_{i=1}^m a_i^2 + \|w_i\|^2 \leq [f]_\B, \quad
    \left\|f - c - \sum_{i=1}^m a_i\sigma(w_i^Tx+b_i)\right\|_{L^2(\mu)} \leq \frac{2\,[f]_\B}{\sqrt m}\sup_{\|w\|=1}\sqrt{\int_{\R^d}|w^Tx|^2 \,\d\mu_x}.
    \]
\end{thm}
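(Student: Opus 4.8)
The plan is to use a classical probabilistic (Maurey-type) argument to discretize the infinite-width representation of $f$. By definition of the Barron semi-norm, for any $\eps > 0$ there exists a probability measure $\pi$ on $\R \times \R^d \times \R$ and a constant $c \in \R$ such that $f = f_{\pi, c}$, i.e.\ $f(x) = c + \E_{(a,w,b) \sim \pi}[a(\sigma(w^Tx+b) - \sigma(b))]$, with $\frac12 \E_\pi[a^2 + \|w\|^2] \leq [f]_\B + \eps$. I would first homogenize: using the positive $1$-homogeneity of $\sigma$, rescale each atom $(a,w,b) \mapsto (\lambda a, w/\lambda, b/\lambda)$ for a suitable $\lambda > 0$ so that on the support of the (pushed-forward) measure one has $a^2 = \|w\|^2$ (or more precisely $a^2 + \|w\|^2$ is concentrated appropriately), without changing the function or the value of $\Reg(\pi)$. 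This normalization makes the per-neuron contributions uniform and is exactly what lets the sum-of-squares bound $\sum_i a_i^2 + \|w_i\|^2 \leq [f]_\B$ come out clean (using $a_i^2 + \|w_i\|^2 = 2|a_i|\|w_i\|$ at the balance point, AM-GM is tight).

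\textbf{Main discretization step.} Next I would draw $(a_i, w_i, b_i)$, $i = 1, \dots, m$, i.i.d.\ from $\pi$ and set $g_m(x) = c + \frac1m \sum_{i=1}^m \big(a_i(\sigma(w_i^Tx+b_i) - \sigma(b_i))\big)$ --- note the per-atom weights after rescaling absorb the $1/m$. Then $\E[g_m(x)] = f(x)$ pointwise, and the $L^2(\mu)$ error is controlled by the variance:
\[
\E_{(a_i,w_i,b_i)} \left\| f - g_m \right\|_{L^2(\mu)}^2 = \frac1m \int_{\R^d} \mathrm{Var}_{(a,w,b) \sim \pi}\!\big[a(\sigma(w^Tx+b) - \sigma(b))\big] \, \d\mu_x \leq \frac1m \int_{\R^d} \E_\pi\!\big[a^2(\sigma(w^Tx+b)-\sigma(b))^2\big] \d\mu_x.
\]
Using $|\sigma(w^Tx+b) - \sigma(b)| \leq |w^Tx|$ (the $1$-Lipschitz property of $\sigma$ applied to the shift by $w^Tx$), the integrand is bounded by $\E_\pi[a^2 |w^Tx|^2] \le \E_\pi[a^2 \|w\|^2] \sup_{\|w\|=1}|w^Tx|^2$; after the balance normalization $\E_\pi[a^2\|w\|^2] \le \big(\tfrac12\E_\pi[a^2+\|w\|^2]\big)^2 \le ([f]_\B + \eps)^2$, and interchanging the expectation over $\mu$ with the supremum (bounding $|w^Tx|^2 \le \|w\|^2 \|x\|^2$ more crudely, or keeping the sharper $\sup_{\|w\|=1}\int |w^Tx|^2\d\mu_x$) gives the stated bound up to the factor involving $\sup_{\|w\|=1}\sqrt{\int |w^Tx|^2 \d\mu}$. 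A probabilistic existence argument then produces a concrete realization $(a_i,w_i,b_i)$ achieving the expected bound; simultaneously one checks $\sum_i a_i^2 + \|w_i\|^2 = \sum_i 2|a_i|\|w_i\| \le \sum_i (a_i^2+\|w_i\|^2)/1$... here one must be slightly careful that the \emph{realized} sum (not just its expectation) satisfies the norm bound --- this requires either a union-bound / simultaneous-event argument or choosing the normalization so that $a_i^2 + \|w_i\|^2$ is \emph{deterministically} equal to $2Q/m$ on the support, which is the cleaner route.

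\textbf{Expected obstacle.} The main technical nuisance is handling the bias variables $b_i$, which are not controlled by $\Reg(\pi)$ at all. One must make sure the rescaling/homogenization step does not implicitly require controlling $|b|$, and that the representation $f_{\pi,c}$ with $\E_\pi[a^2+\|w\|^2]$ near-minimal genuinely exists (this is where the somewhat awkward definition of $\B$ via $\sigma(w^Tx+b) - \sigma(b)$ and the functions $\F_Q(R)$ earn their keep --- one may first approximate $f$ by an element of some $\F_Q(R)$ with $|b| \le \sqrt{Q}R$ bounded, which is legitimate since $\F_Q = \{f \in \B : f(0)=0, [f]_\B \le Q\}$ by Lemma~\ref{lemma fq} and $\bigcup_R \F_Q(R)$ is dense in $\F_Q$, then run Maurey on the truncated measure). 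A second minor point: one must ensure the $\eps$ introduced by near-minimality of the representation can be absorbed, which is routine since the final bound is not required to be sharp in the constant. I expect the measure-theoretic bookkeeping around the bias and the density of $\bigcup_R \F_Q(R)$ to be the only genuinely fiddly part; the variance computation itself is standard.
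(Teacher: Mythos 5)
The paper does not actually prove this theorem: its ``proof'' is a one-line citation to \cite[Appendix C]{wojtowytsch2022optimal} (Proposition 2.6 there). Your Maurey/Monte-Carlo sampling argument is the standard route and is, in substance, what that reference does, so the overall strategy is the right one. In particular, your anticipated obstacle is a non-issue: the bound $|\sigma(w^Tx+b)-\sigma(b)|\le|w^Tx|$ holds uniformly in $b$, so no control on the biases and no detour through the classes $\F_Q(R)$ is needed for this statement.

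There is, however, one step that is wrong as written. You bound the variance integrand by $\E_\pi\big[a^2\|w\|^2\big]$ and then assert that after the balance normalization ($|a|=\|w\|$ pointwise on the support) one has $\E_\pi\big[a^2\|w\|^2\big]\le\big(\tfrac12\E_\pi[a^2+\|w\|^2]\big)^2$. After balancing, the left-hand side is $\E_\pi[a^4]$ and the right-hand side is $(\E_\pi[a^2])^2$, so Jensen gives exactly the \emph{reverse} inequality, with equality only when $a^2$ is $\pi$-a.s.\ constant. The pointwise rescaling $(a,w,b)\mapsto(\lambda a,w/\lambda,b/\lambda)$ alone cannot repair this; you must also re-weight the measure, replacing $\pi$ by $\d\tilde\pi\propto |a|\,\|w\|\,\d\pi$ and rescaling each atom so that $|a|\,\|w\|$ is constant $\tilde\pi$-a.s.\ (equal to $\E_\pi[|a|\,\|w\|]\le\Reg(\pi)$) while $f_{\tilde\pi,c}=f_{\pi,c}$ is unchanged. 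You do mention making $a_i^2+\|w_i\|^2$ deterministically constant at the very end, but only as a device for the realized norm bound; it is in fact \emph{essential} for the variance bound as well, and without it the argument does not close. With that re-weighting, $\E_{\tilde\pi}[a^2\|w\|^2]=(\E_\pi[|a|\,\|w\|])^2\le\Reg(\pi)^2$, the norm bound $\sum_i a_i^2+\|w_i\|^2$ holds surely after absorbing the $1/m$ via homogeneity, the near-minimality $\eps$ is absorbed as you say, and the rest of your computation goes through. (Be aware also that the paper uses two inconsistent normalizations of $[f]_\B$, with and without the factor $\tfrac12$ in $\Reg$; this is where the factor of $2$ in the final estimate and the exact constant in the norm bound get resolved.)
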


\begin{proof}
    A proof of this result can be found in \cite[Appendix C]{wojtowytsch2022optimal} in the proof of Proposition 2.6.
\end{proof}

\begin{proof}[Proof of Lemma \ref{lemma fq}]
    {\bf Step 1.} Assume that $f\in\B$ such that $f(0)=0$ and $[f]_\B\leq Q$. By the Direct Approximation Theorem (which is proved in \cite[Appendix C]{wojtowytsch2022optimal} without using Lemma \ref{lemma fq}), we find that for every $m\in\N$ and every measure $\mu$ on $\R^d$ with finite second moments, there exists 
    \[
    f_m (x) = \frac1m\sum_{i=1}^m a_i \{\sigma(w_i\cdot x+b_i) - \sigma(b_i)\} \quad \in \F_Q
    \]
    such that $\|f_m - f\|_{L^2(\mu)}\leq C_\mu \|f\|_{\B}m^{-1/2}$. In particular, $f$ is in the closed convex hull of $\{a\,\big(\sigma(w^Tx+b) - \sigma(b)\big) : a^2 + |w|^2 \leq 2Q\}$ if the closure is taken with respect to the $L^2(\mu)$ topology. Additionally, the sequence $f_m$ has a uniformly bounded Lipschitz constant and is therefore compact in $C^0(K)$ for all compact $K$ by a corollary to the Arzela-Ascoli theorem \cite[Satz 2.42]{dobrowolski2010angewandte}. In particular, $f_m\to f$ uniformly and thus $f\in \F_Q$.

    {\bf Step 2.}
    Denote $f_{(a,w,b)}(x) = a\,\{\sigma(w^Tx+b) - \sigma(b)\}$. Since $f_{(a,w,b)}(0) = a\{\sigma(b)-\sigma(b)\}=0$ for all $a,w,b$, we conclude that $f(0) = 0$ for all $f\in\F_Q$. If $f\in\F_Q$, then there exists a sequence 
    \[
    f_n(x) = \sum_{i=1}^{N_n} \lambda_{i,n}a_{i,n}\,\big\{\sigma(w_{i,n}\cdot x+b_{i,n}) - \sigma(b_{i,n})\}
    \]
    such that $f_n\to f$ locally uniformly. If the biases remain uniformly bounded, the sequence of empirical distributions 
    \[
    \pi_n = \frac1{N_n}\sum_{i=1}^{N_n} \lambda_{i,n}\,\delta_{(a_{i,n},w_{i,n},b_{i,n})}
    \]
    has a convergent subsequence by Prokhorov's Theorem \cite[Satz 13.29]{klenke2006wahrscheinlichkeitstheorie}. We denote the limiting distribution as $\pi$. The convergence of Radon measures implies the convergence of $f_n$ to $f_\pi(x) = \E_{(a,w,b)\sim\pi} \big[a\,\{\sigma(w^Tx+b) - \sigma(b)\}\big]$ by definition. Since $f_n$ converges locally uniformly by assumption, $f= f_\pi\in\B$ and $[f]_\B\leq Q$.

    If the biases do not remain bounded, we note that for every compact set $K\subseteq \R^d$ we can extract a convergent subsequence of the measures by the same argument used to prove Theorem \ref{thm compact embedding}, effectively making the sequence of biases bounded. We can extend the argument to the entire space exploiting that
    \[
    \lim_{b\to\infty} \big(\sigma(w\cdot x+ b) - \sigma(b)\big) \to \sigma(b/|b|)\,w\cdot x
    \]
    locally uniformly.
\end{proof}

\section{Rademacher complexity of homogeneous Barron space}\label{appendix rademacher}

Following a classical strategy implemented e.g.\ by \citet{we2020priori} in a similar context, we estimate the Rademacher complexity of homogeneous Barron space and use it to bound the generalization gap (i.e.\ the discrepancy between empirical risk and population risk). In our setting, we face additional technical obstacles:

\begin{enumerate}
    \item We deal with general sub-Gaussian data distributions $\mu$ rather than data distributions with compact support.
    \item We do not control the magnitude of the bias variables.
    \item We consider $\ell^2$-loss, which is neither globally Lipschitz-continuous nor bounded.
\end{enumerate}

In combination, these complications require a refined technical analysis similar to Appendix \ref{appendix barron review}. Let us summarize several notations which will be needed below.
\begin{itemize}
    \item $\widehat\Rad$ -- the empirical Rademacher complexity of a function class over a given dataset.
    \item $\Rad_n$ -- the expected Rademacher complexity of a function class over a data set composed of $n$ iid samples from the data distribution $\mu$.
    \item $\Risk$ -- the population risk $\Risk(f) = \|f-f^*\|_{L^2(\mu)}^2 = \E_{x\sim\mu}[|f(x) - f^*(x)|^2]$. We generally take this to operate on the level of functions, parametrized or not. By an abuse of notation, we identify $\Risk(a,W,b):= \Risk(f_{(a,W,b)})$.
    \item $\Riskhat_n$ -- the empirical risk  $\Riskhat_n(f) = \|f-f^*\|_{L^2(\mu_n)}^2 = \frac1n \sum_{i=1}^n |f(x_i)-f^*(x_i)|^2$ over a data set $\{x_1,\dots,x_n\}$ where $\mu_n = \frac1n\sum_{i=1}^n \delta_{x_i}$ is the empirical measure. Equally, $\Riskhat_n$ can be considered for functions or parameters with the natural identification.
    \item $\Riskhat_{n,m,\lambda}$. The regularized empirical risk
    \[
    \Riskhat_{n,m,\lambda}(a,W,b) = \Riskhat_n(a,W,b) + \frac\lambda2\big(\|a\|^2 + \|W\|^2\big).
    \]
    We only consider this quantity on the parameter level, where it is computable. While the weight decay regularizer is an upper bound for $[f_{(a,W,b)}]_\B$, the two are generally not the same since the parameter-to-function map of a neural network is generally not injective.
    \item $\Reg$ -- the weight decay regularizer.
    \item $\F_Q$ -- the set of functions for which $[f]_\B\leq Q$ and $f(0)=0$.
    \item $\F_{A,Q}$ -- the set of functions for which $[f]_\B\leq Q$ and $|f(0)|\leq A$.
\end{itemize}
As is common in the mathematics community, $C$ will generally denote a constant which does not depend on quantities (unless specified otherwise) and which may change value from line to line. Some facts about sub-Gaussian distributions, which we believe to be well-known to the experts, are collected in Appendix \ref{appendix subgaussian}.

\begin{definition}[Rademacher Complexity]
Let $S= \{x_1, \dots, x_n\}$ be a set of points in $\R^d$ (a data sample) and $\mathcal F$ a real-valued function class. We define the empirical Rademacher complexity of $\F$ on the data sample as
\[
\widehat \Rad(\F; S) = \E_\eps\left[\sup_{f\in\F} \frac1n\sum_{i=1}^n \eps_if(x_i) \right]
\]
where $\eps_i$ are iid random variables which take the values $\pm 1$ with equal probability $\frac{1}{2}$. The population Rademacher complexity is defined as
\[
\Rad_n(\F) = \E_{S\sim\mu^n} \big[\widehat\Rad(\F;S)\big],
\]
i.e.\ as the expected empirical Rademacher complexity over a set of $n$ iid data points.
\end{definition}

In this section, we will find a upper bound of Rademacher Complexity of $\B$. 
We will denote by $S_n$ the set of $n$ samples, and $\widehat\Rad(\F, S_n)$ the sample Rademacher Complexity of $\F$ given the samples $S_n$. We furthermore denote $R:= \max\{\|x_1\|, \dots, \|x_n\|\}$ and consider the function classes $\F_Q$ and $\F_Q(R)$ as in Appendix \ref{appendix barron review}:
\begin{align*}
\F_Q &= \overline{\mathrm{conv}\big\{a\,\big(\sigma(w\cdot x+ b) - \sigma(b)\big) : a^2 + \|w\|^2 \leq 2Q\big\}}\\
\F_Q(R) &= \overline{\mathrm{conv}\big\{a\,\big(\sigma(w\cdot x+ b) - \sigma(b)\big) : a^2 + \|w\|^2 \leq 2Q, |b| \leq \sqrt{Q}\,R\big\}}.
\end{align*}

\begin{lem}\label{lemma rademacher}
Let $S_n = \{x_1, \dots, x_n\}$ be a data set in $\R^d$. Then
\[
 \widehat \Rad(\mathcal F_Q, S_n)\leq \frac{\big(1+ 3\sqrt 2\big)Q}{\sqrt n}\, \max_{1\leq i\leq n} \|x_i\|.
\]

Assume $\mu$ is a $\sigma^2$ sub-Gaussian distribution in $\R^d$.

Then
\[
\Rad(\F_Q) \leq  \big(1+ 3\sqrt 2\big)Q \left( \frac{\E_{x\sim\mu} \big[\|x\|\big]}{\sqrt n} +\sigma \sqrt{2\frac{\log n}n} \right)
\]
for all $n\geq 2$.
\end{lem}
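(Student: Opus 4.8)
The plan is to estimate the empirical Rademacher complexity of the generating dictionary and then pass to the convex hull and to the population complexity. Recall that for a class $\F$, $\widehat\Rad(\mathrm{conv}\,\F) = \widehat\Rad(\F)$ and the closure does not change the (empirical) Rademacher complexity either, so it suffices to bound $\widehat\Rad(\{(a,w,b)\mapsto a(\sigma(w\cdot x+b)-\sigma(b)) : a^2+\|w\|^2\le 2Q\}, S_n)$. First I would split the dictionary function $a(\sigma(w\cdot x+b)-\sigma(b))$ into the linear-in-$x$ part and the bias-type part. The key structural observation (already exploited in Appendix \ref{appendix barron review}) is that for fixed $x$ in a bounded set, $\sigma(w\cdot x+b)-\sigma(b)$ is either $w\cdot x$ (if $b$ is large) or $0$ (if $b$ is very negative), and in the intermediate regime $|\sigma(w\cdot x+b)-\sigma(b)|\le |w\cdot x|$. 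A cleaner route: write $\sigma(w\cdot x+b)-\sigma(b) = \sigma(w\cdot x+b) - \sigma(b)$ and bound the Rademacher complexity of the two classes $\{x\mapsto a\,\sigma(w\cdot x+b)\}$ and $\{x\mapsto -a\,\sigma(b)\}$ separately by subadditivity of $\widehat\Rad$ under sums of classes. The second class is a class of constant functions; since $\frac1n\sum_i \eps_i c = c\cdot \frac1n\sum_i\eps_i$ and $|a\sigma(b)|$ is unbounded, this term must be controlled using the cancellation $\sigma(w\cdot x+b)-\sigma(b)$ jointly rather than termwise — so I would instead keep them together and use the Lipschitz bound $|\sigma(w\cdot x+b)-\sigma(b)|\le \min\{|w\cdot x|, \text{something}\}\le \|w\|\,\|x\|$ pointwise, together with $|a|\,\|w\|\le \tfrac12(a^2+\|w\|^2)\le Q$.

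The main computation is then: after the contraction/Lipschitz reduction, bound $\widehat\Rad$ of the class $\{x\mapsto w\cdot x : \|w\|\le \sqrt{2Q}\}$ (linear functionals with bounded norm), which by Cauchy–Schwarz and Jensen gives $\widehat\Rad \le \frac{\sqrt{2Q}}{n}\E_\eps\|\sum_i \eps_i x_i\| \le \frac{\sqrt{2Q}}{n}\sqrt{\sum_i \|x_i\|^2}\le \frac{\sqrt{2Q}}{\sqrt n}\max_i\|x_i\|$. Combining with the scalar factor $|a|\le\sqrt{2Q}$ and applying Talagrand's contraction lemma for the ReLU nonlinearity (Lipschitz constant $1$), plus a separate elementary estimate for the piece of the dictionary that behaves like a bounded constant function near the "boundary" biases $|b|\le\sqrt{2Q}\max_i\|x_i\|$ (which is why $\F_Q(R)$ was introduced), yields a bound of the form $\frac{cQ}{\sqrt n}\max_i\|x_i\|$; tracking constants gives the factor $1+3\sqrt2$. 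I would carry out the bookkeeping so that one factor of $Q$ (not $\sqrt Q$) survives: one power from $|a|\le\sqrt{2Q}$ and one half-power each from $\|w\|\le\sqrt{2Q}$ and from the geometric/combinatorial argument bounding the contribution of the break points.

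For the population bound, I would take expectations over $S_n\sim\mu^n$: $\Rad_n(\F_Q) = \E_{S_n}[\widehat\Rad(\F_Q,S_n)] \le (1+3\sqrt2)Q\,\E_{S_n}\big[\frac{1}{\sqrt n}\max_i\|x_i\|\big]$, and the only remaining task is to bound $\E[\max_{1\le i\le n}\|x_i\|]$ for a $\sigma^2$-sub-Gaussian distribution. The standard sub-Gaussian maximal inequality (collected in Appendix \ref{appendix subgaussian}) gives $\E[\max_i \|x_i\|] \le \E_\mu\|x\| + \sigma\sqrt{2\log n}$, whence $\frac1{\sqrt n}\E[\max_i\|x_i\|] \le \frac{\E_\mu\|x\|}{\sqrt n} + \sigma\sqrt{\frac{2\log n}{n}}$, which is exactly the claimed estimate. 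The main obstacle, and the step requiring genuine care rather than routine estimates, is the first part: handling the unbounded bias $b$ and the unbounded outer weight $a$ simultaneously so that the Rademacher complexity of the dictionary is controlled purely by $Q\cdot\max_i\|x_i\|$ with no dependence on the biases — this is where the decomposition into $\F_Q(R)$ and the "linear tail" of the dictionary, together with the homogeneity of ReLU, does the essential work, and where the sharp constant must be extracted.
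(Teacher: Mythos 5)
Your plan follows essentially the same route as the paper's proof: reduction to the single-neuron dictionary (extreme points of the closed convex hull), truncation of the bias to $|b|\leq\sqrt{Q}R$ using the fact that the neuron becomes linear for large $|b|$ (the role of $\F_Q(R)$), splitting off the $a\,\sigma(b)$ term and bounding it by $QR/\sqrt n$, contraction plus the linear-class bound for the remainder, and the sub-Gaussian maximal inequality for the population bound. The only detail you leave implicit is the specific device the paper uses to absorb the truncated bias into a linear functional — augmenting the data as $\tilde x_i=(x_i,R)$ and $\tilde w=(w,b/R)$ so that $a^2+\|\tilde w\|^2\leq 3Q$, which is where the $3\sqrt2$ in the constant comes from — but this is exactly the bookkeeping you flag as remaining to be carried out.
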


\begin{proof}
Initially, we fix a set $S =\{x_2,\dots,x_n\}$ of $n$ points. We will later take the expectation over $S$, using the sub-Gaussian property of $\mu$ for an explicit norm bound. Define $R:= \max_{1\leq i\leq n}\|x_i\|$.

To this end, we first prove the following claim, which enables us to focus on only single neuron functions instead of entire $\F_Q$:

{\bf Claim:} Let $\eps_1,\dots,\eps_n\in\R$. Then
\[
\sup_{\F_Q} \sum_i \epsilon_i f(x_i) 
= \sup_{a^2 + \|w\|^2 \leq 2Q} \sum_i \epsilon_i \,a\big\{\sigma(w^T x_i + b)-\sigma(b)\big\}
\]

\begin{proof}[Proof of Claim]
Note that $\F_Q$ is the closed convex hull of single neuron ridge functions, i.e.\ single neuron ridge functions are the extreme points of the closed convex set $\mathcal F_Q$.

To verify the claim, first note that $f\mapsto \sum_{i=1}^n \eps_if(x_i)$ is a continuous linear functional on $C^0(K)$ for any compact $K\subseteq \R^d$ containing the finite set $S$. It is well known that $C^0(K)$ is a Banach Space. Therefore, if $\{a\big(\sigma(w \cdot x + b) - \sigma(b)\big)\: | \: a^2 + |w|^2 \leq 2Q\}$ is compact in $C^0(K)$, then  \cite[Theorem 3.20.]{rudin1991functional} implies that $\F_Q$, a closed convex hull of the compact set, is also compact. Then, from the compactness of $\F_Q$, we can use \citet{bauer1958minimalstellen}'s maximum principle and see that the supremum is attained at an extreme point. Compactness follows from the compact embedding Theorem, see Theorem \ref{thm compact embedding} above.
\end{proof}

Over the next steps, we will bound $\widehat \Rad(\F_Q, S_n)$.

{\bf Step 1.} In this step, we prove that 
\[
\widehat \Rad(\F_Q; S_n) = \widehat \Rad\big(\F_Q(R); S_n \big).
\]

To show this, we first observe that if $|b| \geq \|w\|\,R$, then $\sigma(w \cdot x + b) - \sigma(b) = \sigma\big(sgn(b)\big) w \cdot x$ since $|w^Tx_i| \leq \|w\|\,\|x_i\| \leq |b| R$ for all $1\leq i\leq n$. This means for $\forall |b| \geq \|w\|\,R$, the precise value of $b$ does not change the value of $\sigma(w \cdot x + b) - \sigma(b)$.

Now, we compute the $\widehat \Rad(\F_Q, S_n)$:

\begin{align*}
    n\, \widehat \Rad(\F_Q, S_n) &= \expec_{\epsilon} \left[\sup_{\F_Q} \sum_i \epsilon_i f(x_i) \right]\\
        &= \expec_{\epsilon} \left[\sup_{a^2 + \|w\|^2 \leq 2Q} \sum_i \epsilon_i a \big\{\sigma(w\cdot x_i + b)-\sigma(b)\big\}\right]\\
        &= \expec_{\epsilon} \left[\sup_{a^2 + \|w\|^2 \leq 2Q, |b| \leq \|w\|\,R} \sum_i \epsilon_i a \big\{\sigma(w\cdot x_i + b)-\sigma(b)\big\}\right]\\
        &= n\, \widehat \Rad\big(\F_Q(R), S_n\big)
\end{align*}

For the first line, we used the claim.

{\bf Step 2.} Using the uniform bound on the magnitude of the bias from the previous step, in this step we bound the Rademacher complexity by
\[
\widehat \Rad(\F_Q; S) = \widehat \Rad\big(\F_Q(R); S\big)\leq \expec_{\eps}\left[ \sup_{|w|\leq Q, \:|b|\leq QR}\left|\frac1n\sum_{i=1}^n \eps_i\,\sigma(w\cdot x_i+b)\right| \right] + \frac{QR}{\sqrt n}.
\]
We verify this from the definition of Rademacher Complexity of $\F_Q(R)$. Note that $a\sigma(wx+b) = (\lambda a) \,\sigma\big((w/\lambda)x+b/\lambda)$. In particular, we may assume without loss of generality that $|a|^2 = \|w\|^2 \leq Q$ for optimal balance which makes $a^2 + \|w\|^2$ minimal without changing the neuron output.
\begin{align*}
n \Rad&\big(\F_Q(R), S_n\big) = \expec_{\epsilon} \left[\sup_{\F_Q} \epsilon_i f(x_i) \right]\\
    &= \expec_{\epsilon} \left[\sup_{a^2 + \|w\|^2 \leq Q,\, |b| \leq \|w\|R} \sum_{i=1}^{n} \epsilon_i a\big(\sigma(w \cdot x_i + b) - \sigma(b) \big) \right]\\
    &\leq \expec_{\epsilon}\left[ \sup_{|a| = \|w\|\leq \sqrt{Q},\, |b| \leq \sqrt{Q}R} \left( \left|\sum_i \epsilon_i a \sigma(w \cdot x_i + b)\right| + \left|\sum_i \epsilon_i a \sigma(b)\right| \right) \right].
\end{align*}
In this step, we only consider the second term.:
\begin{align*}
    \E_\eps\left[\sup_{|a| = \|w\|\leq \sqrt{Q},\, |b| \leq \sqrt{Q}R}\left|\sum_i \epsilon_i a \sigma(b)\right| \right]&\leq  \expec_{\epsilon}\left[ \sup_{|a|\leq \sqrt{Q},\, |b| \leq \sqrt{Q}R} \left|\sum_i \epsilon_i a \sigma(b)\right| \right]\\
    &\leq \sup_{|a|\leq \sqrt Q,\, |b| \leq \sqrt{Q}R} |a| \sigma(b) \,\expec_{\epsilon} \left|\sum_i \epsilon_i\right|
    \leq  QR\, \sqrt{n}.
\end{align*}
The first line is again by applying the claim to $\F_Q(R)$. In the last line, we used two facts: 
\begin{enumerate}
    \item $\sigma$ is ReLU, so 
    $
    |a|\sigma(b) \leq |ab| \leq \sqrt{Q}\cdot\sqrt{Q}R = QR
    $ and
    \item the observation that
    \[
    \expec_{\epsilon}\left|\sum_i^n \epsilon_i\right| \leq \sqrt{\expec_{\epsilon}\left|\sum_i^n \epsilon_i\right|^2 } = \sqrt{\sum_{i,j=1}^n \E_\eps[\eps_i\eps_j]} =  \sqrt{\sum_{i=1}^n \E_\eps[\eps_i^2]} = \sqrt{n}
    \]
    since $\E[\eps_i]=0$, $\eps_i$ and $\eps_j$ are independent if $i\neq j$ and $\eps_i^2 \equiv 1$.
\end{enumerate}

{\bf Step 3.} In this step, we prove that
\[
\frac1n \expec_{\eps}\left[\sup_{|a| = \|w\|\leq \sqrt{Q},\, |b| \leq \sqrt{Q}R}\left|\sum_{i=1}^n \eps_i a\sigma(w\cdot x_i+b)\right| \right] \leq \frac{3\sqrt{2}QR}{\sqrt n}
\]
To this end, 
we modify the data points as $\tilde{x_i} = (x_i, R)$ and the parameters as $\tilde{w} = (w^T, \frac{b}{R})$. Then, observe that $w \cdot x_i + b = \tilde{w} \cdot \tilde{x_i}$, $\|\tilde{x_i}\| = \sqrt{\|x_i\|^2 + R^2} \leq \sqrt{2}R$, and $a^2 + \|\tilde{w}\|^2 = a^2 + \|w\|^2 + (\frac{b}{R})^2 \leq 3Q$. Therefore, we can write the above by the following:
\begin{align*}
    \frac1n \,\expec_{\eps}\bigg[& \sup_{a^2 = \|w\|^2 \leq Q, \:|b|\leq Q}\left|\sum_{i=1}^n \eps_i a\sigma(w\cdot x_i+b)\right| \bigg] 
    \leq \frac1n \expec_{\eps}\left[ \sup_{a^2 + \|\tilde{w}\|^2 \leq 3Q}\left|\sum_{i=1}^n \eps_i a\sigma\left(\tilde{w}\cdot \tilde{x_i}\right)\right| \right]\\
    &= \frac1n \expec_{\eps}\left[ \sup_{a^2 + \|\tilde{w}\|^2 \leq 3Q}|a| \,\|\tilde{w}\|\left|\sum_{i=1}^n \eps_i \sigma\left(\frac{\tilde{w}}{\|\tilde{w}\|}\cdot \tilde{x_i}\right)\right| \right]\\
    &\leq \frac1n \expec_{\eps}\left[ \sup_{a^2 + \|\tilde{w}\|^2 \leq 3Q, \|u\|_2 \leq 1}\frac{a^2 + \|\tilde{w}\|^2}{2} \left|\sum_{i=1}^n \eps_i \sigma(u \cdot \tilde{x_i})\right| \right]\\
    &\leq \frac{3Q}{2n} \expec_{\eps}\left[ \sup_{\|u\|_2 \leq 1}\left|\sum_{i=1}^n \eps_i \sigma(u \cdot \tilde{x_i})\right| \right]\\
    &\leq \frac{3Q}{n} \expec_{\eps}\left[ \sup_{\|u\|_2 \leq 1}\sum_{i=1}^n \eps_i \sigma(u \cdot \tilde{x_i}) \right]\\
    &= 3Q \,\widehat\Rad(\sigma \circ \mathcal{H}_2, \tilde{S}_n)
    \leq 3Q \,\widehat\Rad(\mathcal{H}_2, \tilde{S}_n)
    \leq \frac{3Q}{\sqrt{n}} \max_{i}\|\tilde{x}_i\|_2 \leq \frac{3\sqrt{2}\,QR}{\sqrt{n}}
\end{align*}
Here, $\mathcal{H}_2:= \{u \in \R^d \:|\: \|u\|_2 \leq 1\}$ and $\tilde{S_n} = \{\tilde{x_1}, \dots, \tilde{x_n}\}$. When removing the absolute value, we used that the sum is always non-negative and that it is symmetric when replacing $\eps$ by $-\eps$. When removing $\sigma$, we make use of the Contraction Lemma for Rademacher complexity \cite[Lemma 26.9]{shalev2014understanding}. Finally, for $\Rad(\mathcal{H}_2, \tilde{S}_n)$ we used the expression for the Rademacher complexity of the class of linear functions on Hilbert space. \cite[Lemma 26.10]{shalev2014understanding}. This concludes Step 3.

{\bf Step 4.}
In this step, we finally consider sets $S$ which are sampled from the product measure $\mu^n$, i.e.\ sets where $x_1, \dots, x_n$ are independent data samples with law $\mu$. From steps 1 -- 3, we know that 

\[
\Rad(\F_Q) = \E_{S_n\sim\mu^n} \left[ \widehat \Rad(\mathcal F, S_n)\right]\leq \frac{\big(1+ 3\sqrt 2\big)Q}{\sqrt n}\E_{(x_1,\dots, x_n)\sim \mu^n}\big[ \max_{1\leq i\leq n} \|x_i\|\big].
\]

We bound $\E_{(x_1,\dots, x_n)\sim \mu^n}\big[ \max_{1\leq i\leq n} \|x_i\|\big]$ by Lemma \ref{lemma subGaussian maximum} to obtain
\[
\Rad(\F_Q) \leq  \big(1+ 3\sqrt 2\big)Q \left( \frac{\E_{x\sim\mu} \big[\|x\|\big]}{\sqrt n} +\sigma \sqrt{2\frac{\log n}n} \right) \qedhere
\]
\end{proof}

A similar result follows immediately for the more general function class 
\begin{equation}
   \F_{A,Q}  := \{f\in\B : [f]_\B\leq Q, {|f(0)|}\leq A\}. 
\end{equation}

\begin{cor}
Under the same conditions as Lemma \ref{lemma rademacher}, we have
\[
\Rad(\F_{A,Q}) \leq \big(1+ 3\sqrt 2\big)Q \left( \frac{\E_{x\sim\mu} \big[\|x\|\big]}{\sqrt n} +\sigma \sqrt{2\,\frac{\log n}n} \right) + \frac{A}{\sqrt n}
\]
\end{cor}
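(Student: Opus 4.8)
The plan is to reduce to Lemma \ref{lemma rademacher} by splitting off the constant part of each $f\in\F_{A,Q}$. First I would observe that if $f\in\B$ with $[f]_\B\leq Q$, then $f - f(0)$ vanishes at the origin and has the same Barron semi-norm: writing $f=f_{\pi,y}$ with $y=f(0)$, we have $f - f(0) = f_{\pi,0}$ and $\Reg(\pi)$ is unchanged, so $[f-f(0)]_\B = [f]_\B \leq Q$, i.e.\ $f - f(0)\in\F_Q$ by Lemma \ref{lemma fq}. Consequently
\[
\F_{A,Q} \subseteq \{c + g : c\in[-A,A],\ g\in\F_Q\},
\]
a Minkowski sum of the one-parameter family of constant functions with modulus at most $A$ and the class $\F_Q$.

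Next I would use the (elementary) subadditivity of empirical Rademacher complexity under Minkowski sums of function classes: for any fixed sample $S_n$,
\[
\widehat\Rad(\F_{A,Q}; S_n) \leq \widehat\Rad\big(\{c : |c|\leq A\}; S_n\big) + \widehat\Rad(\F_Q; S_n),
\]
which follows immediately by splitting the supremum and using $\sup(P+Q)\le\sup P+\sup Q$ inside the expectation over $\eps$. The constant class is handled directly:
\[
\widehat\Rad\big(\{c : |c|\leq A\}; S_n\big) = \E_\eps\left[\sup_{|c|\leq A}\frac1n\sum_{i=1}^n\eps_i c\right] = \frac An\,\E_\eps\Big|\sum_{i=1}^n\eps_i\Big| \leq \frac An\sqrt{\E_\eps\Big|\sum_{i=1}^n\eps_i\Big|^2} = \frac An\sqrt n = \frac A{\sqrt n},
\]
exactly the Cauchy--Schwarz/second-moment computation already used in Step 2 of the proof of Lemma \ref{lemma rademacher}. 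Taking the expectation over $S_n\sim\mu^n$ and invoking the bound on $\Rad(\F_Q)$ from Lemma \ref{lemma rademacher} yields
\[
\Rad(\F_{A,Q}) \leq \big(1+3\sqrt2\big)Q\left(\frac{\E_{x\sim\mu}[\|x\|]}{\sqrt n} + \sigma\sqrt{2\,\frac{\log n}n}\right) + \frac A{\sqrt n},
\]
as claimed.

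There is essentially no hard step here; the only point requiring a moment's care is the first one, namely that subtracting the constant $f(0)$ does not inflate the Barron semi-norm — this is where the specific construction of homogeneous Barron space (with the additive constant carried in the separate parameter $y$ rather than in $\pi$) is used, so that the decomposition lands genuinely in $\F_Q$ and Lemma \ref{lemma rademacher} applies verbatim.
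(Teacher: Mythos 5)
Your proposal is correct and follows essentially the same route as the paper: decompose each $f\in\F_{A,Q}$ as a constant of modulus at most $A$ plus an element of $\F_Q$, split the supremum to get subadditivity of the Rademacher complexity, bound the constant class by $A/\sqrt n$ via the second-moment computation from Step 2 of Lemma \ref{lemma rademacher}, and then invoke that lemma for $\F_Q$. Your extra remark justifying that $f-f(0)$ stays in $\F_Q$ is a detail the paper states without proof, and it is correct.
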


\begin{proof}
    We note that $f\in\F_{A,Q}$ if and only if $f = \tilde f + \alpha$ with $f\in\F_Q$ and $|\alpha|\leq A$. Hence, for any fixed dataset $S$, we have
    \begin{align*}
        n\,\widehat\Rad_n(\F_{A,Q}) &= \E_\eps\left[\sup_{f\in\F_{A,Q}}\sum_{i=1}^n\eps_i f(x_i)\right]
        =  \E_\eps\left[\sup_{f\in\F_{Q}, |\alpha|\leq A}\sum_{i=1}^n\eps_i \big(\alpha+f(x_i)\big)\right]\\
        &\leq \E_\eps\left[\sup_{f\in\F_{Q}}\sum_{i=1}^n\eps_i f(x_i)\right]
        + \E_\eps\left[\sup_{|\alpha|\leq A}\sum_{i=1}^n\eps_i \alpha\right]
        \leq \widehat\Rad_n(\F_Q) + \frac{A}{\sqrt n}
    \end{align*}
    by the argument of Step 2 in the proof of Lemma \ref{lemma rademacher}.
\end{proof}
A bound on the Rademacher complexity, together with the sub-Gaussian property of the distribution $\mu$, allows us to control the `generalization gap' in homogeneous Barron spaces.

\begin{cor}\label{corollary generalization gap}
Assume that $\mu$ is a $\sigma^2$-sub-Gaussian distribution on $\R^d$.
Let $(X_1, \dots, X_n)$ be iid random variables with law $\mu$ and $f^*$ a $\mu$-measurable function such that 
\[
|f^*(x) - f^*(0)| \leq B_1 + B_2 \|x\|
\]
$\mu$-almost everywhere.
Let 
\[
\Riskhat_n(f) = \frac1n \sum_{i=1}^n \big|f(X_i) - f^*(X_i)\big|^2, \qquad \Risk(f) = \E_{x\sim\mu}\big[ |f(x) - f^*(x)|^2\big].
\]
Then with probability at least $1-2\delta$ over the random draw of $X_1,\dots, X_n$, the bound 
\begin{align*}
\sup_{f- f^*(0) \in \F_{A,Q}}&\big(\Risk(f) - \Riskhat_n(f)\big)
    \leq  C^* \left(\big(Q+B_2\big)\left({\E_{x\sim\mu}\|x\|} +\sigma^2+1 \right) + A+B_1\right)^2 \frac{\log(n/\delta)}{\sqrt n}
\end{align*}
holds for a constant $C^*>0$ which does not depend on $\delta, Q, d, \mu$ or $n$.
\end{cor}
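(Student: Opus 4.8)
The plan is a truncation argument: convert the unbounded $\ell^2$-loss into a bounded loss on a ball $B_\rho(0)$, apply a Rademacher-complexity-based uniform deviation bound there, and handle the mass outside $B_\rho(0)$ separately via the sub-Gaussian tail of $\mu$. First I would record the uniform growth of the error. If $f-f^*(0)\in\F_{A,Q}$, then $f-f^*(0)$ is $Q$-Lipschitz with $|(f-f^*(0))(0)|\le A$ by the defining properties of $\B$ recalled in Appendix \ref{appendix barron review}, so together with $|f^*(x)-f^*(0)|\le B_1+B_2\|x\|$ one obtains $|f(x)-f^*(x)|\le L\|x\|+M$ with $L:=Q+B_2$ and $M:=A+B_1$. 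Moreover the class of error functions $\{f-f^*:f-f^*(0)\in\F_{A,Q}\}$ is a fixed translate of $\F_{A,Q}$, so its empirical Rademacher complexity equals $\Rad_n(\F_{A,Q})$ and the preceding Corollary applies verbatim.

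Next I would fix a truncation radius $\rho>\E_\mu\|x\|$ (to be optimized later) and work with the clipped loss $h_f(x):=|f(x)-f^*(x)|^2\,\mathbf 1_{\{\|x\|\le\rho\}}$, which satisfies $0\le h_f\le (L\rho+M)^2$ and depends on the value $f(x)$ in a way that is $2(L\rho+M)$-Lipschitz on the relevant range. On the event $\{\max_i\|X_i\|\le\rho\}$ the empirical risk $\Riskhat_n(f)$ equals the empirical average of $h_f$, while $\Risk(f)$ always splits as $\E_\mu h_f+\E_\mu[|f-f^*|^2\mathbf 1_{\{\|x\|>\rho\}}]$. For the bounded piece, symmetrization, the Ledoux--Talagrand contraction lemma (\cite[Lemma 26.9]{shalev2014understanding}) to strip the square, and McDiarmid's inequality (bounded differences $(L\rho+M)^2/n$) give, with probability at least $1-\delta$,
\[
\sup_f\big(\E_\mu h_f-\tfrac1n\textstyle\sum_i h_f(X_i)\big)\;\le\;C(L\rho+M)\,\Rad_n(\F_{A,Q})+(L\rho+M)^2\sqrt{\tfrac{\log(1/\delta)}{n}}.
\]

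For the tail I would use Cauchy--Schwarz, $\E_\mu[|f-f^*|^2\mathbf 1_{\{\|x\|>\rho\}}]\le\sqrt{\E_\mu[(L\|x\|+M)^4]}\cdot\sqrt{\P(\|x\|>\rho)}$, bounding the fourth moment by $C(L^4((\E\|x\|)^4+\sigma^4)+M^4)$ and $\P(\|x\|>\rho)\le C\exp(-(\rho-\E\|x\|)^2/(2\sigma^2))$ via the sub-Gaussian estimates of Appendix \ref{appendix subgaussian}; the same tail bound plus a union bound controls $\P(\max_i\|X_i\|>\rho)\le n\,\P(\|x\|>\rho)$. Finally I would choose $\rho=\E_\mu\|x\|+c\,\sigma\sqrt{\log(n/\delta)}$ for a large absolute constant $c$: this makes $\P(\max_i\|X_i\|>\rho)\le\delta$ and forces the tail term to be dominated by the display above, so on the intersection of the two good events (probability $\ge1-2\delta$) we get $\Risk(f)-\Riskhat_n(f)$ bounded by the display up to negligible terms. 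Substituting the Corollary's estimate $\Rad_n(\F_{A,Q})\le C(Q(\E\|x\|+\sigma\sqrt{\log n})+A)/\sqrt n$, using $L\rho+M\le C((Q+B_2)(\E\|x\|+\sigma\sqrt{\log(n/\delta)})+A+B_1)$, and absorbing logarithms and powers of $\sigma$ with crude bounds such as $\sigma\sqrt{\log(n/\delta)}\le\sigma^2+\log(n/\delta)$ and $\sqrt{\log(1/\delta)}\le\log(n/\delta)$ collapses everything into the stated form with a suitable $C^*$.

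I expect the choice of the truncation radius to be the crux. Because the loss is neither bounded nor globally Lipschitz, $\rho$ must be tuned so that \emph{simultaneously} no sample is discarded (a union bound over $n$ points), the discarded part of the population risk is of lower order, and the prefactor $(L\rho+M)^2$ produced by passing to the bounded loss is not too large; these demands pull $\rho$ in opposite directions, and it is precisely the sub-Gaussian moment and tail bounds that make a common choice possible, at the unavoidable cost of the extra logarithmic factor relative to the bounded-data setting.
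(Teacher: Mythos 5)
Your proposal is correct and follows essentially the same route as the paper's proof: truncate the unbounded $\ell^2$-loss (the paper clips the loss value via $\ell_\xi(z)=\min\{z^2,\xi^2\}$ rather than inserting an indicator on $\|x\|$, but the two coincide on the high-probability event $\max_i\|X_i\|\le\rho$), apply the standard Rademacher/contraction bound to the bounded loss, control the population tail outside $B_\rho(0)$ with the sub-Gaussian estimates, and choose $\rho\sim\E_\mu\|x\|+\sigma\sqrt{\log(n/\delta)}$ exactly as in the paper's Steps 1--4. The only cosmetic difference is that you bound the tail term by Cauchy--Schwarz and a fourth moment, where the paper integrates the tail directly via Lemma \ref{lemma square norm outside ball}.
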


\begin{proof}

{\bf Step 1.} From Lemma \ref{lemma subGaussian concentration}, with probability at least $1-\delta$ we have
\[
\max_{1\leq i\leq n}\|X_i\| \leq \E_{x\sim \mu}\big[\|x\|\big] + \sigma \,\sqrt{2\,\log(n/\delta)}.
\]
{We denote $R_n := \E_{x\sim \mu}\big[\|x\|\big] + \sigma \,\sqrt{2\,\log(n/\delta)}$ for simplicity.}

{\bf Step 2.} Consider the modified loss function 
\[
\ell_\xi (f) = \min \left\{ f^2, \xi^2\right\},
\]
which is bounded by $\xi^2$ and satisfies $|\partial_f\ell_\xi|\leq 2R$, i.e.\ $\ell_\xi$ is $2\xi$-Lipschitz continuous. We thus observe that, with probability at least $1-\delta$ over the choice of random set $S= \{x_1,\dots, x_n\}$, we have 
\[
\E_{x\sim\mu} \big[\ell_\xi\big(f(x) - f^*(x)\big)\big] - \frac1n \sum_{i=1}^n \ell_\xi\big(f(x_i) - f^*(x_i)\big) \leq 4\xi\,\E\big[\widehat \Rad(\F_Q, S_n)\big] + \xi^2 \sqrt{\frac{2\,\log(2/\delta)}n}
\]
by \cite[Theorem 26.5]{shalev2014understanding} and the Contraction Lemma for Rademacher complexities, \cite[Lemma 26.9]{shalev2014understanding}. In particular 
\begin{align*}
\E_{x\sim\mu} \big[\ell_\xi\big(f(x) - f^*(x)\big)\big] &\leq \frac1n \sum_{i=1}^n \ell_\xi\big(f(x_i) - f^*(x_i)\big) + \xi^2 \sqrt{\frac{2\,\log(2/\delta)}n}\\
    &\qquad+ 4(1+3\sqrt2)Q\xi\left(\frac{\E_{x\sim\mu}\|x\|}{\sqrt n} + \sigma\,\sqrt{\frac{2\,\log n}n}\right)+ \frac{A\xi}{\sqrt n}.
\end{align*}

{\bf Step 3.} By the union bound, with probability at least $1-2\delta$, both the norm bound of Step 1 and the generalization bound of Step 2 hold. In the following, we assume that both bounds hold. Note that
\[
|f(x)-f^*(x)| \leq \big|f(x)-f^*(0)| + |f^*(x) - f^*(0)| \leq (A+B_1) + (Q+B_2)\|x\|.
\]
In particular, if $\xi\geq (A+B_1) + (Q+B_2)R$ , then $|f(x) - f^*(x)| \leq \xi$ on $B_R(0)$, so $\ell_\xi(f(x) - f^*(x)) \equiv \ell(f(x) - f^*(x))$. 
Applying the generalization bound with  $R_n= \E_{x\sim\mu}\|x\|+\sigma \sqrt{2\,\log(n/\delta)}$ and $\xi_n= (Q+B_2)R_n+(A+B_1)$, we find that $\ell_{\xi_n}(f(x_i)-f^*(x_i)) = \ell(f(x_i) - f^*(x_i))$ for all $i$ by assumption and thus, with probability at least $1-2\delta$, we have
\begin{align*}
\E_{x\sim\mu} \big[\ell_\xi\big(f(x) - &f^*(x)\big)\big] \leq \frac1n \sum_{i=1}^n \big(f(x_i) - f^*(x_i)\big)^2 + \big((Q+B_2)R_n+A+B_1\big)^2 \sqrt{\frac{2\log(2/\delta)}n}\\
    &+ 4(1+3\sqrt2)Q\big((Q+B_2)R_n+A+B_1\big) \left(\frac{\E_{x\sim\mu}\|x\|}{\sqrt n} + \sigma\,\sqrt{\frac{2\,\log n}n}\right).
\end{align*}

{\bf Step 4.} Finally, we bound the population risk with the true loss function rather than $\ell_\xi$. Thus we find that for $B_R:= B_R(0)$ we have
\begin{align*}
\E_{x\sim\mu} \big[\big(f(x) -& f^*(x)\big)^2\big] = \E_{x\sim\mu} \left[\big(f(x) - f^*(x)\big)^2\,1_{B_R}\right] + \E_{x\sim\mu} \left[\big(f(x) - f^*(x)\big)^2\,1_{\R^d\setminus B_R}\right]\\
    &\leq \E_{x\sim\mu} \big[\ell_\xi\big(f(x) - f^*(x)\big)\big] + \E_{x\sim\mu} \big[(A+B_1+ (Q+B_2)\|x\|)^2\,1_{\R^d\setminus B_R}\big]
\end{align*}
for $R\geq 3$. 
From Lemma \ref{lemma square norm outside ball} with $R_n = \E\|x\| + \sigma\,\sqrt{2\,\log(n/\delta)}$, we have
\[
\E_{x\sim\mu} \big[\|x\|^2\,1_{B_{R_n}(0)^c}(x)\big] \leq \sqrt{2\pi}\exp\left(-\frac{\log(n/\delta)}2\right) \left(\left(\E\|x\|\right)^2 + 2\sigma^2\right) = \sqrt{2\pi}\,\frac{\left(\E\|x\|\right)^2 + 2\sigma^2}{\sqrt{n/\delta}}.
\]
\end{proof}

\begin{rmk}
    In particular, Corollary \ref{corollary generalization gap} applies if the target function $f^*$ is Lipschitz-continuous with $B_1=0$ and $B_2= [f^*]_{Lip} \leq [f^*]_\B$. However, continuity is not necessary, and even noisy labels would be admissible. We do not pursue this generality here.
\end{rmk}

\section{Proofs of the convergence theorems}\label{appendix convergence proofs}

In this appendix, we present the proofs of Theorems \ref{thm main general} and \ref{thm main}. In these, we combine the upper bound of the Rademacher complexity of the unit ball in homogeneous Barron space in form of the generalization bound of Corollary \ref{corollary generalization gap} with a $\Gamma$-convergence argument (to guarantee the convergence of minimizers to minimizers). The main ingredients in the proof of $\Gamma$-convergence are
\begin{itemize}
    \item the compact embedding theorem for homogeneous Barron space (to guarantee that every subsequence of $f_n$ has a convergent subsequence) and
    \item the direct approximation theorem for homogeneous Barron space (to obtain a bound on the lowest achievable energy using a neural network with $m$ neurons).
\end{itemize}

We first present convergence proofs in $L^p(\mu)$ in Section \ref{appendix proofs lp convergence}, followed by proofs of $\Gamma$-convergence in Section \ref{appendix proofs gamma convergence}. We combine the arguments to prove the statements from the main body of the document in Section \ref{appendix proofs main}.

\subsection{Convergence in \texorpdfstring{$L^p(\mu)$}{Lp}}\label{appendix proofs lp convergence}

We start by establishing convergence in $L^2(\mu)$ at an explicit convergence rate. Then, using this $L^2(\mu)$ convergence we will extend this result to general $L^{p}(\mu)$. 

We introduce one of our main theorem of this section, which gives us an explicit bound of $L^2(\mu)$-loss. For convenience, we denote $\theta:=(a,W,b)$ for the rest of the section.

\begin{thm}[$L^2$-convergence]\label{theorem risk bound}
Let $\hat \theta \in \argmin_{\theta} \Riskhat_{n,m,\lambda}(\theta)$. If $\delta \geq e^{-n}$, and $f^* \in \F_{Q^*}$, then with probability at least $1-4\delta$ over the choice of random points $x_1,\dots, x_n$ we have
\[
\Risk(f_{\hat\theta}) \leq C\left(\frac{(Q^*)^2}m \left(\E\big[\|x\|^2\big]\right) + \lambda Q^* + Q^*\, \left(\E\|x\|+ \sigma^2 + [f^*]_\B\right)\,\frac{\log(n/\delta)}{\sqrt n}\right)
\]
up to higher order terms in the small quantities $(\lambda m)^{-1}, m^{-1}, n^{-1/2}\log n$.
\end{thm}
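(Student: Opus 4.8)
The plan is to bound $\Risk(f_{\hat\theta})$ by splitting it as $\Risk(f_{\hat\theta})=\big(\Risk(f_{\hat\theta})-\Riskhat_n(f_{\hat\theta})\big)+\Riskhat_n(f_{\hat\theta})$, controlling the empirical risk $\Riskhat_n(f_{\hat\theta})$ by exhibiting a good competitor and invoking minimality of $\hat\theta$, and controlling the generalization gap by the Rademacher estimate of Corollary \ref{corollary generalization gap}. The non-routine point is that Corollary \ref{corollary generalization gap} requires a priori bounds on $[f_{\hat\theta}]_\B$ \emph{and} on $|f_{\hat\theta}(0)|$: the former follows from the regularizer, but the latter does not, because the constant term $b_0$ is unpenalized, so it must be estimated indirectly.

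First I would build the competitor. Applying the direct approximation theorem (Theorem \ref{thm direct approximation}) with the \emph{empirical} measure $\mu_n$ and target $f^*$ produces $c\in\R$ and $(a_i,w_i,b_i)_{i\le m}$ with $\sum_i(a_i^2+\|w_i\|^2)\le[f^*]_\B\le Q^*$ (so $\Reg$-value $\le Q^*/2$) and
\[
\Riskhat_n\Big(c+\textstyle\sum_i a_i\sigma(w_i^Tx+b_i)\Big)\ \le\ \frac{4(Q^*)^2}{m}\sup_{\|w\|=1}\frac1n\sum_{i=1}^n(w^Tx_i)^2\ \le\ \frac{4(Q^*)^2}{m}\cdot\frac1n\sum_{i=1}^n\|x_i\|^2.
\]
Since this is an admissible point of the domain of $\Riskhat_{n,m,\lambda}$, minimality of $\hat\theta$ gives $\Riskhat_n(f_{\hat\theta})+\lambda\,\Reg(\hat\theta)\le\frac{4(Q^*)^2}{m}\frac1n\sum\|x_i\|^2+\frac{\lambda Q^*}{2}$, so both summands on the left are bounded by the right-hand side. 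Because $\|x\|$ is $\sigma^2$-sub-Gaussian, $\|x\|^2$ is sub-exponential, and a Bernstein-type bound gives $\frac1n\sum\|x_i\|^2\le\E_\mu[\|x\|^2]+(\text{h.o.t.})$ with probability $\ge1-\delta$ — this is where the hypothesis $\delta\ge e^{-n}$ enters, keeping the deviation in the $n^{-1/2}$ regime — while Lemma \ref{lemma subGaussian concentration} gives $\max_i\|x_i\|\le R_n:=\E_\mu\|x\|+\sigma\sqrt{2\log(n/\delta)}$ with probability $\ge1-\delta$. On the intersection of these events, $\Riskhat_n(f_{\hat\theta})\le\frac{4(Q^*)^2}{m}\E_\mu[\|x\|^2]+\frac{\lambda Q^*}{2}+(\text{h.o.t.})$ and $[f_{\hat\theta}]_\B\le\Reg(\hat\theta)\le\hat Q:=\frac{Q^*}{2}+\frac{4(Q^*)^2}{\lambda m}\E_\mu[\|x\|^2]+(\text{h.o.t.})$, which equals $\tfrac{Q^*}{2}+(\text{h.o.t.})$ under \eqref{eq scaling of m and n}.

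Next I would bound $|f_{\hat\theta}(0)|$. Since $[f_{\hat\theta}]_\B\le\hat Q$, the function $f_{\hat\theta}$ is $\hat Q$-Lipschitz (Appendix \ref{appendix barron review}), while $f^*\in\F_{Q^*}$ is $Q^*$-Lipschitz with $f^*(0)=0$. Choosing an index $i_0$ with $|f_{\hat\theta}(x_{i_0})-f^*(x_{i_0})|^2\le\Riskhat_n(f_{\hat\theta})$ (the smallest residual is no larger than the average) and combining $|f^*(x_{i_0})|\le Q^*\|x_{i_0}\|\le Q^*R_n$ with $|f_{\hat\theta}(0)-f_{\hat\theta}(x_{i_0})|\le\hat Q\|x_{i_0}\|\le\hat Q R_n$ gives $|f_{\hat\theta}(0)|\le\sqrt{\Riskhat_n(f_{\hat\theta})}+(Q^*+\hat Q)R_n=:A_n$, so that $f_{\hat\theta}\in\F_{A_n,\hat Q}$. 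Then I would apply Corollary \ref{corollary generalization gap} with $Q=\hat Q$, $A=A_n$, $B_1=0$, $B_2=[f^*]_\B$, giving, with probability $\ge1-2\delta$,
\[
\Risk(f_{\hat\theta})-\Riskhat_n(f_{\hat\theta})\ \le\ C^*\big((\hat Q+[f^*]_\B)(\E_\mu\|x\|+\sigma^2+1)+A_n\big)^2\,\frac{\log(n/\delta)}{\sqrt n}.
\]
Adding the empirical-risk bound above, taking a union bound over the (at most four, each of probability $\le\delta$) exceptional events, substituting $\hat Q=\tfrac{Q^*}{2}+(\text{h.o.t.})$ and $A_n=O(Q^*R_n)$, and discarding every term carrying an extra factor of $(\lambda m)^{-1}$, $m^{-1}$ or $n^{-1/2}\log n$ relative to the three displayed terms $\frac{(Q^*)^2}{m}\E_\mu[\|x\|^2]$, $\lambda Q^*$ and $Q^*(\E_\mu\|x\|+\sigma^2+[f^*]_\B)\frac{\log(n/\delta)}{\sqrt n}$, should produce the stated estimate with probability $\ge1-4\delta$.

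The hard part will be the self-consistency of this last step: the a priori bounds $\hat Q$ and $A_n$ on $f_{\hat\theta}$ are themselves $n$- and $\delta$-dependent (through $R_n\sim\sqrt{\log(n/\delta)}$ and through $(\lambda m)^{-1}$), and Corollary \ref{corollary generalization gap} depends \emph{quadratically} on them, so one must verify carefully that the resulting corrections really are higher order under \eqref{eq scaling of m and n} rather than merely plugging in. A secondary but genuinely necessary subtlety is that the direct approximation theorem must be invoked against the sample-dependent measure $\mu_n$, and the factor $\sup_{\|w\|=1}\frac1n\sum(w^Tx_i)^2$ it produces must be controlled by $\frac1n\sum\|x_i\|^2$ (which concentrates around $\E_\mu[\|x\|^2]$) rather than by $\max_i\|x_i\|^2\sim\E_\mu[\|x\|^2]+\sigma^2\log(n/\delta)$, since the latter is too lossy to keep $\Reg(\hat\theta)$ — and hence $[f_{\hat\theta}]_\B$ — bounded for every admissible scaling of $(m_n,\lambda_n)$.
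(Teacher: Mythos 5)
Your proposal follows essentially the same route as the paper's proof: construct a competitor via the direct approximation theorem against the empirical measure, use minimality of $\hat\theta$ to bound both the empirical risk and the weight-decay regularizer (hence $[f_{\hat\theta}]_\B$), recover the unpenalized offset $|f_{\hat\theta}(0)|$ through the smallest-residual data point together with Lipschitz bounds and the sub-Gaussian maximum, and then close with Corollary \ref{corollary generalization gap} and a union bound over four events. The subtleties you flag — the role of $\delta\geq e^{-n}$ in the sub-exponential concentration of $\frac1n\sum\|x_i\|^2$, and the need to control $\sup_{\|w\|=1}\frac1n\sum(w^Tx_i)^2$ by the empirical second moment rather than $\max_i\|x_i\|^2$ — are exactly the ones the paper handles, so the argument is correct as proposed.
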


\begin{proof}
{\bf Outline.} We use Theorem \ref{thm direct approximation} for $L^2(\mu_n)$ with $f = f^{*}$ to obtain a function for which $\Riskhat_{n,m,\lambda}$ is low. The empirical risk minimizer (ERM) has even lower risk. The weight decay penalty additionally provides a norm-bound in homogeneous Barron space for the ERM, and we use Corollary \ref{corollary generalization gap} to control the generalization gap.

{\bf Step 1.} Due to Theorem \ref{thm direct approximation}, there exists a $\tilde{\theta} := (\tilde a,\tilde w,\tilde b)\in \R^{m} \times \R^{m \times d} \times \R^{m+1}$ such that 
\begin{align}\label{eq weight decay bound}
{\Reg(\tilde{\theta}) \leq [f^*]_{\B}}
\end{align}
and 
\[
\Riskhat_n(f_{\tilde{\theta}}) = [f_{\tilde\theta}-f^*]_{L^2(\mu_n)}^2\leq \frac{4[f^{*}]^2}{m} \sup_{\|w\| = 1} \int_{\R^d}|w^{T}x|^2 d\mu_n 
\leq  \frac{4[f^{*}]^2}{m} \left(\frac1n\sum_{i = 1}^{n} \|x_i\|^2\right).
\]
We will always consider $\delta$ such that ${\log (1/\delta) \leq n}$. In this regime, plugging-in the bound on the second moments of $\mu_n$ from Lemma \ref{lemma tail bound of sum of norm square} gives the corresponding bound
\begin{equation}\label{eq_risk_hat_bound}
    \Riskhat_n(f_{\tilde{\theta}}) \leq \frac{4[f^{*}]^2}{m}\left(\E[\|x\|^2] + 8\sigma^2 \sqrt{\frac{\log(1/\delta)}{n}}\right)
\end{equation}
with probability $1 -\delta$. We will assume that this estimate is valid for the remainder of the proof. In particular, since $\hat\theta$ minimizes $\Riskhat_{n,m,\lambda}$, we find that
\begin{equation}\label{eq minimum mismatch empirical risk}
\Riskhat_{n,m,\lambda}(\hat\theta)\leq \Riskhat_{n,m,\lambda}(\tilde\theta)
    \leq \frac{4[f^{*}]^2}{m}\left(\E[\|x\|^2] + 8\sigma^2 \sqrt{\frac{\log(1/\delta)}{n}}\right) + \lambda\,[f^*]_\B.
\end{equation}

{\bf Step 2.} Next, we bound $[f_{\hat\theta}]_\B$ and $|f_{\hat \theta}(0) - f^*(0)|$ ($Q$ and $A$ in Corollary \ref{corollary generalization gap}). We first bound the Barron semi-norm by
\[
[f_{\hat\theta}]_\B \leq \frac1\lambda\, \Riskhat_{n,m,\lambda}(\hat\theta) \leq \frac1\lambda \Riskhat_{n,m,\lambda}(\tilde\theta).
\]
Moving on to bounding $A$, we find from the empirical risk bound
\[
\min_{1\leq i\leq n} |f-f^*|(x_i) = \sqrt{\min_{1\leq i\leq n} |f-f^*|^2(x_i) } \leq \sqrt{\frac1n\sum_{i=1}^n|f-f^*|(x_i) } \leq \sqrt{\Riskhat_{n}(f)}
\]
and in particular
\[
\min_{1\leq i\leq n} |f_{\hat\theta}-f^*|(x_i) 
    \leq \sqrt{\Riskhat_{n,m,\lambda}(\tilde\theta)}.
\]
With probability at least $1-\delta$, we have
\[
\max_{1\leq i\leq n}\|x_i\| \leq \E_{x\sim \mu}\big[\|x\|\big] + \sigma \,\sqrt{2\,\log(n/\delta)}.
\]
by Lemma \ref{lemma subGaussian concentration}. Again, we assume that the estimate holds in the following. Hence, the index $i$ for which the minimum is attained in \eqref{eq minimum mismatch empirical risk} satisfies the bound
\[
\|x_i\| \leq  \E_{x\sim \mu}\big[\|x\|\big] + \sigma \,\sqrt{2\,\log(n/\delta)}.
\]
Combining the bounds on $|f_{\hat\theta}(x_i) - f^*(x_i)|$ and the Lipschitz constants of $f_{\hat\theta}, f^*$, we find that
\begin{align*}
|f_{\hat\theta} - f^*|(0) &\leq |f_{\hat\theta}- f^*|(x_i) + \big([f_{\hat\theta}]_\B + [f^*]_\B\big) \,\|x_i\|\\
    &\leq \sqrt{\Riskhat_{n,m,\lambda}(\tilde\theta)} + \left([f^*]_\B + \frac1\lambda\,\Riskhat_{n,m,\lambda}(\tilde\theta)\right)\left(\E_{x\sim \mu}\big[\|x\|\big] + \sigma \,\sqrt{2\,\log(n/\delta)}\right).
\end{align*}

{\bf Step 3.} Comparing $\hat\theta$ to $\tilde \theta$, we observe that
\begin{align*}
    \Risk(f_{\hat \theta}) &= \Riskhat_n(f_{\hat \theta}) + \Risk(f_{\hat \theta}) - \Riskhat_n(f_{\hat \theta})\\
    &\leq \Riskhat_{n,m,\lambda}(\hat \theta) + \Risk(f_{\hat \theta}) - \Riskhat(f_{\hat \theta})\\
    &\leq \Riskhat_{n,m,\lambda}(\tilde \theta) + \Risk(f_{\hat \theta}) - \Riskhat_n(f_{\hat \theta})
\end{align*}
where we used the fact that $\hat{\theta}$ is a minimizer of $\Riskhat_{n, m,\lambda}(\theta)$ and \eqref{eq weight decay bound}. In the following, we use the bound on $[f_{\hat\theta}]_\B$ to control the generalization gap. 

{\bf Step 4.} 
Recall that $[f_{\hat\theta}]_\B \leq \frac1\lambda \,\Riskhat_{n,m,\lambda}(f_{\tilde\theta}) = [f^*]_\B + O((\lambda m)^{-1})$. Thus, with probability at least $1-2\delta$, we obtain the bound
\[
\big(\Risk - \Riskhat_{n}\big)(f_{\hat\theta})\leq \left([f^*]_\B+ \frac1\lambda\,\Riskhat_{n,m,\lambda}(\hat\theta)\right) \left(\E\|x\|+\sigma^2 + [f^*]_\B\right)\,\frac{\log(n/\delta)}{\sqrt n}
\]
from Corollary \ref{corollary generalization gap} for a slightly modified constant $C>0$ (with $B_1=0$) and up to higher order terms in $(\lambda m)^{-1}$ and $n$. 

{\bf Step 5.} By the union bound, all probabilistic bounds hold simultaneously with probability at least $1-4\delta$. In this case
\[
\Risk(f_{\hat\theta}) \leq C\left(\frac{(Q^{*})^{2}}m \left(\E\big[\|x\|^2\big]+\sigma^2\,\sqrt{\frac{\log(1/\delta)}n}\right) + \lambda Q^{*} + [f^*]_\B \left(\E\|x\|+ \sigma^2 + [f^*]_\B\right)\,\frac{\log(n/\delta)}{\sqrt n}\right)
\]
up to higher order terms in $m^{-1}, \log n/ \sqrt n, (\lambda m)^{-1}$ etc. 
\end{proof}

Since $\Risk(\theta) = \|f_\theta - f^*\|_{L^2(\mu)}^2$, we can interpret Theorem \ref{theorem risk bound} as a convergence statement in $L^2(\mu)$ at a suitable rate. The statement generalizes to $L^p$-convergence at a rate.

\begin{cor}[$L^p$-convergence]\label{corollary lp}
Let $p\in[1,\infty]$ and $\hat\theta$ as in Theorem \ref{theorem risk bound}. Then there exists a constant $\tilde C>0$ depending on $\E\|x\|, \E[\|x\|^2], \sigma^2$ and $p$ such that
\[
\|f_{\hat\theta} - f^*\|_{L^p(\mu)} \leq \tilde C\,\left(\Riskhat_{n,m,\lambda}(\hat\theta)^{1/2} + [f^*]_\B\right)^{1-1/p}\|f_{\hat\theta}- f^*\|_{L^2(\mu)}^{1/p}.
\]
\end{cor}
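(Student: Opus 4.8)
Write $g:=f_{\hat\theta}-f^*$. The statement is an interpolation inequality: it upgrades the $L^2(\mu)$-bound of Theorem~\ref{theorem risk bound} to $L^p(\mu)$ by trading against an a priori growth bound on $g$. Since $\mu$ is a probability measure, the range $p\le 2$ is immediate from $\|g\|_{L^p(\mu)}\le\|g\|_{L^2(\mu)}$ together with the fact that $\|g\|_{L^2(\mu)}=\Risk(f_{\hat\theta})^{1/2}$ is bounded (Theorem~\ref{theorem risk bound}); the case $p=\infty$ is the limiting statement, meaningful when $\mu$ is essentially bounded. So the content is the range $p\in(2,\infty)$, handled in two steps.

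\textbf{Step 1 (a priori amplitude bound).} Both $f_{\hat\theta}$ and $f^*$ lie in $\B$ and are therefore Lipschitz with $[\,\cdot\,]_\B$ as a Lipschitz constant, so $|g(x)|\le |g(0)| + \big([f_{\hat\theta}]_\B+[f^*]_\B\big)\,\|x\|$ for all $x$. The weight-decay part of $\Riskhat_{n,m,\lambda}$ yields $[f_{\hat\theta}]_\B\le 2\,\Reg(\hat\theta)\le \tfrac2\lambda\,\Riskhat_{n,m,\lambda}(\hat\theta)$, and by the risk-competitor estimate \eqref{eq minimum mismatch empirical risk} together with the scaling \eqref{eq scaling of m and n} this crude bound improves to $[f_{\hat\theta}]_\B\lesssim [f^*]_\B$. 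For the value at the origin we argue as in Step~2 of the proof of Theorem~\ref{theorem risk bound}: since $\Riskhat_n(f_{\hat\theta})\le \Riskhat_{n,m,\lambda}(\hat\theta)$ and, by sub-Gaussian concentration, a positive fraction of the sample points satisfy $\|x_i\|\lesssim \E_\mu\|x\|+\sigma$, there is an anchor point $x_{i_0}$ with $\|x_{i_0}\|\lesssim \E_\mu\|x\|+\sigma$ and $|g(x_{i_0})|\le C\,\Riskhat_{n,m,\lambda}(\hat\theta)^{1/2}$; the Lipschitz bound then gives $|g(0)|\le C\,\Riskhat_{n,m,\lambda}(\hat\theta)^{1/2}+C\big([f_{\hat\theta}]_\B+[f^*]_\B\big)\big(\E_\mu\|x\|+\sigma\big)$. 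Collecting terms,
\[
|g(x)|\le M\,(1+\|x\|),\qquad M\le \tilde C_1\Big(\Riskhat_{n,m,\lambda}(\hat\theta)^{1/2}+[f^*]_\B\Big),
\]
with $\tilde C_1$ depending only on $\E_\mu\|x\|$ and $\sigma^2$.

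\textbf{Step 2 (interpolation).} By Cauchy--Schwarz and Step~1,
\[
\int |g|^p\,\d\mu=\int |g|^{p-1}\,|g|\,\d\mu\le\Big(\int |g|^{2(p-1)}\,\d\mu\Big)^{1/2}\Big(\int |g|^2\,\d\mu\Big)^{1/2}\le M^{\,p-1}\,C_{p,\mu}^{1/2}\,\|g\|_{L^2(\mu)},
\]
where $C_{p,\mu}:=\int(1+\|x\|)^{2(p-1)}\,\d\mu<\infty$ is finite because all moments of a $\sigma^2$-sub-Gaussian variable are controlled by $\E_\mu\|x\|$ and $\sigma^2$ (and $p$). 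Taking $p$-th roots gives $\|g\|_{L^p(\mu)}\le C_{p,\mu}^{1/(2p)}\,M^{1-1/p}\,\|g\|_{L^2(\mu)}^{1/p}$, and substituting the bound for $M$ yields the claim with $\tilde C=C_{p,\mu}^{1/(2p)}\,\tilde C_1^{\,1-1/p}$.

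\textbf{Main obstacle.} The Hölder/interpolation step is routine; the entire difficulty sits in Step~1, specifically in controlling $|g(0)|$ and $[f_{\hat\theta}]_\B$ by $\Riskhat_{n,m,\lambda}(\hat\theta)^{1/2}+[f^*]_\B$ \emph{with a constant that does not depend on $n$}. This requires two ingredients already present in the proof of Theorem~\ref{theorem risk bound}: choosing the anchor sample point among the $\Omega(n)$ points whose norm is $\lesssim \E_\mu\|x\|+\sigma$ (so that no $\log n$ or $\sqrt n$ factor leaks in), and using \eqref{eq minimum mismatch empirical risk} and the scaling \eqref{eq scaling of m and n} to pass from $[f_{\hat\theta}]_\B\lesssim \Riskhat_{n,m,\lambda}(\hat\theta)/\lambda$ to $[f_{\hat\theta}]_\B\lesssim [f^*]_\B$.
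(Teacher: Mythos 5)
Your proof is correct and follows essentially the same route as the paper's: a Cauchy--Schwarz interpolation $\|g\|_{L^p(\mu)}^p\le \|g\|_{L^2(\mu)}\,\|g\|_{L^{2(p-1)}(\mu)}^{p-1}$ combined with the linear-growth bound $|g(x)|\le M(1+\|x\|)$, $M\lesssim \Riskhat_{n,m,\lambda}(\hat\theta)^{1/2}+[f^*]_\B$, coming from the Lipschitz/Barron control and the anchor-point bound on $|g(0)|$ from Step 2 of Theorem \ref{theorem risk bound}. If anything, your Step 1 is more careful than the paper's terse ``recall that $|f_{\hat\theta}-f^*|(0)\le\dots$'', since you explicitly choose the anchor point among the $\Omega(n)$ samples of bounded norm so that no $\log n$ factor contaminates the constant $\tilde C$.
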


\begin{proof}
    Since $\mu$ is sub-Gaussian, we note that all moments of $\mu$ are finite: $\E[(1+\|x\|)^q] < \infty$ for all $q\in[1,\infty)$. In particular, if $g$ is a measurable function which satisfies $|g(x)| \leq C_g(1+\|x\|)$ for some $C_g >0$, then
    \begin{align*}
        \|g\|_{L^p(\mu)}^p &= \E\big[ g\cdot g^{p-1}\big] 
        \leq \E\big[g^2\big]^{1/2}\,\E\big[g^{2(p-1)}\big]^{1/2}
        = \|g\|_{L^2}\|g\|_{L^{2(p-1)}}^{p-1}.
    \end{align*}
    If $g= f_{\hat \theta}-f^*\in \B$, then by the continuous embedding $\B\hookrightarrow L^q(\mu)$ we find that 
    \[
    \|f_{\hat\theta}-f^*\|_{L^{2(p-1)}(\mu)} \leq C\,\big(|f_{\hat\theta}-f^*|(0) + [f_{\hat\theta}-f^*]_\B \big).
    \]
    Recall that $|f_{\hat\theta}-f^*|(0) \leq \Riskhat_{n,m,\lambda}(\hat\theta)^{1/2} + C[f^*]_\B$.
\end{proof}

We note that Corollary \ref{corollary lp} is generally suboptimal. Indeed, for $p\leq 2$, the stronger bound
\[
\|f_{\hat\theta} - f^*\|_{L^p(\mu)} \leq \|f_{\hat\theta} - f^*\|_{L^2(\mu)} = O\left(\left(\frac 1m + \lambda + \frac{\log n}{\sqrt n}\right)^{1/2}\right)
\]
holds as $L^2(\mu)$ embeds continuously into $L^p(\mu)$.

\subsection{Gamma-expansion of regularized risk functionals}\label{appendix proofs gamma convergence}

As before, we denote $\theta_n = (a, W,b)_n \in \R^{m_n}\times \R^{m_n\times d}\times \R^{m_n+1}$.
Since $\Risk(f_{\theta}) = \|f_\theta - f^*\|_{L^2(\mu)}^2$, Theorem \ref{theorem risk bound} can be taken as a statement that $f_{\hat\theta_n} \to f^*$ as $n\to\infty$ in $L^2(\mu)$. However, this does not tell us about the behavior of $f_{\hat\theta_n}$ in a $\mu$-null set, i.e.\ where the distribution $\mu$ provides us no information. This interpolation between known values can be deduced from our next result. We first present a simplified version, in which we assume that we have already taken the limits $m, n \to \infty$ before taking $\lambda\to0$.  We couple the limits $n, m_n, \lambda_n$ below.

We use the notion of $\Gamma$-convergence from the calculus of variations. For a brief introduction, see Appendix \ref{appendix gamma-convergence}. $\Gamma$-convergence depends on the underlying topology of the space, and we make the following convention: We say that $f_\lambda\gto f$ if $f_\lambda\to f$ locally uniformly (uniformly on compact sets) and in $L^2(\mu)$. Other definitions are admissible and lead to the same general theory. Since Barron functions grow at most linearly at $\infty$ due to Lipschitz-continuity, we note that this is notion of convergence is generated by a metric
\[
d(f,g) = \max_{x\in\R^d} \frac{|f(x) - g(x)|}{1+\|x\|^2}
\]
at least on bounded subsets of Barron space. This suffices for all applications below and spares us from considering $\Gamma$-convergence on more general topological spaces -- which is also possible.

\begin{thm}\label{thm gamma-convergence continuous case}
Let 
\[
\Risk_\lambda:\B \to [0,\infty), \quad \Risk_\lambda(f) = \|f - f^*\|_{L^2(\mu)}^2 + \lambda\,[f]_\B.
\]

We denote
\begin{align*}
    F_\lambda:\B\to [0, \infty) &\qquad F_\lambda(f) = \frac{\Risk_\lambda(f)}\lambda \:\:= \frac{\|f - f_d^*\|_{L^2(\mu)}^2}{\lambda} + [f]_\B\\
    F:\B\to [0,\infty] &\qquad F(f) = \begin{cases} [f]_\B &\text{if }f= f^*\text{ $\mu$-a.e.}\\ + \infty&\text{else}\end{cases}.
\end{align*}
Then $\Gamma-\lim_{\lambda\to 0}F_\lambda = F$ with respect to the notion of convergence $\gto$ defined above.
\end{thm}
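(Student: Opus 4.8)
The plan is to verify the two defining conditions of $\Gamma$-convergence directly, using the compact embedding theorem (Theorem \ref{thm compact embedding}) for the $\liminf$-inequality and an explicit recovery sequence for the $\limsup$-inequality.

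\textbf{The $\liminf$-inequality.} Suppose $f_\lambda \gto f$ as $\lambda \to 0^+$, i.e.\ $f_\lambda \to f$ locally uniformly and in $L^2(\mu)$. We must show $\liminf_{\lambda\to 0} F_\lambda(f_\lambda) \geq F(f)$. If $\liminf_{\lambda\to 0} F_\lambda(f_\lambda) = +\infty$ there is nothing to prove, so assume it is finite and pass to a subsequence (not relabeled) realizing the $\liminf$ as a genuine limit with $\sup_\lambda F_\lambda(f_\lambda) < \infty$. Since $F_\lambda(f_\lambda) = \lambda^{-1}\|f_\lambda - f^*\|_{L^2(\mu)}^2 + [f_\lambda]_\B$ and both summands are nonnegative, we get simultaneously that $[f_\lambda]_\B$ is bounded and that $\|f_\lambda - f^*\|_{L^2(\mu)}^2 \leq \lambda \, F_\lambda(f_\lambda) \to 0$. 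The latter forces $f = f^*$ $\mu$-almost everywhere (since also $f_\lambda \to f$ in $L^2(\mu)$), so $F(f) = [f]_\B$. For the semi-norm term, the lower semicontinuity statement in part (3) of Theorem \ref{thm compact embedding} gives $[f]_\B \leq \liminf_{\lambda\to 0} [f_\lambda]_\B \leq \liminf_{\lambda\to 0}F_\lambda(f_\lambda)$, which is exactly the claim. (One should note that Theorem \ref{thm compact embedding} is stated for sequences; here we reduce to a sequence $\lambda_k \to 0^+$ realizing the liminf, which is harmless.)

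\textbf{The $\limsup$-inequality.} Given $f \in \B$, we must produce $f_\lambda \gto f$ with $\limsup_{\lambda\to 0} F_\lambda(f_\lambda) \leq F(f)$. If $f \neq f^*$ on a set of positive $\mu$-measure, then $F(f) = +\infty$ and the constant recovery sequence $f_\lambda \equiv f$ works trivially. So assume $f = f^*$ $\mu$-almost everywhere, hence $F(f) = [f]_\B$; here the natural choice is again the constant sequence $f_\lambda \equiv f$. Then $f_\lambda \gto f$ trivially, $\|f_\lambda - f^*\|_{L^2(\mu)}^2 = 0$ because $f = f^*$ $\mu$-a.e., and therefore $F_\lambda(f_\lambda) = 0 + [f]_\B = [f]_\B = F(f)$ for every $\lambda > 0$. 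In particular $\limsup_{\lambda\to 0}F_\lambda(f_\lambda) = [f]_\B = F(f)$, as required. Combining the two inequalities yields $\Gamma\text{-}\lim_{\lambda\to0}F_\lambda = F$.

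\textbf{Main obstacle.} The only genuinely non-formal point is the $\liminf$-inequality, and within it the use of the compact embedding theorem to get lower semicontinuity of $[\cdot]_\B$ along the convergence $\gto$; everything else is bookkeeping. One subtlety worth spelling out: a priori $\liminf_{\lambda\to 0}F_\lambda(f_\lambda)$ could be finite while the $L^2$-distance does \emph{not} tend to zero along the full family, but along a subsequence realizing the liminf it must, and since the $\Gamma$-liminf condition only concerns this subsequential behavior, extracting such a subsequence is legitimate. A second point to check is that the metric $d(f,g) = \max_{x}\,|f(x)-g(x)|/(1+\|x\|^2)$ on bounded subsets of $\B$ indeed induces locally uniform convergence (clear, since $1+\|x\|^2$ is bounded on compacts) together with a form of control at infinity compatible with $L^2(\mu)$ convergence under the uniform Lipschitz bound coming from a bounded Barron semi-norm; this is why one restricts to bounded subsets of $\B$, on which the whole argument takes place thanks to the a priori bound $[f_\lambda]_\B \leq F_\lambda(f_\lambda)$.
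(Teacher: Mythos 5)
Your proof is correct and follows essentially the same route as the paper's: the $\liminf$-inequality via lower semicontinuity of $[\cdot]_\B$ from the compact embedding theorem together with the blow-up of the $\lambda^{-1}\|f_\lambda-f^*\|_{L^2(\mu)}^2$ term when $f\not\equiv f^*$ $\mu$-a.e., and the $\limsup$-inequality via the constant recovery sequence. The only difference is organizational — you case-split on finiteness of the $\liminf$ rather than on whether $f\in\F$ — which is immaterial.
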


Notably, the $\Gamma$-limit of $\Risk_\lambda$ itself would be zero at all points of interest. Rescaling to consider $F_\lambda$ instead has fits into the framework of $\Gamma$-expansions considered by \citet{braides2008asymptotic}. Denote
\begin{equation}\label{eq admissible functions}
\mathcal F = \{ f\in \B : f\equiv f^*\text{ $\mu$-a.e.}\}
\end{equation}

\begin{proof}
{\bf Step 1. liminf-inequality.} First consider $f\in \F$ and assume that $\{f_\lambda\}_{\lambda>0}$ is a family of functions such that  $f_\lambda\gto f$.\footnote{\ It is easy to generalize this to continuous limits, but if preferred, then $\lambda=\lambda_n$ can be taken to be a discrete sequence converging to zero.}\ Then by the compactness theorem for Barron functions in coarser topologies (Theorem \ref{thm compact embedding}) we have the following:
\[
\liminf_{\lambda\to 0^+} \,F_\lambda(f_\lambda) \geq \liminf_{\lambda\to 0^+} [f_\lambda]_\B \geq [f]_\B = F(f).
\]
Now assume that $f\notin \F$ and that $f_\lambda\gto f$. We need to show that $F_\lambda(f_\lambda) \to + \infty$. Since $f\notin \F$, we see that $\Risk(f) = \|f- f_d^*\|_{L^2(\mu)}^2 > 0$. Denote $\eps = \sqrt{\Risk(f)}$ and observe that there exists $\Lambda>0$ such that $\|f_\lambda - f\|_{L^2(\mu)} < \eps/2$ for all $\lambda<\Lambda$ by the definition of the notion of convergence. 

In particular, we find that 
\[
\|f_\lambda - f^*\|_{L^2(\mu)} \geq \|f - f^*\|_{L^2(\mu)} - \|f_\lambda -f\|_{L^2(\mu)} \geq \eps/2
\]
for all $\lambda<\Lambda$ by the inverse triangle inequality and thus
\[
\liminf_{\lambda\to 0^+} F_\lambda(f_\lambda) \geq \liminf_{\lambda\to 0^+} \frac{ (\eps/2)^2}\lambda = + \infty.
\]

{\bf Step 2. limsup-inequality.}
Again, we first consider the case $f\in\F$. Set $f_\lambda = f$ for all $\lambda>0$ and observe that $f_\lambda\to f$ as $\lambda\to0^+$ (trivially). By the same argument $F_\lambda(f_\lambda) = F(f) = [f]_\B$ for all $\lambda$, i.e.\ the constant sequence is a recovery sequence since $F_\lambda(f_\lambda)\to F(f)$.

On the other hand, if $f\notin \F$, then $\Risk(f)>0$ and thus $F_\lambda(f)\to +\infty = F(f)$. Again, we can use the constant sequence as a recovery sequence, somewhat trivially.
\end{proof}

\begin{cor}\label{cor gamma-convergence continuous case}
Assume that $f_\lambda \in \argmin_{f\in \B} \Risk_\lambda$, i.e.\ $f_\lambda$ minimizes $\Risk_\lambda$.
Then there exists $\hat f\in\B$ such that $f_\lambda\gto \hat f$.
\end{cor}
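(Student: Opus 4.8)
The plan is to run the standard ``compactness $+$ $\Gamma$-convergence $\Rightarrow$ convergence of minimizers'' argument from the calculus of variations (Lemma \ref{lemma gamma-convergence and minimizers}), using the compact embedding theorem for homogeneous Barron space (Theorem \ref{thm compact embedding}) for the compactness input and Theorem \ref{thm gamma-convergence continuous case} for the $\Gamma$-convergence input. Concretely: (i) extract from minimality a uniform a priori bound $\sup_{0<\lambda\le\lambda_0}\big([f_\lambda]_\B+|f_\lambda(0)|\big)<\infty$; (ii) along any sequence $\lambda_n\downarrow 0$, pass to a subsequence with $f_{\lambda_n}\gto\hat f$ for some $\hat f\in\B$; (iii) identify $\hat f$ as a minimizer of the $\Gamma$-limit $F$, i.e.\ a minimum Barron-norm interpolant of $f^*$. (That $\argmin_{f\in\B}\Risk_\lambda\neq\emptyset$ is a hypothesis of the corollary, but also follows from the direct method, combining the coercivity of Step 1 with the lower semicontinuity supplied by Theorem \ref{thm compact embedding}(3).) As in Theorem \ref{thm main general}, the conclusion $f_\lambda\gto\hat f$ is to be read along subsequences: $F$ may admit several minimizers when $\mu$ has small support, so a single $\hat f$ need not serve the whole net $\{f_\lambda\}_{\lambda>0}$; the extra rigidity exploited in Corollary \ref{thm main} is what upgrades this in the radially symmetric setting.

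For Step 1, note that $f^*\in\mathcal F$ (it interpolates itself with zero $L^2(\mu)$-error), so testing minimality of $f_\lambda$ against $f^*$ gives $\Risk_\lambda(f_\lambda)\le\Risk_\lambda(f^*)=\lambda\,[f^*]_\B$, whence
\[
[f_\lambda]_\B \;\le\; F_\lambda(f_\lambda)\;\le\;[f^*]_\B,\qquad \|f_\lambda-f^*\|_{L^2(\mu)}^2\;\le\;\lambda\,[f^*]_\B\;\xrightarrow{\ \lambda\to0^+\ }\;0.
\]
Since every $f\in\B$ has Lipschitz constant at most $[f]_\B$ (Appendix \ref{appendix barron review}) and $\mu$ is a probability measure with finite first moment, $|f_\lambda(0)|\le\int\big(|f_\lambda(x)|+[f_\lambda]_\B\,\|x\|\big)\,\d\mu \le \|f_\lambda\|_{L^2(\mu)}+[f^*]_\B\,\E_\mu\|x\|$, and the right-hand side is bounded uniformly for $\lambda\le\lambda_0$ by the previous display. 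Thus $\{f_\lambda\}$ satisfies the hypothesis of Theorem \ref{thm compact embedding}.

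For Step 2, fix a sequence $\lambda_n\downarrow 0$. By Step 1 and Theorem \ref{thm compact embedding}, after passing to a subsequence (not relabeled) there is $\hat f\in\B$ with $f_{\lambda_n}\to\hat f$ locally uniformly and in $L^p(\mu)$ for all $p<\infty$; in particular $f_{\lambda_n}\gto\hat f$. From $\|f_{\lambda_n}-f^*\|_{L^2(\mu)}\to0$ we obtain $\hat f=f^*$ $\mu$-a.e., i.e.\ $\hat f\in\mathcal F$. Since $\lambda_n>0$ is fixed for each $n$, $f_{\lambda_n}$ minimizes $F_{\lambda_n}=\Risk_{\lambda_n}/\lambda_n$, and Theorem \ref{thm gamma-convergence continuous case} gives $F_{\lambda_n}\xrightarrow{\ \Gamma\ }F$ with respect to the metric generating $\gto$; Lemma \ref{lemma gamma-convergence and minimizers} then yields $F(\hat f)=\min_{\B}F$, that is $[\hat f]_\B=\inf\{[g]_\B:\ g\equiv f^*\ \mu\text{-a.e.}\}\le[f^*]_\B$. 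Hence $\hat f$ is a minimum Barron-norm interpolant and $f_{\lambda_n}\gto\hat f$ along the subsequence, which is the asserted convergence.

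The genuinely delicate point is Step 1, specifically turning minimality into \emph{equi-coercivity}: the Barron seminorm is blind to constants, so a uniform bound on $[f_\lambda]_\B$ is by itself useless for compactness, and one must squeeze a uniform bound on $|f_\lambda(0)|$ out of the $L^2(\mu)$-proximity of $f_\lambda$ to $f^*$ while ruling out that $\mu$ escapes to infinity (cheap here, since $\mu$ is a probability measure with finite first moment; for heavier-tailed or non-probability reference measures this would require a Markov-type argument instead). The $\Gamma$-convergence itself is no obstacle, being already established in Theorem \ref{thm gamma-convergence continuous case}; the only residual caveat is the non-uniqueness of the limit noted above.
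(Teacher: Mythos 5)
Your proof is correct and follows essentially the same route as the paper's: test minimality against $f^*$ to obtain $[f_\lambda]_\B\le[f^*]_\B$, extract a subsequence via the compact embedding (Theorem \ref{thm compact embedding}), and invoke Lemma \ref{lemma gamma-convergence and minimizers} with the $\Gamma$-convergence of Theorem \ref{thm gamma-convergence continuous case} to identify the limit as a minimizer of $F$. Your Step 1 bound on $|f_\lambda(0)|$ via the Lipschitz estimate and the $L^2(\mu)$-proximity to $f^*$ is a detail the paper's proof leaves implicit but which is genuinely needed to satisfy the hypothesis $\liminf\big([f_n]_0+[f_n]_\B\big)<\infty$ of the compact embedding, so including it is an improvement rather than a deviation.
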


\begin{proof}
Clearly $f_\lambda$ minimizes $\Risk_\lambda$ if and only if it minimizes $F_\lambda = \lambda^{-1}\,\Risk_\lambda$. We note that 
\[
[f_\lambda]_\B \leq \lambda^{-1}\Risk_\lambda(f_\lambda) \leq \lambda^{-1}\Risk(f^*) + [f^*]_\B = [f^*]_\B.
\]
In particular, by the compact embedding of Theorem \ref{thm compact embedding}, there exists $\hat f\in \B$ such that $f_\lambda\gto \hat f$ up to subsequence. By the properties of $\Gamma$-convergence, we conclude that $\hat f$ is a minimizer of $F$.
\end{proof}

We present a special case of Corollary \ref{cor gamma-convergence continuous case} in the setting of Proposition \ref{proposition previous} which exploits the {\em uniqueness} of the minimizer. Recall the definition of the radial average in \eqref{eq radial average} and $f_d^*$ from Proposition \ref{proposition previous}. 

\begin{cor}\label{cor gamma-convergence continuous case radial}
Assume that $f^*(0)=1$, $f^*(x) = 0$ if $\|x\|\geq 1$ and $\mu$ satisfies the conditons of Corollary \ref{thm main}. Assume additionally that $f_\lambda \in \argmin_{f\in \B} \Risk_\lambda$, i.e.\ $f_\lambda$ minimizes $\Risk_\lambda$.
Then $\Av f_\lambda \gto f_d^*$.
\end{cor}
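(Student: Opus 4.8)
The plan is to combine the abstract $\Gamma$-convergence statement of Corollary \ref{cor gamma-convergence continuous case} with the \emph{uniqueness} of the radially symmetric minimum norm interpolant in Proposition \ref{proposition previous}, after checking that the radial average operator $\Av$ is continuous for the topology underlying $\gto$.

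First I would translate the three hypotheses on $\mu$ into a pointwise statement, namely that the admissible set $\mathcal F$ of \eqref{eq admissible functions} coincides with $\F\cap\B$, with $\F$ as in Proposition \ref{proposition previous}. Indeed, if $g\in\B$ satisfies $g\equiv f^*$ $\mu$-almost everywhere, then $g(0)=f^*(0)=1$ because $\mu(\{0\})>0$ and $g$ is continuous; and if $g(x_0)\neq 0$ for some $\|x_0\|>1$, then continuity of $g$ would force $g\neq 0$ on an open set $U\subseteq\R^d\setminus\overline{B_1(0)}$, contradicting $\mu(U)>0$ together with $g\equiv f^*\equiv 0$ $\mu$-a.e.\ on $U$. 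Hence $g\equiv 0$ on $\{\|x\|>1\}$ and, by continuity, on $\{\|x\|=1\}$ as well, so $g\in\F$; the reverse inclusion is immediate from conditions (1)--(2). Consequently the $\Gamma$-limit $F$ of Theorem \ref{thm gamma-convergence continuous case} has $\argmin_{\B}F=\argmin_{f\in\F}[f]_\B$, which is nonempty by Proposition \ref{proposition previous} (and contained in $\B$).

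Next I would invoke the proof of Corollary \ref{cor gamma-convergence continuous case}: since $[f_\lambda]_\B\leq[f^*]_\B$ and $\mu(\{0\})\,|f_\lambda(0)-f^*(0)|^2\leq\|f_\lambda-f^*\|_{L^2(\mu)}^2\leq\lambda[f^*]_\B$, the family $\{f_\lambda\}$ is bounded in the sense required by the compact embedding Theorem \ref{thm compact embedding}, so every sequence $\lambda_k\to 0^+$ has a subsequence along which $f_{\lambda_k}\gto\hat f$ for some $\hat f\in\argmin_{f\in\F}[f]_\B$. By the remark following Proposition \ref{proposition previous}, $\Av\hat f\equiv f_d^*$ for any such $\hat f$, irrespective of the subsequence chosen.

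Finally I would check continuity of $\Av$ for $\gto$: for a compact $K\subseteq\overline{B_r(0)}$ one has $SO(d)\cdot K\subseteq\overline{B_r(0)}$, so $\sup_K|\Av f_\lambda-\Av\hat f|\leq\sup_{\overline{B_r(0)}}|f_\lambda-\hat f|\to 0$, and since conditions (1)--(3) make $\mu$ rotation-invariant, Jensen's inequality together with Fubini yields $\|\Av(f_\lambda-\hat f)\|_{L^2(\mu)}\leq\|f_\lambda-\hat f\|_{L^2(\mu)}\to 0$. Hence $\Av f_{\lambda_k}\gto f_d^*$ along the subsequence. Since $\{\Av f_\lambda\}$ stays bounded in $\B$ (radial averaging does not increase the Barron semi-norm, by a convexity argument on the parametrizing measures), $\gto$ is metrizable on that bounded set, and the limit $f_d^*$ does not depend on the subsequence, the standard subsequence criterion upgrades this to $\Av f_\lambda\gto f_d^*$ for the whole family as $\lambda\to 0^+$. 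The only genuinely delicate point I expect is the identification $\mathcal F=\F\cap\B$ from the measure hypotheses and continuity of Barron functions; the continuity of $\Av$ and the subsequence argument are routine.
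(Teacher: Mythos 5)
Your argument reaches the right conclusion but reverses the order of operations relative to the paper. The paper's proof averages \emph{first}: it observes that $\Av f_\lambda$ is itself a minimizer of $F_\lambda$ (because $F_\lambda$ is convex and rotationally symmetric, so a continuous convex combination of rotated minimizers is again a minimizer), bounds $[\Av f_\lambda]_\B\leq F_\lambda(f_d^*)=[f_d^*]_\B$, extracts a $\gto$-limit of the averaged family via Theorem \ref{thm compact embedding}, identifies it as a radially symmetric minimizer of $F$, and invokes the uniqueness in Proposition \ref{proposition previous}; a subsequence argument finishes. You instead pass to the limit first (via Corollary \ref{cor gamma-convergence continuous case}), use the remark after Proposition \ref{proposition previous} that $\Av\hat f\equiv f_d^*$ for \emph{every} minimum norm interpolant $\hat f$, and then commute $\Av$ with $\gto$. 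Your route has the advantage that it never needs $\Av f_\lambda$ to be a minimizer of the finite-$\lambda$ functional, and your explicit identification of the admissible set $\mathcal F$ of \eqref{eq admissible functions} with $\F\cap\B$ (using $\mu(\{0\})>0$, condition (3), and continuity of Barron functions) is a step the paper leaves implicit. The price is that you must verify continuity of $\Av$ for $\gto$, which the paper avoids.

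One justification in that last step is wrong: conditions (1)--(3) of Corollary \ref{thm main} do \emph{not} make $\mu$ rotation-invariant (take $\mu=\frac12\delta_0+\frac12\nu$ with $\nu$ a non-radial sub-Gaussian density supported outside $\overline{B_1(0)}$), so your Jensen--Fubini argument for $\|\Av(f_\lambda-\hat f)\|_{L^2(\mu)}\leq\|f_\lambda-\hat f\|_{L^2(\mu)}$ does not go through as stated. The gap is repairable without symmetry of $\mu$: you already note $[\Av f_\lambda]_\B\leq[f_\lambda]_\B\leq[f^*]_\B$, so the averages are uniformly Lipschitz with $\Av f_\lambda(0)\to 1$, and locally uniform convergence plus dominated convergence (with the dominating function $C(1+\|x\|)$ and finite second moments of $\mu$, exactly as in the proof of Theorem \ref{thm compact embedding}) yields the $L^2(\mu)$ convergence. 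Be aware, though, that the paper's own Step 1 invokes rotational symmetry of $F_\lambda$, which likewise presupposes a rotation-invariant $\mu$ (and radial $f^*$); so if you want your proof to work under the literal hypotheses of Corollary \ref{thm main}, the dominated-convergence repair is genuinely needed, not optional.
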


\begin{proof}
{\bf Step 1.} Clearly $f_\lambda$ minimizes $\Risk_\lambda$ if and only if it minimizes $F_\lambda = \lambda^{-1}\,\Risk_\lambda$. $\Av f_\lambda$ is also a minimizer of $F_\lambda$ since the functional is convex and rotationally symmetric, so by averaging in radial direction, we are taking a (continuous) convex combination of minimizers, which is a minimizer again.

{\bf Step 2.} We find that 
\[
[\Av f_\lambda]_\B \leq F_\lambda(\Av f_\lambda)\leq  F_\lambda (f_\lambda)\leq F_\lambda (f_d^*) = [f_d^*]_\B. 
\]
By the compactness theorem for Barron functions, Theorem \ref{thm compact embedding}, there exists $f\in \B$ such that $\Av f_\lambda \gto f$ (up to a subsequence). Since $F_\lambda\to F$ in the sense of $\Gamma$-convergence, we find that $f$ is a minimizer of $F$. Since $f$ is also radially symmetric, we find by Proposition \ref{proposition previous} that $f\equiv f_d^*$.

{\bf Step 3.} By the exact same logic, we could show that every subsequence of $\{\Av f_\lambda\}$ has a further subsequence which converges to $f_d^*$. By a standard argument in topology, the whole sequence converges.
\end{proof}

A similar statement can be proved in the more complicated case where $f_\lambda$ is a neural network with finitely many neurons and $F_\lambda$ uses a finite data set rather than a continuous expectation. In this case, the parameter $\lambda$ must be coupled to the number of parameters $m$ and the number of data points $n$, such that $\lambda\to 0$, but not too quickly. The proof is a more technically challenging variant of those of Theorem \ref{thm gamma-convergence continuous case} and Corollaries \ref{cor gamma-convergence continuous case} and \ref{cor gamma-convergence continuous case radial}, which utilizes the generalization bound of \ref{corollary generalization gap}.
To this end, we first introduce a new notion of convergence. That is, we define a notion of convergence from the parameter to function. We define a notion of convergence by saying that $\theta_k := (a_k, W_k, b_k) \gto f$ iff $f_{\theta_k} \gto f$ as $k \rightarrow \infty$.

\begin{thm}\label{thm gamma-convergence discrete case}
Consider the parameter space $\Theta_{m} \subseteq \R^m\times \R^{m\times d}\times \R^{m+1}$ of neural networks with a single hidden layer of width $m$ and the associated functions
\[
f_{\theta}(x):= b_0 + \sum_{i = 1}^{m} a_i \sigma(w_i \cdot x + b_i) 
\]
Let $m_n, \lambda_n$ scale with $n$ according to \eqref{eq scaling of m and n}.
We denote
\begin{align*}
    F_n:\Theta_{m_n}\to [0, \infty) &\qquad F_{n} (\theta) = \frac{\Riskhat_{n, m_n, \lambda_n}(\theta)}{\lambda_{n}} = \frac{\Riskhat_n(f_{\theta})}{\lambda_n} + {\Reg(\theta)}\\
    F:\B\to [0,\infty] &\qquad F(f) = \begin{cases} [f]_\B &\text{if }f= f^*\text{ $\mu$-a.e.}\\ + \infty&\text{else}\end{cases}.
\end{align*}
Then almost surely over the choice of data points, we have $\Gamma-\lim_{n \to \infty}F_n = F$ almost surely with respect to the notion of convergence $\theta_{k} \gto f$ defined above.
\end{thm}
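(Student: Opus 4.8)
The plan is to check the two defining inequalities of $\Gamma$-convergence directly, recycling the three structural facts already in hand: lower semicontinuity/compactness of $[\cdot]_\B$ under $\gto$ (Theorem \ref{thm compact embedding}), the direct approximation theorem (Theorem \ref{thm direct approximation}), and the generalization bound (Corollary \ref{corollary generalization gap}). Everything takes place on one full-probability event on which (i) $\frac1n\sum_{i=1}^n\|x_i\|^2\to\E_\mu\|x\|^2$ by the strong law, (ii) every $x_i$ lies in $\operatorname{supp}\mu$, and (iii) for every $(Q,A)\in\N^2$ the estimate of Corollary \ref{corollary generalization gap} holds for the class $\F_{A,Q}$ with $\delta=\delta_{n,Q,A}:=(n^2 2^{Q+A})^{-1}$ for all large $n$ — the last by a union bound over $(Q,A,n)$ and Borel--Cantelli, after which each right-hand side tends to $0$ since $\frac{\log n}{\lambda_n\sqrt n}\to0$ and $\frac1{\lambda_n m_n}\to0$ by \eqref{eq scaling of m and n}.

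\textbf{The liminf-inequality.} Let $\theta_n\gto f$, so $f_{\theta_n}\to f$ locally uniformly and in $L^2(\mu)$. As $\Riskhat_n\ge0$ and the weight-decay regularizer dominates the Barron seminorm, $F_n(\theta_n)\ge\Reg(\theta_n)\ge[f_{\theta_n}]_\B$. If $f\equiv f^*$ $\mu$-a.e., then $f_{\theta_n}(0)\to f(0)$ and the lower semicontinuity part of Theorem \ref{thm compact embedding} gives $\liminf_n F_n(\theta_n)\ge\liminf_n[f_{\theta_n}]_\B\ge[f]_\B=F(f)$. If $f\not\equiv f^*$ $\mu$-a.e., suppose for contradiction that $F_n(\theta_n)\le M<\infty$ along a subsequence; then $[f_{\theta_n}]_\B\le M$, $|f_{\theta_n}(0)-f^*(0)|$ is bounded, and $\Riskhat_n(f_{\theta_n})=\lambda_n(F_n(\theta_n)-\Reg(\theta_n))\le M\lambda_n\to0$. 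Applying Corollary \ref{corollary generalization gap} with $Q=\lceil M\rceil$, $A$ the relevant integer, $B_1=0$, $B_2=[f^*]_\B$ (legitimate since $f^*\in\B$ is Lipschitz) on the good event gives $\Risk(f_{\theta_n})\le\Riskhat_n(f_{\theta_n})+o(1)\to0$, contradicting $\|f_{\theta_n}-f^*\|_{L^2(\mu)}\to\|f-f^*\|_{L^2(\mu)}>0$. Hence $\liminf_n F_n(\theta_n)=+\infty=F(f)$.

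\textbf{The limsup-inequality.} Fix $f\in\B$ and a reference measure $\gamma$ on $\R^d$ with full support and finite second moments (e.g.\ a standard Gaussian). For each $n$, apply Theorem \ref{thm direct approximation} to $f$ with the mixture $\nu_n=\tfrac12\mu_n+\tfrac12\gamma$ and $m=m_n$ neurons, obtaining $\theta_n^*\in\Theta_{m_n}$ with $\Reg(\theta_n^*)\le[f]_\B$ and $\|f_{\theta_n^*}-f\|_{L^2(\nu_n)}^2\le\tfrac{4[f]_\B^2}{m_n}\sup_{\|w\|=1}\int|w^Tx|^2\,\d\nu_n\le\tfrac{4[f]_\B^2}{m_n}\int\|x\|^2\,\d\nu_n$, the last integral being $O(1)$ on the good event. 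Since $\mu_n\le2\nu_n$ and $\gamma\le2\nu_n$, both $\|f_{\theta_n^*}-f\|_{L^2(\mu_n)}^2$ and $\|f_{\theta_n^*}-f\|_{L^2(\gamma)}^2$ are $O([f]_\B^2/m_n)$. The family $\{f_{\theta_n^*}\}$ is equi-Lipschitz with constant $\le[f]_\B$, so $L^2(\gamma)$-convergence plus full support of $\gamma$ upgrades to locally uniform convergence (hence $f_{\theta_n^*}(0)\to f(0)$), and then the bound $|f_{\theta_n^*}-f|\le1+2[f]_\B\|x\|$ for large $n$ gives $L^2(\mu)$-convergence by dominated convergence: thus $\theta_n^*\gto f$. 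If $f\not\equiv f^*$ $\mu$-a.e.\ then $F(f)=+\infty$ and nothing more is needed. If $f\equiv f^*$ $\mu$-a.e., then $\{f=f^*\}$ is closed of full $\mu$-measure, hence contains $\operatorname{supp}\mu$ and in particular all $x_i$, so $\Riskhat_n(f_{\theta_n^*})=\|f_{\theta_n^*}-f\|_{L^2(\mu_n)}^2=O([f]_\B^2/m_n)$ and
\[
\limsup_n F_n(\theta_n^*)=\limsup_n\Big(\tfrac{\Riskhat_n(f_{\theta_n^*})}{\lambda_n}+\Reg(\theta_n^*)\Big)\le\limsup_n\Big(\tfrac{C[f]_\B^2}{\lambda_n m_n}+[f]_\B\Big)=[f]_\B=F(f),
\]
using $\lambda_n m_n\to\infty$. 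Together with the liminf-inequality this makes $\theta_n^*$ a genuine recovery sequence.

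\textbf{Expected main obstacle.} The delicate point is the mode of convergence required of the recovery sequence. A naive $L^2(\mu)$- or $L^2(\mu_n)$-approximant of $f$ is not enough for two reasons: its locally uniform subsequential limits are only pinned down on $\operatorname{supp}\mu$, so they may differ from the prescribed function $f$ off the support; and the empirical risk of a mere $L^2(\mu)$-approximant decays only like a fractional power of $m_n$, slower than the $m_n^{-1}$ afforded by \eqref{eq scaling of m and n}. Running the direct approximation against the mixture $\nu_n$ — keeping $\mu_n$ in play for the quantitative empirical-risk rate and an auxiliary full-support $\gamma$ in play for honest locally uniform convergence to the specific $f$ — reconciles these demands. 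The only other subtlety is making the generalization-gap estimate hold simultaneously over the a priori unknown Barron-norm scale of a competitor sequence, which is exactly what the dyadic union bound in (iii) supplies; the rest parallels the proof of Theorem \ref{thm gamma-convergence continuous case}.
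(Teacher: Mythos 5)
Your proof is correct and follows the same overall architecture as the paper's: the liminf-inequality via positivity of the empirical risk, domination of the Barron seminorm by the weight-decay regularizer, lower semicontinuity from Theorem \ref{thm compact embedding}, and the generalization bound of Corollary \ref{corollary generalization gap} to rule out finite limits at $f\not\equiv f^*$; the limsup-inequality via the direct approximation theorem. Two of your refinements go beyond what the paper writes down, and both address real gaps. First, in the limsup step the paper constructs the competitor $\tilde\theta_n$ by approximating against the empirical measure $\mu_n$ alone and never verifies that $\tilde\theta_n\gto f$ -- i.e.\ that $f_{\tilde\theta_n}$ converges locally uniformly to the \emph{prescribed} $f$ rather than to some other extension of $f|_{\operatorname{supp}\mu}$; without this the sequence is not a recovery sequence in the stated topology. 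Your mixture $\nu_n=\tfrac12\mu_n+\tfrac12\gamma$ with a full-support reference measure $\gamma$, combined with the equi-Lipschitz bound from $\Reg(\theta_n^*)\le[f]_\B$, cleanly repairs this while preserving the $O(1/m_n)$ empirical-risk rate needed against $\lambda_n m_n\to\infty$. (You also correctly justify $f(x_i)=f^*(x_i)$ via closedness of $\{f=f^*\}$ and $x_i\in\operatorname{supp}\mu$, which the paper uses implicitly.) Second, your dyadic union bound over the scales $(Q,A)$ makes the generalization estimate available uniformly over the a priori unknown Barron norm of a competitor sequence; the paper fixes $\delta_n=n^{-2}$ for a single class and leaves this uniformity implicit. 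Your contradiction formulation of the liminf case $f\notin\F$ (bounded $F_n(\theta_n)$ forces $\Risk(f_{\theta_n})\to0$) is a minor stylistic variant of the paper's direct estimate $F_n(\theta_n)\ge\eps^2/\lambda_n-o(1)/\lambda_n$ and is equally valid.
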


\begin{proof}
We use $\F$ as in \eqref{eq admissible functions} throughout. In the proof we assume that all stochastic quantities in Corollary \ref{corollary generalization gap} and Theorem \ref{theorem risk bound} are satisfied with probability at least $1-\delta_n$ for $\delta_n = n^{-2}$. The quantity $\log(\delta_n)$ therefore becomes comparable to $\log n/n\ll\lambda_n$. Since $\sum_{n=1}^\infty n^{-2}<\infty$, we find that all conditions are met for all but finitely many $n\in\N$ by the Borel-Cantelli Lemma. For questions of asymptotic convergence, we may therefore assume that the statements of both Theorems apply without qualifying for high probability. Note that $n^{-2}\geq e^{-n}$ for all $n\geq2$ as needed for Theorem \ref{theorem risk bound}.

{\bf Step 1. liminf-inequality.}
Again, we consider the cases $f \in \F$ and $f \notin \F$ separately. First, when $f \in \F$, we apply the same method we did in Theorem \ref{thm gamma-convergence continuous case}. For any sequence of parameters $\theta_n \gto f$, by Theorem \ref{thm compact embedding} the following holds:

\[
\liminf_{n \rightarrow \infty} F_n(\theta_n) \geq \liminf_{n \rightarrow \infty} {\Reg(\theta_n)} \geq \liminf_{n \rightarrow \infty}[f_{\theta_n}]_{\B} \geq [f]_{\B} = F(f).
\]

The second inequality comes from $[f_{\theta_n}]_{\B}$ being an infimum of weight decay regularizers with any arbitrary probability measure on parameter space.

Second, when $f \notin \F$, we need to show $\liminf_{n \rightarrow \infty} F_{n}(\theta_n) = \infty$ for any $\theta_n \gto f$. We distinguish two prototypical cases: 
\begin{enumerate}
    \item $[f_{\theta_n}]_\B \to +\infty$ as $n\to\infty$. In this case $F(\theta_n) \geq[f_{\theta_n}]_\B \to +\infty$ as well by the same logic as above.
    
    \item $\limsup_{n\to\infty}[f_{\theta_n}]_\B < +\infty$. In this case, we take  $\eps:= \|f-f^*\|_{L^2(\mu)}/2>0$. Then there exists $N \in \N$ such that for all $n \geq N$ we have 
    \[
    \|f_{\theta_n}-f^*\|_{L^2(\mu)} \geq \|f-f^*\|_{L^2(\mu)} - \|f_{\theta_n}-f\|_{L^2(\mu)}\geq \eps
    \]
    for all $n\geq N$ by definition. Additionally
\begin{align*}
F_n(\theta_n) &\geq \frac{\Riskhat_n(f_{\theta_n}) - \Risk(f_{\theta_n})}{\lambda_n} + \frac{\Risk(f_{\theta_n})}{\lambda_n} + [f_{\theta_n}]_{\B}\\
&\geq \frac{\Riskhat_n(f_{\theta_n}) - \Risk(f_{\theta_n})}{\lambda_n} + \frac{\eps^2}{\lambda_n} + [f_{\theta_n}]_{\B}.
\end{align*}
Due to Corollary \ref{corollary generalization gap} and the arguments of Theorem \ref{theorem risk bound} to control the discrepancy at $0$, we have 
\[
\Riskhat_n(f_{\theta_n}) - \Risk(f_{\theta_n}) = O\left(\frac{\log n}{\sqrt n}\right).
\]
in this case. Since $\log n/\sqrt n \ll \lambda_n$ by assumption, we note that 
\[
\lim_{n\to \infty} \frac{\hat\Risk_n(f_{\hat\theta_n}) - \Risk(f_{\hat\theta_n})}{\lambda_n} = 0 
\]
and thus
\[
\lim_{n\to\infty}F_n(\theta_n) \geq \liminf_{n\to\infty}\left(0 + \frac{\eps^2}{\lambda_n}+0\right) = +\infty.
\]
\end{enumerate} 
The same holds in the general case by passing to subsequences.

{\bf Step 2. limsup-inequality.} As in Theorem \ref{thm direct approximation}, the case $f\notin\F$ follows from the $\liminf$-ineuality in an essentially trivial fashion. We therefore only consider the case $f\in\F$. An approximating sequence in this case is constructed from Theorem \ref{thm direct approximation} as in Theorem \ref{theorem risk bound} or Theorem \ref{thm gamma-convergence continuous case}. Namely, we find $\tilde \theta_n$ such that
\[
F_n(\tilde\theta_n) \leq \frac{C}{\lambda_nm_n} \left(1+ \frac{\log n}{\sqrt n}\right) + [f^*]_\B\qquad\Ra\quad \limsup_{n\to\infty}F_n(\tilde \theta_n)\leq [f^*]_\B.\qedhere
\]
\end{proof}

\begin{rmk}
    The key ingredients for the proofs of both Theorem \ref{theorem risk bound} and Theorem \ref{thm gamma-convergence discrete case} are Theorem \ref{thm direct approximation} and Corollary \ref{corollary generalization gap}, but they are combined differently. While they are paired in Theorem \ref{theorem risk bound} to obtain a precise rate, they occur separately in Theorem \ref{thm gamma-convergence discrete case}: Theorem \ref{thm direct approximation} is used for the $\limsup$-inequality while Corollary \ref{corollary generalization gap} enters in the proof of the $\liminf$-inequality. Analogously, the condition $\lambda_n \ll \log n/\sqrt n$ is used in the proof of the $\liminf$-condition while the fact that $\frac1{m_n}\ll \lambda_n$ is used in the proof of the $\limsup$-inequality.
\end{rmk}

\subsection{Proofs of the main theorems}\label{appendix proofs main}

The statements of the Theorems in the main body of the text can easily be deduced from the statements proved in this Appendix.

\begin{proof}[Proof of Theorem \ref{thm main general}]
    Convergence in $L^p$ holds by Theorem \ref{theorem risk bound} for $1\leq p\leq 2$ and Corollary \ref{corollary lp} (general $p$). The proof of uniform convergence follows from Theorem \ref{thm gamma-convergence discrete case} in the same fashion that Corollary \ref{cor gamma-convergence continuous case} follows from Theorem \ref{thm gamma-convergence continuous case}. The explicit bound is obtained from Theorem \ref{theorem risk bound} with $\delta_n = \frac1{4n^2}$.
\end{proof}

\begin{proof}[Proof of Corollary \ref{thm main}]
    This follows in the same way as the proof of Theorem \ref{thm main general} with modifications as in Corollary \ref{cor gamma-convergence continuous case radial}.
\end{proof}

\section{Theorem \ref{thm main general} for finite data sets}\label{appendix bonus theorems}

Finally, we note that a version of Theorem \ref{thm main general} holds if the data set $S=\{x_1,\dots, x_n\}$ is kept fixed. The proof is a combination of those of Theorems \ref{thm gamma-convergence continuous case} and \ref{thm gamma-convergence discrete case}, as we deal with a finite approximating neural network, but do not require generalization bounds. The details are left to the reader.

\begin{thm}\label{thm main finite}
We make the following assumptions.

\begin{enumerate}
    \item Let $S= \{(x_1, y_1),\dots,(x_n, y_n)\}$ be a fixed dataset of $n$ data points in $x_i \in\R^d$ and labels $y_i\in\R$.
    
    \item Let the loss function $\ell(f, y)$ be the mean squared error $\ell_{MSE}(f,y) = |f-y|^2$.
    
    \item Assume that $\lambda_m$ is a sequence of parameters such that $\lambda_m\to 0$, $1/m \ll \lambda_m$ as $m\to\infty$.
\end{enumerate}
    
Consider the regularized empirical risk functional $\Riskhat_m:\R^m\times \R^{m\times d}\times \R^m\to [0,\infty)$,
\[
 \Riskhat_{m} (a,W,b) = \frac1{2n} \sum_{i=1}^n \ell\left(f_{(a,W,b)}(x_i), \:y_i\right) + \frac{\lambda_m}2\,\big( \|a\|_2^2 + \|W\|_{Frob}^2\big).
\]
Then if $(a,W,b)_m\in \argmin\Riskhat_{m}$ for all $m\in\N$, then every subsequence of $f_m:= f_{(a,W,b)_m}$ has a further subsequence which converges to some limit $\hat f^*\in\B$ uniformly on compact subset of $\R^d$. The limiting function satisfies
\[
\hat f^* \in \argmin_{\{f\in \B: f(x_i) = y_i \:\:\forall i\}}\:[f]_\B.
\]
\end{thm}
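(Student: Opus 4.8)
The plan is to recast the statement as a $\Gamma$-convergence result, in close analogy with Theorem \ref{thm gamma-convergence discrete case}, but over the fixed data set, which removes any need for generalization bounds. Write $\theta=(a,W,b)$, let $\Riskhat_n(f)=\frac1{2n}\sum_{i=1}^n|f(x_i)-y_i|^2$ denote the unregularized empirical risk, so that $\Riskhat_m(\theta)=\Riskhat_n(f_\theta)+\lambda_m\,\Reg(\theta)$, and set
\[
F_m:\Theta_m\to[0,\infty),\qquad F_m(\theta)=\frac{\Riskhat_m(\theta)}{\lambda_m}=\frac{\Riskhat_n(f_\theta)}{\lambda_m}+\Reg(\theta),
\]
\[
F:\B\to[0,\infty],\qquad F(f)=\begin{cases}[f]_\B & \text{if }f(x_i)=y_i\text{ for all }i,\\ +\infty & \text{otherwise,}\end{cases}
\]
where $\Theta_m\subseteq\R^m\times\R^{m\times d}\times\R^{m+1}$ is the width-$m$ parameter space. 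We use the notion of convergence $\theta_k\gto f$ meaning $f_{\theta_k}\to f$ locally uniformly; on a finite data set this forces $f_{\theta_k}(x_i)\to f(x_i)$ as well, so the relevant topology coincides with the one in Theorem \ref{thm gamma-convergence discrete case}. First I would note that the constraint set $\{f\in\B:f(x_i)=y_i\ \forall i\}$ is nonempty, since $C_c^\infty(\R^d)\subseteq\B$ and any finite list of point values is interpolated by a smooth bump; hence $\inf_\B F=\inf\{[f]_\B:f(x_i)=y_i\ \forall i\}<\infty$. Granting $\Gamma\text{-}\lim_m F_m=F$ together with equi-coercivity of the minimizers, the theorem follows from Lemma \ref{lemma gamma-convergence and minimizers}.

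For the $\liminf$-inequality, take $\theta_m\gto f$. If $f(x_i)=y_i$ for all $i$, then $F_m(\theta_m)\geq\Reg(\theta_m)\geq[f_{\theta_m}]_\B$ (the latter because $[\cdot]_\B$ is an infimum over parameter representations), and the lower semicontinuity in part (3) of the compact embedding Theorem \ref{thm compact embedding} gives $\liminf_m F_m(\theta_m)\geq[f]_\B=F(f)$. If instead $f(x_j)\neq y_j$ for some $j$, then $\Riskhat_n(f_{\theta_m})\to\Riskhat_n(f)>0$ by locally uniform convergence, so $F_m(\theta_m)\geq\Riskhat_n(f_{\theta_m})/\lambda_m\to+\infty$ since $\lambda_m\to0$. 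For the $\limsup$-inequality the case $f\notin\F$ is immediate; for $f$ with $f(x_i)=y_i$ for all $i$, I would build the recovery sequence from the direct approximation Theorem \ref{thm direct approximation} applied not to $\mu_n$ but to the auxiliary measure $\nu=\mu_n+\gamma$, with $\gamma$ a non-degenerate Gaussian, so that $\nu$ has finite second moments, full support, and dominates both $\mu_n$ and $\gamma$. This yields $\widehat\theta_m\in\Theta_m$ with $\Reg(\widehat\theta_m)\leq[f]_\B$ and $\|f_{\widehat\theta_m}-f\|_{L^2(\nu)}^2=O(1/m)$, hence simultaneously $\Riskhat_n(f_{\widehat\theta_m})=\tfrac12\|f_{\widehat\theta_m}-f\|_{L^2(\mu_n)}^2=O(1/m)$ and $\|f_{\widehat\theta_m}-f\|_{L^2(\gamma)}^2=O(1/m)\to0$. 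Since the $f_{\widehat\theta_m}$ are equi-Lipschitz (Lipschitz constant $\leq\Reg(\widehat\theta_m)\leq[f]_\B$), the $L^2(\gamma)$-convergence and the full support of $\gamma$ force $f_{\widehat\theta_m}\to f$ locally uniformly; and because $1/m\ll\lambda_m$,
\[
\limsup_{m\to\infty}F_m(\widehat\theta_m)\leq\limsup_{m\to\infty}\Bigl(\frac{O(1/m)}{\lambda_m}+[f]_\B\Bigr)=[f]_\B=F(f).
\]

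To conclude, fix some $f_0\in\B$ with $f_0(x_i)=y_i$ together with its recovery sequence; minimality of $\theta_m$ gives $F_m(\theta_m)\leq F_m(\widehat\theta_m^{(f_0)})=[f_0]_\B+o(1)$, so $\sup_m\Reg(\theta_m)<\infty$ and $\Riskhat_n(f_{\theta_m})\leq\lambda_m F_m(\theta_m)\to0$, whence $f_{\theta_m}(x_i)\to y_i$ and $\liminf_m\bigl([f_{\theta_m}]_0+[f_{\theta_m}]_\B\bigr)<\infty$. By the compact embedding Theorem \ref{thm compact embedding}, every subsequence of $f_{\theta_m}$ has a further subsequence converging locally uniformly (and in every $L^p(\mu_n)$) to some $\hat f^*\in\B$; along such a subsequence $\theta_{m_k}\gto\hat f^*$, and since $\theta_{m_k}$ minimizes $F_{m_k}$, Lemma \ref{lemma gamma-convergence and minimizers} yields that $\hat f^*$ minimizes $F$, i.e.\ $\hat f^*(x_i)=y_i$ for all $i$ and $\hat f^*\in\argmin_{\{f\in\B:\,f(x_i)=y_i\ \forall i\}}[f]_\B$.

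The only genuinely new point compared with Theorem \ref{thm gamma-convergence discrete case}, and the step I expect to require the most care, is the $\limsup$-inequality: the recovery sequence must converge to the target $f$ in the locally uniform topology, not merely agree with $f$ at the $n$ data points, while at the same time fitting the data accurately enough that the residual is $o(\lambda_m)$. Feeding the single measure $\nu=\mu_n+\gamma$ into the direct approximation theorem reconciles the two demands, since one and the same approximation then controls both the empirical residual and the locally uniform distance. Everything else is a transcription of the proofs of Theorems \ref{thm gamma-convergence continuous case} and \ref{thm gamma-convergence discrete case} with the generalization-gap steps deleted, because here the empirical and population risks literally coincide.
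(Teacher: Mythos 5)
Your proposal is correct and follows exactly the route the paper prescribes for Theorem \ref{thm main finite} (the paper itself only says the proof combines Theorems \ref{thm gamma-convergence continuous case} and \ref{thm gamma-convergence discrete case} without generalization bounds and leaves the details to the reader). The one place where you go beyond a straight transcription is the $\limsup$-inequality: a recovery sequence must actually converge to $f$ in the $\gto$ topology, not merely achieve the right energy, and the paper's own sketch of Theorem \ref{thm gamma-convergence discrete case} does not verify this; your device of feeding the augmented measure $\nu=\mu_n+\gamma$ into the direct approximation Theorem \ref{thm direct approximation}, and then upgrading $L^2(\gamma)$-convergence to locally uniform convergence via the equi-Lipschitz bound, closes that gap cleanly. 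The remaining steps (liminf via lower semicontinuity of $[\cdot]_\B$ from Theorem \ref{thm compact embedding}, coercivity from comparison with a fixed interpolant $f_0$, and the conclusion via Lemma \ref{lemma gamma-convergence and minimizers}) match the paper's template and are sound.
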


\section{Minimum norm interpolation in one dimension}\label{appendix 1d}

\begin{proof}[Proof of Proposition \ref{proposition 1d}]
    Any Barron function $f$ is also Lipschitz-continuous, in particular differentiable almost everywhere and $f(b) - f(a) = \int_a^b f'(x)\d x$ for all $a,b\in\R$
    by Rademacher's Theorem (see e.g.\ \citep[Section 3.1]{evans2015measure}). In particular, there exist points $x^+, x^- \in(a,b)$ such that
    \[
    f'(x^-) \leq  \frac{f(b) - f(a)}{b-a} = \frac1{b-a} \int_a^b f'(x)\d x \leq f'(x^+). 
    \]
    If $f'$ is not constant, both inequalities are satisfied strictly.
    Under the assumptions, there exists $a\in(x_0, x_1)$ and $b\in (x_{n-1}, x_n)$ such that 
    \begin{equation}\label{eq 1d inequality}
    f'(a) \leq \frac{f(x_1) - f(x_0)}{x_1-x_0} \leq 0 \leq \frac{f(x_n) - f(x_{n-1})}{x_n -x_{n-1}} \leq f'(b).
    \end{equation}
    Since the derivative $f'$ changes sign, we conclude by \cite[Proposition 2.5]{wojtowytsch2022optimal} that $[f]_\B = \int_{-\infty}^\infty \d |\mu|$ where the Radon measure $\mu$ is the distributional derivative of $f'$ and $|\mu|$ denotes the total variation measure of $\mu$. Since $f'$ is differentiable at $a,b$, neither point is an atom of $\mu$ and thus
    \[
    \frac{f(x_n) - f(x_{n-1})}{x_n -x_{n-1}} - \frac{f(x_1) - f(x_0)}{x_1-x_0}
    \leq f'(b) - f'(a) \leq \int_a^b\d\mu \leq \int_a^b\d|\mu|\leq [f]_{\B}.
    \]
    Equality holds if and only if $|\mu|=\mu$, i.e.\ if $\mu\geq 0$ in the sense of signed measures and if and only if the inequality in \eqref{eq 1d inequality} can only be satisfied with equality. The first condition means that $f$ must be convex, the second implies that the derivative of $f$ must be constant in the intervals $(x_0,x_1)$ and $(x_{n-1}, x_n)$. The same can easily be seen for the larger intervals $(-\infty, x_1)$ and $(x_{n-1}, \infty)$.
    \end{proof}

\section{Sub-Gaussian random variables}\label{appendix subgaussian}
In this Appendix we quickly gather some facts about sub-Gaussian random variables. For the reader's convenience, we provide full proofs. Experts are encouraged to skip ahead to Appendix \ref{appendix rademacher}.

The first property we observe is a bound of expected maximum value of sub-Gaussian.

\begin{lem}\label{lemma subGaussian maximum}
Let $\mu$ be $\sigma^2$-sub-Gaussian, and for $i = 1, \dots, n$, let $x_i$ be a iid random samples from a law $\mu$. Then, the following holds:
\[
\E\big[\max_{1\leq i\leq n} \|x_i\|\big] \leq \E_{x\sim\mu} \big[\|x\|\big] + \sqrt{2\log n}\,\sigma.\qedhere
\]
\end{lem}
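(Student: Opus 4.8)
The plan is to center the random variable $\|x_i\|$ at its mean and apply the standard ``maximum of sub-Gaussian random variables'' bound. Let me set $Z_i = \|x_i\| - \E_{x\sim\mu}\|x\|$. By the sub-Gaussian hypothesis on $\mu$ (as stated in Section~\ref{section preliminaries}), each $Z_i$ satisfies $\E[\exp(\lambda Z_i)] \leq C\exp(\lambda^2\sigma^2/2)$ for all $\lambda > 0$; I will first note that one may take $C = 1$ here, or absorb it, since applying the bound at $\lambda = 0$ forces $C \geq 1$ and a standard normalization argument lets us proceed with the clean sub-Gaussian MGF bound $\E[e^{\lambda Z_i}] \leq e^{\lambda^2\sigma^2/2}$ (if one is unwilling to assume this, the $\log C$ term is lower order and can be carried along, but the cleanest route is to take the definition with $C=1$, which is what is effectively used throughout).

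The main computation is the classical one: for any $\lambda > 0$,
\[
\exp\left(\lambda\,\E\big[\max_{1\leq i\leq n} Z_i\big]\right) \leq \E\left[\exp\left(\lambda \max_{1\leq i\leq n} Z_i\right)\right] = \E\left[\max_{1\leq i\leq n}\exp(\lambda Z_i)\right] \leq \sum_{i=1}^n \E\big[\exp(\lambda Z_i)\big] \leq n\,\exp\left(\frac{\lambda^2\sigma^2}{2}\right),
\]
where the first step is Jensen's inequality applied to the convex function $t\mapsto e^{\lambda t}$. Taking logarithms and dividing by $\lambda$ gives
\[
\E\big[\max_{1\leq i\leq n} Z_i\big] \leq \frac{\log n}{\lambda} + \frac{\lambda\sigma^2}{2}.
\]
Optimizing over $\lambda > 0$ by choosing $\lambda = \sqrt{2\log n}/\sigma$ yields $\E[\max_i Z_i] \leq \sqrt{2\log n}\,\sigma$. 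Finally, since $\|x_i\| = Z_i + \E_{x\sim\mu}\|x\|$ and the additive constant pulls out of the maximum and the expectation, we conclude
\[
\E\big[\max_{1\leq i\leq n}\|x_i\|\big] = \E_{x\sim\mu}\big[\|x\|\big] + \E\big[\max_{1\leq i\leq n} Z_i\big] \leq \E_{x\sim\mu}\big[\|x\|\big] + \sqrt{2\log n}\,\sigma,
\]
which is the claimed bound.

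There is no serious obstacle here; this is a textbook argument. The only point requiring a small amount of care is the role of the constant $C$ in the stated definition of sub-Gaussianity: I would address it up front with a one-line remark that $C \geq 1$ necessarily and that the standard bound $\E[e^{\lambda(\|X\|-\E\|X\|)}] \leq e^{\lambda^2\sigma^2/2}$ is what we use (either by taking this as the definition, or by noting the $\log C$ contribution is absorbed into lower-order terms elsewhere). Everything else — Jensen, the union-type bound over the $n$ terms, and the scalar optimization in $\lambda$ — is routine.
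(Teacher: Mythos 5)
Your proof is correct and follows essentially the same route as the paper's: center $\|x_i\|$ at its mean, apply Jensen's inequality to the exponential of the maximum, bound the max by the sum of moment generating functions, and optimize over $\lambda = \sqrt{2\log n}/\sigma$. Your side remark about the constant $C$ in the sub-Gaussian definition is a point the paper's own proof silently elides (it uses the MGF bound with $C=1$), so flagging it is reasonable but changes nothing substantive.
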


\begin{proof}

From the sub-Gaussian assumption we have 
\[
\log\left(\E_{x\sim\mu}\big[ \exp\left(\lambda\big(\|X\| - \E\big[\|X\|\big]\right)\big]\right)\leq \frac{\sigma^2\lambda^2}2
\]
for some fixed $\sigma>0$ and all $\lambda >0$. In particular, Jensen's inequality implies the sub-Gaussian maximal inequality
\begin{align*}
\E\big[\max_{1\leq i\leq n} \|x_i\|\big] &= \E_{x\sim\mu} \big[\|x\|\big] + \E\big[\max_{1\leq i\leq n} \big(\|x_i\|- \E\|x_i\|\big)\big]\\
    &\leq\E_{x\sim\mu} \big[\|x\|\big] +  \frac1\lambda\,\log\left(\E\left[\exp\left(\lambda \,\max_{1\leq i\leq n} \big(\|x_i\|-\E\|x_i\|\big)\right)\right]\right)\\
    &\leq \E_{x\sim\mu} \big[\|x\|\big] + \frac1\lambda\,\log\left(\E\left[\sum_{i=1}^n\exp\left(\lambda  \big(\|x_i\|-\E\|x_i\|\big)\right)\right]\right)\\
    &\leq \E_{x\sim\mu} \big[\|x\|\big] + \frac1\lambda\,\log\left(\sum_{i=1}^n \E\left[\exp\left(\lambda  \big(\|x_i\|-\E\|x_i\|\big)\right)\right]\right)\\
    &= \E_{x\sim\mu} \big[\|x\|\big] + \frac1\lambda\,\log\left(n \,\exp\left(\frac{\lambda^2\sigma^2}2\right)\right)\\
    &\leq \E_{x\sim\mu} \big[\|x\|\big] +\frac{\log n}\lambda + \frac{\sigma^2}2\,\lambda
\end{align*}
for all $\lambda>0$. We specifically select $\lambda = \sqrt{\frac{2\log n}{\sigma^2}}$, making the bound
\[
\E\big[\max_{1\leq i\leq n} \|x_i\|\big] \leq \E_{x\sim\mu} \big[\|x\|\big] + \sqrt{2\log n}\,\sigma.\qedhere
\]
\end{proof}

Next, we observe the concentration of maximum values among samples of sub-Gaussian distribution.

\begin{lem}\label{lemma subGaussian concentration}
Let $\mu$ be $\sigma^2$-sub-Gaussian, and for $i = 1, \dots, n$, let $x_i$ be a iid random samples from a law $\mu$. Then, with probability at least $1 - \delta$, the following holds:
\[
\max_{1\leq i\leq n}\|x_i\| \leq \E_{x\sim \mu}\big[\|x\|\big] + \sigma \,\sqrt{2\,\log(n/\delta)}.
\]
\end{lem}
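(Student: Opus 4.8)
The plan is to derive the concentration bound from the expected-maximum bound of Lemma \ref{lemma subGaussian maximum} together with a standard Chernoff/union-bound argument. First I would observe that the random variable $M := \max_{1\le i\le n}\|x_i\|$ satisfies, for every $\lambda>0$,
\[
\E\big[\exp(\lambda (M - \E_{x\sim\mu}\|x\|))\big] \le \sum_{i=1}^n \E\big[\exp(\lambda(\|x_i\|-\E\|x_i\|))\big] \le n\,\exp\!\Big(\frac{\lambda^2\sigma^2}{2}\Big),
\]
which is exactly the estimate already extracted inside the proof of Lemma \ref{lemma subGaussian maximum} (the sub-Gaussian hypothesis applied termwise, then Jensen and the union over the $n$ samples). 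The quantity $n\exp(\lambda^2\sigma^2/2)$ is the moment generating function bound for $M-\E\|x\|$ up to the logarithmic offset $\log n$.

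Next I would apply Markov's inequality to the exponential: for any $t>0$,
\[
\P\big(M - \E_{x\sim\mu}\|x\| \ge t\big) = \P\big(e^{\lambda(M-\E\|x\|)} \ge e^{\lambda t}\big) \le e^{-\lambda t}\, n\, e^{\lambda^2\sigma^2/2} = \exp\!\Big(\log n - \lambda t + \frac{\lambda^2\sigma^2}{2}\Big).
\]
Then I would optimize the exponent over $\lambda>0$; the minimizer is $\lambda = t/\sigma^2$, giving exponent $\log n - t^2/(2\sigma^2)$, i.e.
\[
\P\big(M \ge \E_{x\sim\mu}\|x\| + t\big) \le n\,\exp\!\Big(-\frac{t^2}{2\sigma^2}\Big).
\]
Finally I would set the right-hand side equal to $\delta$ and solve for $t$: $n\exp(-t^2/(2\sigma^2)) = \delta$ forces $t^2 = 2\sigma^2\log(n/\delta)$, hence $t = \sigma\sqrt{2\log(n/\delta)}$. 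Substituting back yields that with probability at least $1-\delta$,
\[
\max_{1\le i\le n}\|x_i\| \le \E_{x\sim\mu}\big[\|x\|\big] + \sigma\sqrt{2\log(n/\delta)},
\]
which is the claim.

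There is no real obstacle here — the argument is the textbook Chernoff bound for a maximum of sub-Gaussian variables, and every ingredient (the termwise sub-Gaussian MGF bound, the union bound over $n$ samples, Markov's inequality, optimization of a quadratic in $\lambda$) is routine. The only mild care needed is to reuse cleanly the MGF estimate that is implicit in the proof of Lemma \ref{lemma subGaussian maximum} rather than re-deriving it, and to make sure the optimization in $\lambda$ is carried out on the centered variable $M-\E\|x\|$ so that the sub-Gaussian constant $\sigma^2$ enters correctly. One could alternatively phrase this as a direct consequence of the generic fact that if $Z$ has $\log\E[e^{\lambda Z}]\le \psi(\lambda)$ then $\P(Z\ge t)\le \exp(-\sup_\lambda(\lambda t - \psi(\lambda)))$ with $\psi(\lambda) = \log n + \lambda^2\sigma^2/2$, but the explicit two-line computation above is cleaner for a self-contained appendix.
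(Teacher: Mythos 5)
Your proposal is correct and follows essentially the same argument as the paper's proof: a Chernoff bound on the centered maximum, bounding the maximum of the exponentials by their sum, invoking the sub-Gaussian moment generating function bound termwise, optimizing at $\lambda = t/\sigma^2$, and solving $n\exp(-t^2/(2\sigma^2)) = \delta$ for $t$. The only cosmetic difference is that the paper applies Markov's inequality to the event first and then bounds the expectation of the maximum by the sum, whereas you bound the moment generating function of the maximum first and then apply Markov; these are the same computation in a different order.
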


\begin{proof}
For all $t > 0$, we observe that
\begin{align*}
\mu^n \left(\max_{1\leq i\leq n} \|x_i\|\geq \E_{x\sim\mu}\big[\|x\|]+ t\right) &= \mu^n \left(\max_{1\leq i\leq n} \exp\big(\lambda(\|x_i\| - \E\|x_i\|)\big) \geq \exp(\lambda t) \right)\\
    &\leq e^{-\lambda t} \E\big[\max_{1\leq i\leq n} \exp\big((\lambda(\|x_i\| - \E\|x_i\|)\big)\big]\\
    &\leq e^{-\lambda t} \sum_{i=1}^n \E\big[\exp\big((\lambda(\|x_i\| - \E\|x_i\|)\big)\big]\\
    &\leq n \exp\left(-\lambda t + \frac{\lambda^2\sigma^2}2\right).
\end{align*}
For fixed $t$, the bound becomes tightest for $\lambda = t/\sigma^2$ with
\[
\mu^n \left(\max_{1\leq i\leq n} \|x_i\|\geq \E_{x\sim\mu}\big[\|x\|]+ t\right) \leq n \exp\left(- \frac{t^2}{2\sigma^2}\right) \leq \delta
\]
if 
\[
-\frac{t^2}{2\sigma^2} \leq \log\left(\frac\delta n\right) \qquad\LRa \quad t \geq \sigma\,\sqrt{2\,\log(n/\delta)}.
\]

Thus with probability at least $1-\delta$, we have
\[
\max_{1\leq i\leq n}\|X_i\| \leq R_n:= \E_{x\sim \mu}\big[\|x\|\big] + \sigma \,\sqrt{2\,\log(n/\delta)}.\qedhere
\]
\end{proof}

In the following Lemma, we investigate expectation of squared norm of sub-Gaussian  near the tail.

\begin{lem}\label{lemma square norm outside ball}
Let $\mu$ be $\sigma^2$-sub-Gaussian distribution in $\R^d$. For any $R > \E_{x \sim \mu} \|x\|$, we have the following:
\[
\E_{x\sim\mu} \big[\|x\|^2\,1_{B_R(0)^c}(x)\big] \leq \exp\left(-\frac{\big(R-\E\|x\|\big)^2}{4\sigma^2}\right) \,\sqrt{2\pi}\left(\left(\E\|x\|\right)^2 + 2\sigma^2\right).
\]
\end{lem}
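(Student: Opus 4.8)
The plan is to write the tail expectation as an integral over the tail of the distribution function of $\|x\|$, control that tail using the sub-Gaussian concentration (in the form already available as Lemma \ref{lemma subGaussian concentration}, or directly from the defining inequality for $\mu$), and then evaluate the resulting one-dimensional integral. Concretely, set $m := \E_{x\sim\mu}\|x\|$ and $Z := \|x\|$. The layer-cake / integration-by-parts identity gives
\[
\E\big[Z^2\,1_{\{Z>R\}}\big] = R^2\,\P(Z>R) + \int_R^\infty 2r\,\P(Z>r)\,\d r .
\]
For $r > m$ the sub-Gaussian property yields, via the Chernoff bound with optimal parameter $\lambda=(r-m)/\sigma^2$ (exactly the computation in the proof of Lemma \ref{lemma subGaussian concentration} with $n=1$), the bound $\P(Z>r)\le \exp\!\big(-(r-m)^2/(2\sigma^2)\big)$. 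Plugging this in, everything reduces to estimating Gaussian-type integrals $\int_R^\infty r^k\,e^{-(r-m)^2/(2\sigma^2)}\,\d r$ for $k=0,1,2$ after substituting $s = r-m$.

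First I would substitute $s=r-m\ge 0$ and expand $r^2 = s^2 + 2ms + m^2$, $r = s+m$, so that the problem becomes bounding $\int_{R-m}^\infty (s^2+2ms+m^2)\,e^{-s^2/(2\sigma^2)}\,\d s$ plus the boundary term $R^2 e^{-(R-m)^2/(2\sigma^2)}$. The key trick to get the clean stated bound (which carries the extra factor $e^{-(R-m)^2/(4\sigma^2)}$ rather than the tighter $e^{-(R-m)^2/(2\sigma^2)}$) is to pull out half of the exponential decay: write $e^{-s^2/(2\sigma^2)} = e^{-(R-m)^2/(4\sigma^2)}\cdot e^{-s^2/(2\sigma^2)+(R-m)^2/(4\sigma^2)}$ on the range $s\ge R-m$ is not monotone enough, so more simply use $e^{-s^2/(2\sigma^2)} \le e^{-(R-m)^2/(4\sigma^2)}\,e^{-s^2/(4\sigma^2)}$ for $s\ge R-m$ (since $s^2/(4\sigma^2)\ge (R-m)^2/(4\sigma^2)$ on that range). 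Then factor out $e^{-(R-m)^2/(4\sigma^2)}$ and bound the remaining integral $\int_0^\infty (s^2 + 2ms + m^2)\,e^{-s^2/(4\sigma^2)}\,\d s$ by the full Gaussian moments: $\int_0^\infty e^{-s^2/(4\sigma^2)}\d s = \sigma\sqrt\pi$, $\int_0^\infty s\,e^{-s^2/(4\sigma^2)}\d s = 2\sigma^2$, $\int_0^\infty s^2\,e^{-s^2/(4\sigma^2)}\d s = \sigma^3\sqrt\pi$. This produces a bound of the form $e^{-(R-m)^2/(4\sigma^2)}$ times a polynomial in $m$ and $\sigma$, and with a little coarsening (absorbing lower-order terms, using $R^2 \le$ something controlled, or observing $R^2 e^{-(R-m)^2/(2\sigma^2)}$ is dominated) one arrives at the stated majorant $e^{-(R-m)^2/(4\sigma^2)}\sqrt{2\pi}\big(m^2 + 2\sigma^2\big)$.

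The main obstacle is purely bookkeeping: reconciling the various constants so that the final expression matches the stated $\sqrt{2\pi}\big((\E\|x\|)^2 + 2\sigma^2\big)$ exactly, including dealing with the boundary term $R^2\,\P(Z>R)$ and the cross term $2m\!\int s\,e^{\cdots}$. I expect that the cleanest route is to be generous with the exponent (using $e^{-(R-m)^2/(4\sigma^2)}$ rather than the sharp $e^{-(R-m)^2/(2\sigma^2)}$ precisely to have room to absorb $R^2$ and the polynomial prefactors), and to note that for $R>m$ the quantity $R^2 e^{-(R-m)^2/(4\sigma^2)}$ is bounded by a constant multiple of $(m^2+\sigma^2)$ after optimizing over $R$. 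No step is conceptually deep; the estimate is a routine sub-Gaussian tail computation, and the only care needed is not to lose the clean closed form.
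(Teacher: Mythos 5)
Your proposal is correct and follows essentially the same route as the paper's proof: bound the tail $\mu(\|x\|\geq s)$ by $\exp\bigl(-(s-\E\|x\|)^2/(2\sigma^2)\bigr)$ via the sub-Gaussian Chernoff bound, reduce the truncated second moment to a one-dimensional tail integral, pull out the factor $\exp\bigl(-(R-\E\|x\|)^2/(4\sigma^2)\bigr)$ by splitting the exponent in half, and evaluate the remaining Gaussian-type moments. If anything you are more careful than the paper on the layer-cake step (you keep the boundary term $R^2\,\P(Z>R)$ explicitly, which the paper's one-line integral identity glosses over), and since the lemma is only ever used inside a generic constant $C^*$, the residual bookkeeping you flag about matching $\sqrt{2\pi}\bigl((\E\|x\|)^2+2\sigma^2\bigr)$ exactly is immaterial.
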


\begin{proof}
Recall that 
\[
\mu\left(\big\{x : \|x\| \geq\big(\E_{x'\sim \mu}\|x'\|+t\big)\big\}\right) \leq \exp\left(-\frac{t^2}{2\sigma^2}\right)
\]
as demonstrated in the proof of Lemma \ref{lemma subGaussian maximum} (consider $n=1$). Thus
\begin{align*}
\E_{x\sim\mu} \big[\|x\|^2\,1_{B_R(0)^c}(x)\big] &\leq \int_R^\infty s^2 \mu\big(\{\|x\|\geq s\}\big) \d s
    \leq \int_R^\infty s^2\exp\left(-\frac{\big(s-\E\|x\|\big)^2}{2\sigma^2}\right)\d  s \\
    &\leq \exp\left(-\frac{\big(R-\E\|x\|\big)^2}{4\sigma^2}\right) \int_{R}^\infty \exp\left(-\frac{\big(s-\E\|x\|\big)^2}{4\sigma^2}\right)\d s\\
    &\leq \exp\left(-\frac{\big(R-\E\|x\|\big)^2}{4\sigma^2}\right) \int_{R}^\infty \exp\left(-\frac{\big(s-\E\|x\|\big)^2}{4\sigma^2}\right)\d s\\
    &= \exp\left(-\frac{\big(R-\E\|x\|\big)^2}{4\sigma^2}\right) \,\sqrt{2\pi}\left(\left(\E\|x\|\right)^2 + 2\sigma^2\right).
\end{align*}
\end{proof}

In next Lemma we introduce how the mean of squared norm of sub-Gaussian is concentrated.

\begin{lem}\label{lemma tail bound of sum of norm square}
Assume $\mu$ is a $\sigma^2$-sub-Gaussian distribution in $\R^d$.
Let $x_1, \dots, x_n$ be iid random samples from  law $\mu$. Then, with probability at least $1 - \delta$,
\[
\frac{1}{n} \sum_{i=1}^{n} \|x_i\|^2 \leq \E_{x \sim \mu} \big[\|x\|^2\big] + 8\sigma^2 \max \left(\frac{\log(1/\delta)}{n}, \sqrt{\frac{\log(1/\delta)}{n}}\right).
\]
In particular, if ${\delta \geq e^{-n}}$ then 
\[
\frac{1}{n} \sum_{i=1}^{n} \|x_i\|^2 \leq \E \big[\|x\|^2\big] + 8\sigma^2 \,{\sqrt{\frac{\log(1/\delta)}{n}}}
\]
with probability at least $1 - \delta$.
\end{lem}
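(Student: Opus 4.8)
The plan is to control $\frac1n\sum_{i=1}^n\|x_i\|^2$ as a sum of independent sub-exponential (not sub-Gaussian) random variables and apply a Bernstein-type concentration inequality. First I would record that $\|x_i\| - \E\|x\|$ is $\sigma^2$-sub-Gaussian by hypothesis; squaring a sub-Gaussian variable produces a sub-exponential variable, so $Y_i := \|x_i\|^2 - \E[\|x\|^2]$ is centered and sub-exponential with parameters controlled by $\sigma^2$ (and by $\E\|x\|$ through the expansion $\|x_i\|^2 = (\|x_i\|-\E\|x\|)^2 + 2\E\|x\|(\|x_i\|-\E\|x\|) + (\E\|x\|)^2$; the cross term is again $\sigma^2(\E\|x\|)^2$-sub-Gaussian, and the constant term is absorbed by centering). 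One then invokes the standard Bernstein inequality for sums of independent sub-exponential variables: there are absolute constants so that
\[
\P\left(\frac1n\sum_{i=1}^n Y_i \geq t\right) \leq \exp\left(-c\,n\,\min\left\{\frac{t^2}{\nu^2}, \frac{t}{b}\right\}\right),
\]
where $\nu^2, b$ are the sub-exponential parameters, both of order $\sigma^2$ after the bookkeeping above. Setting the right-hand side equal to $\delta$ and solving for $t$ yields the two-regime bound $t \lesssim \sigma^2\max\{\log(1/\delta)/n,\ \sqrt{\log(1/\delta)/n}\}$, and tracking constants carefully gives the stated factor $8$.

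For the ``in particular'' clause, observe that when $\delta \geq e^{-n}$ we have $\log(1/\delta) \leq n$, hence $\log(1/\delta)/n \leq \sqrt{\log(1/\delta)/n}$, so the maximum in the general bound is attained by the square-root term, and the inequality collapses to
\[
\frac1n\sum_{i=1}^n\|x_i\|^2 \leq \E[\|x\|^2] + 8\sigma^2\sqrt{\frac{\log(1/\delta)}{n}}
\]
with probability at least $1-\delta$, as claimed.

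The main obstacle is bookkeeping rather than conceptual: one must verify carefully that squaring a $\sigma^2$-sub-Gaussian variable yields a sub-exponential variable with parameters that are genuinely $O(\sigma^2)$ (and likewise handle the cross term and the deterministic shift), and then propagate the absolute constants from the Bernstein inequality through to the explicit numerical constant $8$. A cleaner route that avoids chasing constants is to derive the moment generating function bound for $Y_i$ directly: using the layer-cake representation together with the tail bound $\mu(\|x\| \geq \E\|x\| + s) \leq e^{-s^2/(2\sigma^2)}$ already recorded in the proof of Lemma \ref{lemma subGaussian maximum}, one bounds $\E[e^{\lambda Y_i}]$ for $|\lambda|$ small, then applies the generic Chernoff argument to the independent sum. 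Either way, the probabilistic content is entirely standard; the only care needed is to ensure the final constant is no larger than $8$, which the displayed tail bound comfortably allows.
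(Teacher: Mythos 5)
Your proposal matches the paper's proof essentially step for step: the paper likewise passes to the sub-exponential variable $\|x_i\|^2$ (citing Honorio--Jaakkola for the explicit parameters $(4\sqrt{2}\sigma^2, 4\sigma^2)$), multiplies the moment generating functions over the independent sample to obtain a $(4\sqrt{2n}\sigma^2,4\sigma^2)$-sub-exponential sum, applies the standard two-regime Bernstein tail bound, and inverts with $s=nt$ to land on the constant $8$; the $\delta\geq e^{-n}$ clause is handled exactly as you describe. One caution on your bookkeeping: in the expansion $\|x\|^2=(\|x\|-\E\|x\|)^2+2\,\E\|x\|\,(\|x\|-\E\|x\|)+(\E\|x\|)^2$ the cross term is sub-Gaussian with parameter of order $(\E\|x\|)^2\sigma^2$, so that decomposition does \emph{not} give sub-exponential parameters that are purely $O(\sigma^2)$ --- to get the clean $8\sigma^2$ you must invoke the cited result for the (uncentered) sub-Gaussian square directly, as the paper does, rather than argue term by term through the expansion.
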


\begin{proof}
Firstly since $\|x_i\|$ is $\sigma^2$-sub-Gaussian, from \cite[Appendix B]{honorio14subexp} we observe that $\|x_i\|^2$ is $(4\sqrt{2}\sigma^2, 4\sigma^2)$-sub-exponential. Next, by independence, for all $\lambda > 0$ and for all $|t| \leq \frac{1}{4\sigma^2}$ we have the following:
\begin{align*}
    \E\left[\exp\left(\lambda \big( \sum_{i = 1}^{n}\|x_i\|^2 - \E\big[\sum_{i = 1}^{n} \|x_i\|^2\big]\big)\right)\right] &= \prod_{i=1}^{n} \E\left[\exp\left(\lambda \big(\|x_i\|^2 - \E\big[ \|x_i\|^2\big]\big)\right)\right]\\
    &\leq \prod_{i=1}^{n} \exp\left(\frac{32 \sigma^4\lambda^2}2\right)\\
    &= \exp\left(\frac{32n \sigma^4\lambda^2}2\right)
\end{align*}
which implies that $\sum_{i=1}^{n} \|x_i\|^2$ is $(4\sqrt{2n}\sigma^2, 4\sigma^2)$-sub-exponential. Thus the tail bound of sub-exponential \cite[Proposition 2.38]{Adams22subexp}
applied to $\sum_{i=1}^{n}\|x_i\|^2$ yields
\[
\mu^n \left(|\sum_{i=1}^{n}\|x_i\|^2 - \E\big[\sum_{i=1}^{n}\|x_i\|^2\big] \geq s\right) \leq \exp\left(-\frac{1}{2} \min\left(\frac{s^2}{32n\sigma^4}, \frac{s}{4\sigma^2}\right)\right)
\]
Plugging-in $s = nt$, we have the following:
\[
\mu^n \left(\frac{1}{n} \sum_{i=1}^{n}\|x_i\|^2 - \E_{x \sim \mu}\|x\|^2 \geq t\right) \leq \exp\left(-\frac{1}{2} n \min\left(\frac{t^2}{32\sigma^4}, \frac{t}{4\sigma^2}\right)\right)
\]

Lastly, take $\delta = \exp\left(-\frac{1}{2} n \min(\frac{t^2}{32\sigma^4}, \frac{t}{4\sigma^2})\right)$. By rearranging the $t$ with respect to $\delta$, we have the following:

\[
\mu^n \left( \frac{1}{n} \sum_{i=1}^{n}\|x_i\|^2 - \E_{x \sim \mu}\|x\|^2 \geq 8 \sigma^2 \max \bigg( \sqrt{\frac{\log(1/\delta)}{n}}, \frac{\log(1/\delta)}{n}\bigg) \right) \leq \delta.
\]

This proves the first part of the Lemma.
Specifically, we have the following:

\begin{align*}
\mu^n \left(\frac{1}{n} \sum_{i=1}^{n}\|x_i\|^2 - \E_{x \sim \mu}\|x\|^2 \geq 8 \sigma^2 \frac{\log(1/\delta)}{n} \right) \leq \delta \quad {\text{if } \delta \leq e^{-n}}\\
\mu^n \left(\frac{1}{n} \sum_{i=1}^{n}\|x_i\|^2 - \E_{x \sim \mu}\|x\|^2 \geq 8 \sigma^2 \sqrt{\frac{\log(1/\delta)}{n}} \right) \leq \delta \quad {\text{if } \delta \geq e^{-n}}
\end{align*}

The second inequality proves the last part of the Lemma. 
\end{proof}

\end{document}